%%
%% This is file `sample-sigconf.tex',
%% generated with the docstrip utility.
%%
%% The original source files were:
%%
%% samples.dtx  (with options: `sigconf')
%% 
%% IMPORTANT NOTICE:
%% 
%% For the copyright see the source file.
%% 
%% Any modified versions of this file must be renamed
%% with new filenames distinct from sample-sigconf.tex.
%% 
%% For distribution of the original source see the terms
%% for copying and modification in the file samples.dtx.
%% 
%% This generated file may be distributed as long as the
%% original source files, as listed above, are part of the
%% same distribution. (The sources need not necessarily be
%% in the same archive or directory.)
%%
%% The first command in your LaTeX source must be the \documentclass command.
\pdfoutput=1
\documentclass[sigconf]{acmart}

\usepackage{booktabs} % For formal tables
\usepackage{courier}
\usepackage{graphicx}
\usepackage{graphics}
\usepackage{url}
\usepackage{multirow}
\usepackage{color}
\usepackage{subfig}
\usepackage{balance}
\usepackage{amssymb}
\usepackage{amsmath}
\usepackage{bbm}
\usepackage{algorithmicx}
\usepackage{algorithm}
\usepackage{algpseudocode}
\usepackage{pgfplots}
\usepackage{bm}
\usepackage{cleveref}
\usepackage{hyperref}
\def\NoNumber#1{{\def\alglinenumber##1{}\State #1}\addtocounter{ALG@line}{-1}}

\crefformat{section}{\S#2#1#3} 
\crefformat{subsection}{\S#2#1#3}
\crefformat{subsubsection}{\S#2#1#3}
\newenvironment{customlegend}[1][]{%
	\begingroup
	% inits/clears the lists (which might be populated from previous
	% axes):
	\csname pgfplots@init@cleared@structures\endcsname
	\pgfplotsset{#1}%
}{%
% draws the legend:
\csname pgfplots@createlegend\endcsname
\endgroup
}%
% makes \addlegendimage available (typically only available within an
% axis environment):
\def\addlegendimage{\csname pgfplots@addlegendimage\endcsname}
\newcommand{\EFone}{\mathrm{EF1}}

\newcommand{\MMS}{\mathrm{MMS}}
\newcommand{\ALG}{\textsc{Alg}}

\DeclareMathOperator*{\argmax}{arg\,max}

\newcommand{\Mod}[1]{\ (\mathrm{mod}\ #1)}
\newtheorem{theorem}{Theorem}[section]
\newtheorem{lemma}{Lemma}[theorem]
\newtheorem{proposition}[theorem]{Proposition}

\usepackage{enumitem}
\newcommand{\subscript}[2]{$#1 _ #2$}

%% Specific for WWW'20
%%
%% \BibTeX command to typeset BibTeX logo in the docs
\AtBeginDocument{%
  \providecommand\BibTeX{{%
    \normalfont B\kern-0.5em{\scshape i\kern-0.25em b}\kern-0.8em\TeX}}}

%% Rights management information.  This information is sent to you
%% when you complete the rights form.  These commands have SAMPLE
%% values in them; it is your responsibility as an author to replace
%% the commands and values with those provided to you when you
%% complete the rights form.

\setcopyright{acmcopyright}
\setcopyright{iw3c2w3}

%% These commands are for a PROCEEDINGS abstract or paper.

\acmConference[WWW '20]{Proceedings of The Web Conference 2020}{April 20--24, 2020}{Taipei, Taiwan}
\acmBooktitle{Proceedings of The Web Conference 2020 (WWW '20), April 20--24, 2020, Taipei, Taiwan}
\acmPrice{}
\acmDOI{10.1145/3366423.3380196}
\acmISBN{978-1-4503-7023-3/20/04}
%% Update ISBN for Proceedings or Companion, can be found on completed rightsreview form

%%
%% Submission ID.
%% Use this when submitting an article to a sponsored event. You'll
%% receive a unique submission ID from the organizers
%% of the event, and this ID should be used as the parameter to this command.
%%\acmSubmissionID{123-A56-BU3}

%%
%% The majority of ACM publications use numbered citations and
%% references.  The command \citestyle{authoryear} switches to the
%% "author year" style.
%%
%% If you are preparing content for an event
%% sponsored by ACM SIGGRAPH, you must use the "author year" style of
%% citations and references.
%% Uncommenting
%% the next command will enable that style.
%%\citestyle{acmauthoryear}

%%
%% end of the preamble, start of the body of the document source.
\begin{document}

%%
%% The "title" command has an optional parameter,
%% allowing the author to define a "short title" to be used in page headers.
\title{FairRec: Two-Sided Fairness for Personalized Recommendations in Two-Sided Platforms}

%%
%% The "author" command and its associated commands are used to define
%% the authors and their affiliations.
%% Of note is the shared affiliation of the first two authors, and the
%% "authornote" and "authornotemark" commands
%% used to denote shared contribution to the research.
\author{Gourab K Patro}
\authornote{Authors contributed equally to this work.}
\affiliation{IIT Kharagpur, India}

\author{Arpita Biswas}
\authornotemark[1]
\affiliation{Indian Institute of Science, India}

\author{Niloy Ganguly}
\affiliation{IIT Kharagpur, India}

\author{Krishna P. Gummadi}
\affiliation{MPI-SWS, Germany}

\author{Abhijnan Chakraborty}
\affiliation{MPI-SWS, Germany}
%%
%% By default, the full list of authors will be used in the page
%% headers. Often, this list is too long, and will overlap
%% other information printed in the page headers. This command allows
%% the author to define a more concise list
%% of authors' names for this purpose.
\renewcommand{\shortauthors}{}

%%
%% The abstract is a short summary of the work to be presented in the
%% article.
\begin{abstract}
	We investigate the problem of fair recommendation in the context of two-sided online platforms, comprising customers on one side and producers on the other.  Traditionally, recommendation services in these platforms have focused on maximizing customer satisfaction by tailoring the results according to the personalized preferences of individual customers. 
	However, our investigation reveals that such customer-centric design may lead to unfair distribution of exposure among the producers, which may adversely impact their well-being.
	On the other hand, a producer-centric design might become unfair to the customers.
	Thus, we consider fairness issues that span both customers and producers. Our approach involves a novel mapping of the fair recommendation problem to a constrained version of the problem of fairly allocating indivisible goods. Our proposed {\em FairRec} algorithm guarantees at least Maximin Share ($\MMS$) of exposure for most of the producers and Envy-Free up to One item ($\EFone$) fairness for every customer.
	Extensive evaluations over multiple real-world datasets show the effectiveness of {\em FairRec} in ensuring two-sided fairness while incurring a marginal loss in the overall recommendation quality.
\end{abstract}
\begin{CCSXML}
	<ccs2012>
	<concept>
	<concept_id>10002951.10003317.10003347.10003350</concept_id>
	<concept_desc>Information systems~Recommender systems</concept_desc>
	<concept_significance>500</concept_significance>
	</concept>
	</ccs2012>
\end{CCSXML}

\ccsdesc[500]{Information systems~Recommender systems}

\keywords{Fair Recommendation, Two-Sided Markets, Fair Allocation, Maximin Share, Envy-Freeness}
%%
%% This command processes the author and affiliation and title
%% information and builds the first part of the formatted document.
\maketitle

%INTRODUCTION
\section{introduction}
\label{introduction}
Popular online platforms such as Netflix, Amazon, Yelp, Spotify, Google Local provide recommendation services to help their customers browse through the enormous product spaces.
By providing these services, the platforms control the interaction between the two stakeholders, namely (i) {\bf producers} of goods and services (e.g., movies on Netflix, products on Amazon, restaurants on Yelp, artists on Spotify) and  
(ii) {\bf customers} who consume them. 
These platforms have traditionally focused on maximizing customer satisfaction by tailoring the results according to the personalized preferences of individual customers, largely ignoring the interest of the producers.
Several recent studies have shown how such customer-centric designs may undermine the well-being of the producers \cite{abdollahpouri2019multi,burke2017multisided,edelman2017racial,graham2017digital,hannak2017bias}.
As more and more people are depending on two-sided platforms to earn a living,
recently platforms have started showing interest in creating fair marketplaces for all the stake holders due to multiple reasons:
{\bf (i) legal obligation} (e.g., labor bill for the welfare of drivers on Uber and Lyft~\cite{nyt2019uber}, fair marketplace laws for e-commerce~\cite{fair_marketplace}),
{\bf (ii) social responsibility or voluntary commitment} (e.g., equality of opportunity to all gender groups in LinkedIn~\cite{geyik2019fairranking}, commitment of non-discrimination to hosts and guests by AirBnb~\cite{airbnb_commitment}),
{\bf (iii) business requirement/model} (e.g., minimum business guarantee by AirBnb to attract hosts~\cite{airbnb_min_guarantee}).

In this paper, our focus is on the fairness of personalized recommendation services deployed on the two-sided platforms.
Traditionally, platforms employ various state-of-the-art data-driven methods (e.g., neighborhood-based methods~\cite{ning2015comprehensive}, latent factorization methods~\cite{liang2016factorization,koren2009matrix}, etc.) to estimate the relevance scores of every product-customer pairs, and then recommend $k$ most relevant products to the corresponding customers. 
While such top-$k$ recommendations achieve high customer utility, our investigation on real-world datasets reveals that they can create a huge disparity in the exposure of the producers (detailed in \cref{observations}), which is unfair for the producers, and may also hurt the platforms in the long term.

In these platforms, \textit{exposure often determines the economic opportunities (revenues) for the producers} who depend on it for their livelihood.
For instance, high exposure on {\em Google Maps} can increase the footfall in a local business, thereby increasing their revenue. High exposure on {\em YouTube, Spotify or Last.fm} can increase the traffic to a content producer's channel, and hence help them earn better platform-royalties or advertisement revenues.
On the other hand, if only a few producers get most of the exposure, then the other producers would struggle on the platform, which will force them to either quit or switch to other platforms~\cite{leaving_yelp,leaving_amz,leaving_uber}. 
This, in turn, may limit the choices for the customers, degrading the overall experience on the platform.
Thus, it is important to reduce exposure inequalities. 
Naive ways of reducing inequality (e.g., producer-centric poorest-$k$: selecting $k$ least exposed producers) may result in loss and disparity in customer utilities (\cref{observations}), making it inefficient as well as unfair to the customers.
We postulate that while being fair to the producers, the platforms should also attempt to fairly distribute the loss in utility among all the customers. 

Considering this, we tackle the challenging task of ensuring two-sided fairness while giving personalized recommendations. 
Motivated by a vast literature in social choice theory, we map this problem to the problem of fairly allocating indivisible goods (\cref{mapping}). However, due to various constraints pertaining to the requirements of recommender systems, 
the problem becomes an interesting extension to the existing fair allocation problem---find an allocation that guarantees minimum exposure (upper bounded by maximin share of exposure or MMS) for the producers, and envy-free up to one item (EF1)~\cite{budish2011combinatorial} %guarantee 
for the customers (proofs in \cref{sec:theorems}). The $\MMS$ guarantee ensures that each agent receives a value which is at least their \emph{maximin share} threshold, defined in \cref{eq:mms}; whereas, $\EFone$ ensures that every agent values her allocation at least as much as any other agent's allocation after (hypothetically) removing the most valuable item from the other agent's allocated bundle.\\

\noindent {\bf Contributions:}
(i) we consider two-sided notions of fairness which not only relate to social or judicial precepts but also to the long-term sustainability of two-sided platforms (\cref{motivation});
(ii) we design an algorithm, {\it FairRec} (\cref{algorithm}), exhibiting the desired two-sided fairness by mapping the fair recommendation problem to a fair allocation problem (\cref{mapping}); 
moreover, it is agnostic to the specifics of the data-driven model (that estimates the product-customer relevance scores) which makes it more scalable and easy to adapt;
(iii) in addition to the theoretical guarantees (\cref{sec:theorems}), 
extensive experimentation and evaluation over multiple real-world datasets deliver strong empirical evidence %to support our claim. 
on the effectiveness of our proposal~(\cref{experiments}).
%\vspace{-2mm} 
%RELATED WORKS
\section{Background and Related work}
\label{related}
We briefly survey related works in two directions: (i) fairness in multi-stakeholder platforms, and (ii) fair allocation of goods.

\noindent{\bf Fairness in Two-Sided Platforms: }
With the increasing popularity of multi-sided platforms, recently multiple researchers have looked into the issues of unfairness and biases in such platforms.
For example, \citet{edelman2017racial} investigated the possibility of racial bias in guest acceptance by Airbnb hosts, 
~\citet{lambrecht2019algorithmic} studied gender-based discrimination in career ads. While these works deal with {\it group fairness}, ~\citet{serbos2017fairness} proposed an {\it envy free} tour package recommendations on travel booking sites, ensuring {\it individual fairness for customers}. 

On producer fairness, \citet{hannak2017bias} studied racial and gender bias in freelance marketplaces.
In a social experiment, \citet{Salganik854} found that the existing popular producers often acquire most of the visibility while new but good ones starve for visibility. 
\citet{biega2018equity} considered individual producer fairness in ranking in gig-economy platforms.
\citet{kamishima2014correcting} and \citet{abdollahpouri2017controlling} reduced popularity bias among producers while \citet{patro2020incremental} addressed fairness issues arising due to frequent updates of platforms.
However, these papers did not study the trade-off between producer and customer fairness, and the cost of achieving one over the other. 

A few papers have considered group-fairness among the producers and customers. Abdollahpouri et al.~\cite{abdollahpouri2019multi} and \citet{burke2017multisided} categorized different types of multi-stakeholder platforms and their desired group fairness properties, \citet{chakraborty2017fair} and \citet{suhr2019two} presented mechanisms for two-sided fairness in matching problems.
In contrast, our paper addresses individual fairness for both producers and customers, which also answers the question of the long-term sustainability of two-sided platforms.
%~\citet{surer2018multistakeholder} used diversity constraints for customers while recommending.

\noindent {\bf Fair Allocation of Goods:}
The problem of fair allocation (popularly known as the cake-cutting problem) has been studied extensively in the area of computational social choice theory. The classical notions of fairness for this problem are envy-freeness (EF)~\cite{foley1967resource,varian1974equity,stromquist1980cut} and proportional fair share (PFS)~\cite{steinhaus1948problem}. 
%However, the growing body of 
Recent literature on practical applications of fair allocation~\cite{brandt2016handbook,endriss2017trends} have focused on the problem of allocating \textit{indivisible} goods %for scenarios such as,
in budgeted course allocation~\cite{budish2011combinatorial}, balanced graph partition~\cite{fair-graph}, or allocation of cardinality constrained group of resources~\cite{biswas2018fair}. In such instances, no feasible allocation may satisfy EF or PFS fairness guarantees. Thus, the notable work of Budish~\cite{budish2011combinatorial} defined analogous fairness notions which are appropriate for \textit{indivisible} goods---namely, envy-freeness up to one good ($\EFone$) and maximin share guarantee ($\MMS$). A rich literature has focused on providing existence and algorithmic guarantees for computing fair allocations~\cite{amanatidis2018comparing,barman2018groupwise,amanatidis2015approximation,procaccia2014fair,kurokawa2016can,bouveret2014characterizing,caragiannis2016unreasonable,biswas2019matroid,bilo2018almost}.
In this work, we map the problem of fair recommendation to a fair allocation problem, which leads to an interesting extension of previously studied problems owing to the specific constraints pertaining to recommendations (detailed in~\cref{mapping}).
%NOTONS (PRELIMINARIES)
\section{Preliminaries}
\label{notions}
Next, we define the terminology and notations used in the paper.

\subsection{Producers and Products}
In platforms like Google Maps and Yelp, a producer typically owns one product (restaurant or shop); whereas in multimedia platforms like Spotify, YouTube, and Netflix, the songs/videos generated by the artists are the products, and one producer can list many products; same is true for ecommerce platforms such as Amazon and Flipkart.
To generalize all such two-sided platforms, we consider products and producers to be equivalent, and use the terms `product' and `producer' interchangeably.
Even for platforms where a producer can have multiple products, ensuring fairness at the product level would also ensure fairness for individual producers; here, fairness can be ensured by making the exposure proportional to the producer's portfolio size.

\subsection{Notations}
\label{notations}
Let $U$ and $P$ be the sets of customers and producers respectively, where $|U|=m$, and $|P|=n$.
Let $k$ be the number of products to be recommended to every customer. $R_u\subset P$ represents the set of $k$ products recommended to customer $u$; $|R_u|=k$.

\subsection{Relevance of Products}
\label{relevance}
The \textit{relevance} of a product $p$ to customer $u$, denoted as $V_u(p)$, represents the likelihood that $u$ would like the product $p$.
Formally, relevance is a function from the set of customers and products to the real numbers $V:~ U \times P \rightarrow \mathbb{R}$.
Usually, the relevance scores are predicted using various data-driven methods, and $V_u(p)$ is a proxy for the utility gained by $u$ if product $p$ is recommended to her.

\subsection{Customer Utility}
\label{customer_utility}
The utility of a recommendation $R_u$ to a customer $u$ is proportional to the sum of relevance scores of products in $R_u$. %; $\phi_u(R_u)\propto\sum_{p\in R_u} V(u,p)$.
Thus, recommending the $k$ most relevant products will give the maximum possible utility.
Let $R_u^*$ be the set of top-$k$ relevant products for $u$.
We use a normalized form of customer utility from $R_u$, defined as: $\phi^\text{}_u(R_u)=\frac{\sum_{p\in R_u} V_u(p)}{\sum_{p\in R_u^*} V_u(p)}$.

\subsection{Producer Exposure}
\label{exposure}
Exposure of a producer/product $p$ is the total amount of attention that $p$ receives from all the customers to whom $p$ has been recommended. In this paper, we assume a uniform attention model\footnote{This being the first work on two-sided-fair-recommendation posed as a fair-allocation problem, we focused on a basic setting without \textit{position bias}~\cite{agarwal2019estimating}, where customers pay more attention to the top ranked products than the lower ranked ones.} where customers pay similar attention to all $k$ recommended products, and express the exposure of a product $p$ %will be
as $E_p = \sum_{u\in U}\mathbbm{1}_{R_u}(p)$, where $\mathbbm{1}_{R_u}(p)$ is $1$ if $p\in R_u$, and $0$ otherwise. The sum of exposures of all the products is $\sum_{p\in P}E_p=m\times k$.
%MOTIVATION
\begin{figure}[h]
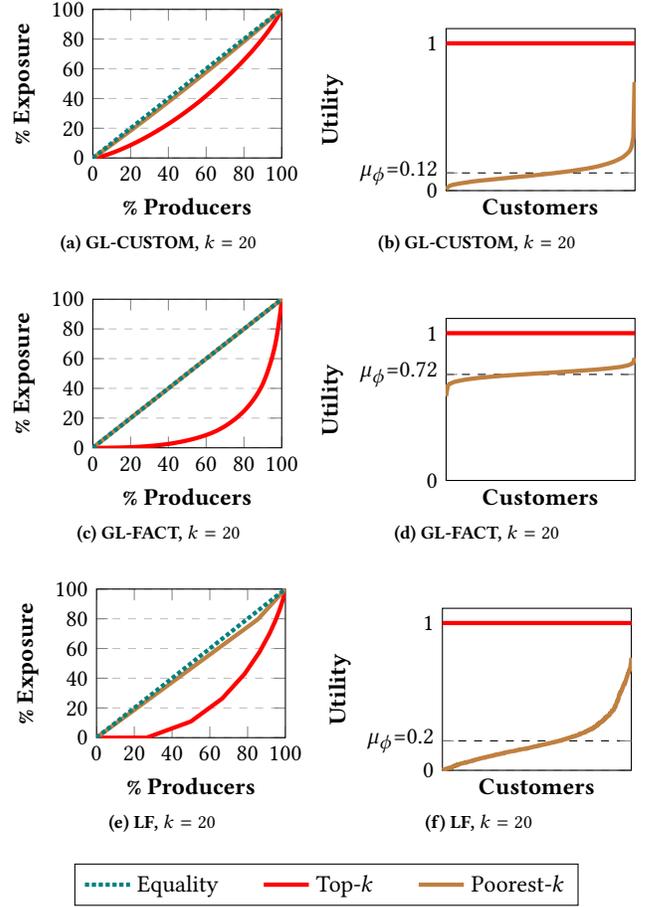

	\center
	{
		\subfloat[{GL-CUSTOM, $k=20$}]{\pgfplotsset{height=0.2\textwidth,width=0.23\textwidth,compat=1.9}\input{lorentz_gl1_k_20}\label{fig:lorentz_gl1_k_20}}
		\hfil
		\subfloat[{GL-CUSTOM, $k=20$}]{\pgfplotsset{height=0.21\textwidth,width=0.23\textwidth,compat=1.9}%\begin{figure}[t!]
	%\center{
	%\subfloat{
	%\pgfplotsset{width=0.33\textwidth,compat=1.9}
	\begin{tikzpicture}
	\begin{axis}[
	xlabel={\bf Customers},
	ylabel={\bf Utility},
	xmin=0, xmax=11200,
	ymin=0, ymax=1.1,
	xmajorticks=false,
	ytick={0,1},
	%legend pos=north west,
	%legend style={at={(0.5,1.3)},anchor=south},
	ymajorgrids=true,
	grid style=dashed,
	extra y ticks = 0.12,
	extra y tick labels={$\mu_\phi$=0.12},
	extra y tick style={grid=major,major grid style={draw=black}}
	%ylabel style={rotate=-90},
	%ymode=log,
	%log basis y={10}
	]
	%Baseline1: poorest-k
	\addplot[
	color=brown,
	mark=.,
	ultra thick
	]
	coordinates {(1, 0.00699065359215553)
		(100, 0.02365370236665052)
		(200, 0.031348768738631336)
		(300, 0.03619860392667935)
		(400, 0.03950164819959321)
		(500, 0.041573692556280704)
		(600, 0.04390045303731797)
		(700, 0.04615362876109925)
		(800, 0.04812161241613646)
		(900, 0.05009212829717307)
		(1000, 0.051752334563724474)
		(1100, 0.053477126237112545)
		(1200, 0.05529541678341434)
		(1300, 0.056988929195536144)
		(1400, 0.05901961204395505)
		(1500, 0.06060044664229926)
		(1600, 0.06204749250399109)
		(1700, 0.06359172844768123)
		(1800, 0.06504083928020231)
		(1900, 0.06613356793327167)
		(2000, 0.06731830230253162)
		(2100, 0.06860689824762949)
		(2200, 0.06966383893722658)
		(2300, 0.07076253181252172)
		(2400, 0.0717344013980872)
		(2500, 0.07293413911053867)
		(2600, 0.07417606947175977)
		(2700, 0.07540713214240541)
		(2800, 0.07642954141634514)
		(2900, 0.07767972912834209)
		(3000, 0.07884546193305346)
		(3100, 0.07987688557993174)
		(3200, 0.08088044756897744)
		(3300, 0.08229948825684377)
		(3400, 0.0833526959526827)
		(3500, 0.08443160699086145)
		(3600, 0.08572762689329652)
		(3700, 0.08686341734269769)
		(3800, 0.08795557892902887)
		(3900, 0.0891006623366116)
		(4000, 0.09035847973023273)
		(4100, 0.09146626270716267)
		(4200, 0.09245714199497383)
		(4300, 0.09345229837275704)
		(4400, 0.09448371544157234)
		(4500, 0.09559456490461134)
		(4600, 0.09656959749637857)
		(4700, 0.09751185972734738)
		(4800, 0.09884379413792725)
		(4900, 0.09994730468359347)
		(5000, 0.1010307635518633)
		(5100, 0.10220248467464244)
		(5200, 0.10342429893791187)
		(5300, 0.10480250092087)
		(5400, 0.10616748951150971)
		(5500, 0.10716027138760532)
		(5600, 0.10827743363312631)
		(5700, 0.10955700355594936)
		(5800, 0.1109090930350941)
		(5900, 0.11218959841629578)
		(6000, 0.11341775591879968)
		(6100, 0.11470861975160765)
		(6200, 0.11589924843111897)
		(6300, 0.11723006313829475)
		(6400, 0.11852740653199403)
		(6500, 0.12016544829019615)
		(6600, 0.12149304232960145)
		(6700, 0.12283909819326115)
		(6800, 0.12429390672985872)
		(6900, 0.12558531471657983)
		(7000, 0.12690564264291138)
		(7100, 0.12840696462203885)
		(7200, 0.12996018978114077)
		(7300, 0.13133786556613797)
		(7400, 0.13298930237494502)
		(7500, 0.1346216339046593)
		(7600, 0.13642187063003966)
		(7700, 0.13848280226519952)
		(7800, 0.14018172678138896)
		(7900, 0.1417735724526403)
		(8000, 0.14371495247192462)
		(8100, 0.1454521268664993)
		(8200, 0.14717273263426925)
		(8300, 0.14902173268013225)
		(8400, 0.150956146114409)
		(8500, 0.1530188505464684)
		(8600, 0.1550672813447508)
		(8700, 0.15754707613257987)
		(8800, 0.1598060096372817)
		(8900, 0.16216906538095324)
		(9000, 0.16498608944512017)
		(9100, 0.16779901447077858)
		(9200, 0.17066847583208763)
		(9300, 0.17389895835854413)
		(9400, 0.17693645978554434)
		(9500, 0.18039728874760783)
		(9600, 0.18357315911803543)
		(9700, 0.18732508290360858)
		(9800, 0.191421666233705)
		(9900, 0.1953273035403226)
		(10000, 0.1996465524211722)
		(10100, 0.20521411572606743)
		(10200, 0.21024773644897818)
		(10300, 0.21591750909129537)
		(10400, 0.22191118034923818)
		(10500, 0.22849917082213206)
		(10600, 0.23797179386817444)
		(10700, 0.2477927425551374)
		(10800, 0.25837835453185265)
		(10900, 0.27209516519131144)
		(11000, 0.29960532979963694)
		(11100, 0.3626376975508132)
		(11172, 0.7371606409311067)
		
		};
	%\addlegendentry{}
	\addplot[red,domain=0:11200, ultra thick, unbounded coords=discard] plot (x,1);

	\end{axis}
	\end{tikzpicture}
%}\caption{Lorentz Curve for Producer Exposure (GL-2, $\pmb{k=20}$)}\label{fig:lorentz_gl2_top_k_20}}
%\end{figure}\label{fig:cdf_gl1_k_20}}
		\vfil
		\subfloat[{GL-FACT, $k=20$}]{\pgfplotsset{height=0.2\textwidth,width=0.23\textwidth,compat=1.9}\input{lorentz_gl2_k_20}\label{fig:lorentz_gl2_k_20}}
		\hfil
		\subfloat[{GL-FACT, $k=20$}]{\pgfplotsset{height=0.21\textwidth,width=0.23\textwidth,compat=1.9}%\begin{figure}[t!]
	%\center{
	%\subfloat{
	%\pgfplotsset{width=0.33\textwidth,compat=1.9}
	\begin{tikzpicture}
	\begin{axis}[
	xlabel={\bf Customers},
	ylabel={\bf Utility},
	xmin=0, xmax=11200,
	ymin=0, ymax=1.1,
	xmajorticks=false,
	ytick={0,1},
	%legend pos=north west,
	%legend style={at={(0.5,1.3)},anchor=south},
	ymajorgrids=true,
	grid style=dashed,
	extra y ticks = 0.72,
	extra y tick labels={$\mu_\phi$=0.72},
	extra y tick style={grid=major,major grid style={draw=black}}
	%ylabel style={rotate=-90},
	%ymode=log,
	%log basis y={10}
	]
	%Baseline1: poorest-k
	\addplot[
	color=brown,
	mark=.,
	ultra thick
	]
	coordinates {(1, 0.5716463485377943)
		(100, 0.6446243693198144)
		(200, 0.655499052808033)
		(300, 0.6601673882154061)
		(400, 0.6652789351886351)
		(500, 0.6693571439761095)
		(600, 0.6724214041157962)
		(700, 0.6747060032255529)
		(800, 0.677469612837208)
		(900, 0.6795795454409838)
		(1000, 0.6814649391243266)
		(1100, 0.6837589922266403)
		(1200, 0.6858250797252692)
		(1300, 0.6874719063023633)
		(1400, 0.6889676190940265)
		(1500, 0.690147056449289)
		(1600, 0.6915708439852754)
		(1700, 0.6931073156795856)
		(1800, 0.6946451454363173)
		(1900, 0.6960896590158745)
		(2000, 0.6971280829337911)
		(2100, 0.6982165309237534)
		(2200, 0.699424424405482)
		(2300, 0.7004025430687688)
		(2400, 0.7012830315368936)
		(2500, 0.70254706547989)
		(2600, 0.7035621110382448)
		(2700, 0.7046701549671601)
		(2800, 0.7055760527563212)
		(2900, 0.7065322947150235)
		(3000, 0.707570536382276)
		(3100, 0.7084594755591309)
		(3200, 0.7093553896617959)
		(3300, 0.7103787313385965)
		(3400, 0.7112055859792998)
		(3500, 0.7119662035377213)
		(3600, 0.7129465276921146)
		(3700, 0.7138434376321117)
		(3800, 0.7145815622356135)
		(3900, 0.7153442274875497)
		(4000, 0.7160408360680195)
		(4100, 0.7168055496701026)
		(4200, 0.7176722237282713)
		(4300, 0.7183466583175274)
		(4400, 0.719157770112441)
		(4500, 0.7198950164035123)
		(4600, 0.7206726549586847)
		(4700, 0.72133014808704)
		(4800, 0.72210951049682)
		(4900, 0.7227994920685586)
		(5000, 0.7235509914485821)
		(5100, 0.7243981383248203)
		(5200, 0.7252033937029162)
		(5300, 0.7259363733817149)
		(5400, 0.7266592192290229)
		(5500, 0.7273789754358204)
		(5600, 0.7281339831453499)
		(5700, 0.7286781003635237)
		(5800, 0.7293983193194429)
		(5900, 0.7300166226937065)
		(6000, 0.730609187680674)
		(6100, 0.731333213526066)
		(6200, 0.7319697879809575)
		(6300, 0.7326462766307079)
		(6400, 0.7333677831642385)
		(6500, 0.7339745688487541)
		(6600, 0.7348224881759652)
		(6700, 0.7355345045680579)
		(6800, 0.7363676134409529)
		(6900, 0.737139445867156)
		(7000, 0.7378335997538731)
		(7100, 0.738601983657842)
		(7200, 0.7393081766675749)
		(7300, 0.740024755622166)
		(7400, 0.7407470807547856)
		(7500, 0.7415195720802534)
		(7600, 0.7422808560748914)
		(7700, 0.7430373421536933)
		(7800, 0.7439051922792225)
		(7900, 0.7446188185291827)
		(8000, 0.7454877728452013)
		(8100, 0.746376286179203)
		(8200, 0.7472787660961365)
		(8300, 0.7480514471768398)
		(8400, 0.7489268354317942)
		(8500, 0.7498919667364402)
		(8600, 0.750571561567856)
		(8700, 0.7513618645265177)
		(8800, 0.7524264707683403)
		(8900, 0.7533619603009847)
		(9000, 0.7542981654846534)
		(9100, 0.75530105928607)
		(9200, 0.7562488833827367)
		(9300, 0.7572774598085809)
		(9400, 0.7583831153701196)
		(9500, 0.7596775369720257)
		(9600, 0.760745001280151)
		(9700, 0.7620136582966036)
		(9800, 0.7632706545234457)
		(9900, 0.7646907273042226)
		(10000, 0.7662562807250712)
		(10100, 0.7676465119058667)
		(10200, 0.7689764068798788)
		(10300, 0.7705573208544113)
		(10400, 0.7723575246359348)
		(10500, 0.7745546034077689)
		(10600, 0.7771143534136962)
		(10700, 0.7794858657017216)
		(10800, 0.7826237894259)
		(10900, 0.78672657044187)
		(11000, 0.7922328937996642)
		(11100, 0.800787978256486)
		(11172, 0.8323389247193983)
		
		};
	%\addlegendentry{}
	\addplot[red,domain=0:11200, ultra thick, unbounded coords=discard] plot (x,1);

	\end{axis}
	\end{tikzpicture}
%}\caption{Lorentz Curve for Producer Exposure (GL-2, $\pmb{k=20}$)}\label{fig:lorentz_gl2_top_k_20}}
%\end{figure}\label{fig:cdf_gl2_k_20}}
		\vfil
		\subfloat[{LF, $k=20$}]{\pgfplotsset{height=0.2\textwidth,width=0.23\textwidth,compat=1.9}\input{lorentz_lf_k_20}\label{fig:lorentz_lf_k_20}}
		\hfil
		\subfloat[{LF, $k=20$}]{\pgfplotsset{height=0.21\textwidth,width=0.23\textwidth,compat=1.9}\input{cdf_lf_k_20}\label{fig:cdf_lf_k_20}}
		\vfil
		\subfloat{\pgfplotsset{width=.5\textwidth,compat=1.9}
			\begin{tikzpicture}
			\begin{customlegend}[legend entries={{Equality},{Top-$k$},{Poorest-$k$}},legend columns=3,legend style={/tikz/every even column/.append style={column sep=0.5cm}}]
			\addlegendimage{teal,ultra thick, densely dotted,sharp plot}
			\addlegendimage{red,mark=.,ultra thick,sharp plot}
			\addlegendimage{brown,mark=.,ultra thick,sharp plot}
			\end{customlegend}
			\end{tikzpicture}}
	}
	
	\caption{Lorenz curves show high inequality among producer exposures with the top-k recommendation. While poorest-k provides almost equal exposures, it introduces huge loss and disparity in individual customer utilities.}
	\label{fig:lorenz_n_cdf}
	
\end{figure}
\section{Need for two-sided fairness in personalized recommendations}
\label{motivation}
Traditionally, the goal of personalized recommendation has been to recommend products that would be most relevant to a customer.
This task typically requires learning the relevance scoring functions ($V$), and several state-of-the-art data-driven methods~\cite{ning2015comprehensive,liang2016factorization} have been developed to estimate the product-customer relevance values.
Once these values are obtained, the standard practice, across several recommender systems, is to recommend the top-$k$ ($k$=size of recommendation) relevant products to corresponding customers.
While the above approach is followed to maximize the satisfaction of individual customers, it can adversely affect the producers in a two-sided platform, as we explore next.

\subsection{Datasets}
\label{dataset}
We consider the impact of customer-centric top-$k$ recommendations on producer exposures using real-world datasets.
We use a state-of-the-art relevance scoring model (a very widely used latent factorization method~\cite{liang2016factorization}) and also a dataset-specific custom relevance scoring model over the datasets.
\subsubsection{\bf Google Local Ratings Dataset (GL)}~\\
Google Local is a service to find nearby places on Google Maps (as Google Nearby feature) platform.
We use the Google Local dataset released by \citet{he2017translation}, which contains data about customers, local businesses (producers), and their locations (geographic coordinates), ratings, etc.
We consider the active customers located in New York City and the business entities within $5$ miles radius of Manhattan area with at least $10$ reviews. The resulting dataset contains $11172$ customers, $855$ businesses and $25686$ reviews.
We consider the following two relevance scoring functions ($V$).\\
\noindent \textbf{\textit{A. GL-CUSTOM:}}
We use a custom relevance scoring function: $V(u,p)=\frac{rating(p)}{distance(u,p)}$, where $rating(p)$ is the average rating of the producer (local business) and $distance(u,p)$ is the distance between customer $u$ and producer $p$.\\
\noindent \textbf{\textit{B. GL-FACT:}}
Here we use the state-of-the-art latent factorization model~\cite{liang2016factorization,koren2009matrix} to predict the relevance scores from the ratings.\\

\subsubsection{\bf Last.fm Dataset (LF)}~\\
We use the {\tt Last.fm} dataset released by \citet{Cantador:RecSys2011}, which contains $1892$ customers, $17632$ artists (producers), and $92834$ records of play counts (the number of times a customer has played songs from an artist).
We again use a latent factorization model~\cite{liang2016factorization,koren2009matrix} to find out the relevance scores from the play counts. 
\subsection{Adverse Impact of Customer-Centric Recommendation}
\label{observations}
We simulate top-$k$ ($k=20$) recommendation on all three datasets, % (as listed in \cref{dataset}) 
and calculate the exposure different producers get.
Figures-\ref{fig:lorentz_gl1_k_20},\ref{fig:lorentz_gl2_k_20},\ref{fig:lorentz_lf_k_20} are the Lorenz curves for producer exposures.
In Exposure Lorenz curves, the cumulative fraction of total exposure is plotted against the cumulative fraction of the number of corresponding producers (ranked in increasing order of their exposures). The extent to which the curve goes below a straight diagonal line (or an equality mark) indicates the degree of inequality in the exposure distribution.
We observe that the Lorenz curves for top-$k$ recommendations are far below the equal exposure marks, revealing that for conventional top-$k$ recommendation, $50$\% least exposed producers get 
only $32$\%, $5$\%, and $11$\% of total available exposure ($m\cdot k$) in {\em GL-CUSTOM}, {\em GL-FACT}, and {\em LF} datasets, respectively.

\noindent {\bf Huge disparity is in nobody's interest: } In two-sided platforms, the exposure determines the economic opportunities.
Thus, low exposure on a platform often puts many producers at huge losses, forcing them to leave the platform;
this may result in fewer available choices for the customers, thereby degrading the overall quality of the platform.
Thus, highly skewed exposure distribution of the customer-centric top-$k$ recommendation not only makes it unfair to the producers but also questions the long-term sustainability of the platforms. 
Thus, there is a need to be fair to the producers while designing recommender systems.

\subsection{Pitfalls of Naive Solution}
A naive approach to reduce inequality in producer exposures is to implement a producer-centric recommendation (poorest-$k$):
recommend the least-$k$ exposed products to the customer at any instant. Such producer-centric scheme makes the exposure of all the producers nearly equal, as seen in figures-\ref{fig:lorentz_gl1_k_20},\ref{fig:lorentz_gl2_k_20},\ref{fig:lorentz_lf_k_20}: 
Lorenz curves for poorest-$k$ recommendations are closer to the diagonal than those of top-$k$ recommendations. 
However, such poorest-$k$ recommendation decreases overall customer utilities (as seen in figures-\ref{fig:cdf_gl1_k_20},\ref{fig:cdf_gl2_k_20},\ref{fig:cdf_lf_k_20}).
Moreover, the poorest-$k$ introduces disparity among individual customer utilities, where some customers may suffer much higher loss than other customers, making it unfair to them. 
\subsection{Desiderata of Fair Recommendation}
\label{fairness_properties}
Intuitively, we need the following fairness properties to be satisfied by the recommendation to be fair to both producers and customers.\\

\noindent \textbf{A. Producer Fairness: }
Mandating a uniform exposure distribution over the producers can be too harsh on the system;
it may heavily hamper the quality of the recommendation, and might also kill the existing competition by discouraging the producers from improving the quality of their products or services.
Instead, we propose to ensure a minimum exposure guarantee for every producer such that no producer starves for exposure. 
The proposal is comparable to the fairness of {\it minimum wage guarantee} (e.g., as required by multiple legislations in the US, starting from Fair Labor Standards Act 1938 to Fair Minimum Wage Act 2007)~\cite{pollin2008measure,green2010minimum,falk2006fairness}).
Ensuring minimum wage does not itself guarantee equality of income;
however it has been found to decrease income inequality~\cite{lin2016effects,engbom2018earnings}.
The exact value of the minimum exposure guarantee ({\small $\overline{E}$}) can be decided by the respective platforms.\\

\noindent \textbf{B. Customer Fairness: }
As maintaining producer fairness can cause an overall loss in customer utility;
that loss should be fairly distributed among the customers.
To ensure this, products need to be recommended in a way such that no customer can gain extra utility by exchanging her set of recommended products with another customer (a property called {\it envy-freeness}, as detailed next).
%MAPPING
\section{Re-imagining Fair Recommendation as Fair Allocation}
\label{mapping}
Given a set of items (say, $\mathcal{P}$), a set of agents (say, $\mathcal{U}$), and valuations $\mathcal{V}$ (how much an agent values an item),
the \textbf{\textit{fair allocation problem}} aims at distributing the items \textit{fairly} among the agents. In the discrete version of this problem, the items are \textit{discrete} (no item can be broken into pieces) and \textit{non-shareable} (no item can be allocated to multiple agents).
If $\mathcal{P}$ contains several copies of the same item, %in which case, 
each copy can be thought of as non-shareable and discrete.
The goal is to find %such 
a non-shareable and discrete allocation ($\mathcal{A}:=\{(A_u)_{u\in \mathcal{U}}: A_u\subseteq \mathcal{P}\}$) while ensuring 
fairness properties.

\subsection{Notions of Fairness in Allocation}
The classical fairness notions, such as envy-freeness\footnote{An allocation is said to satisfy \textit{envy-freeness} if the bundle of items allocated to each agent is as valuable to her as the bundle allocated to any other agent~\cite{foley1967resource,varian1974equity,stromquist1980cut}.} (EF) and proportional-fair-share\footnote{An allocation is said to satisfy \textit{proportional-fair-share} if each agent receives a bundle of value at least $1/|\mathcal{U}|^{th}$ of her total value for all the items~\cite{steinhaus1948problem}.} (PFS), may not be achievable in most instances of the problem.
For example, if there are two agents and one item, the item will be allocated to one of the agents, and the zero allocation to the other agent would violate both EF and PFS.
Thus, for \textit{discrete} items, relaxations of EF and PFS have been considered.
Two such well-studied notions of fairness in the discrete fair allocation literature are (i)~\emph{envy freeness up to one item} ($\EFone$) and (ii)~the \textit{maximin share guarantee} ($\MMS$), defined by~\citet{budish2011combinatorial}. Since then, these have been extensively studied in various settings for providing existential and algorithmic guarantees~\cite{amanatidis2018comparing,barman2018groupwise,brandt2016handbook,amanatidis2015approximation,procaccia2014fair,kurokawa2016can,bouveret2014characterizing,caragiannis2016unreasonable,biswas2018fair,biswas2019matroid,fair-graph,bilo2018almost}.
We now formally state these fairness notions:
\begin{itemize}[leftmargin=*]
	\item An allocation $\mathcal{A}$ is $\EFone$ iff for every pair of agents $u , w \in \mathcal{U}$ there exists an item $p \in A_w$ such that $\mathcal{V}_u(A_u) \geq \mathcal{V}_u(A_w\setminus\{p\})$.
	
	\item An allocation is said to satisfy $\MMS$ if each agent receives a value greater than or equal to their \emph{maximin share} threshold. This threshold for an agent $u$ is defined as
	\begin{equation}
		\MMS_u= \max_{\mathcal{A}}\  \min_{w\in \mathcal{U}}\ \mathcal{V}_u(A_w).\label{eq:mms}
	\end{equation}
	In other words, $\MMS_u$ is the maximum value that the agent can guarantee for herself if she were to allocate $\mathcal{P}$ into $|\mathcal{U}|$ bundles and then, from those bundles, receive the minimum valued one.
	%; here, $n$ is the total number of agents. 
	Formally, an allocation $\mathcal{A}$ satisfy $\MMS$ fairness iff for all agents $u \in \mathcal{U}$, we have $\mathcal{V}_u(A_u) \geq \MMS_u$.
\end{itemize}

\subsection{Fair Recommendation to Fair Allocation}
We propose to see the problem of fair recommendation as a {\em fair allocation} problem.
The set of products $P$ can be thought of as the set of items $\mathcal{P}$ (there can be multiple copies of individual products);
similarly the set of customers $U$ as the set of agents $\mathcal{U}$, and the relevance scoring function $V$ as the valuations $\mathcal{V}$;
now the task of recommending products to customers is the same as allocating items in $\mathcal{P}$ to agents in $\mathcal{U}$ with certain constraints.
\begin{itemize} [leftmargin=*]
	\item {\bf Setting $\mathcal{P}$ for Producer Fairness:}
	As the total exposure of the platform remains limited ($k\cdot |U|$), the maximum guarantee on minimum possible exposure for the producers is $\left\lfloor\frac{k\cdot |U|}{|P|}\right\rfloor$ (this refers to the $\mathrm{MMS}$ value for the producers).
	Thus, based on desired exposure guarantee $\overline{E}$ ($\overline{E}\leq \mathrm{MMS}$), we can make $\overline{E}$ copies of each product (for producer fairness as in \cref{fairness_properties}) to construct $\mathcal{P}$.
	\item {\bf Fair Allocation of $\mathcal{P}$ among $\mathcal{U}$:}
	Once $\mathcal{P}$ is set according to desired producer fairness, the entire task of fair recommendation boils down to the allocation of $\mathcal{P}$ among $\mathcal{U}$ while ensuring $\EFone$ for agents/customers (for customer fairness, \cref{fairness_properties}).
\end{itemize}

\subsection{Extending the Conventional Fair Allocation Problem}
Traditionally, \textit{fair allocation} literature aims at defining and ensuring \textit{fairness} among the agents while allocating all the items belonging to the set $\mathcal{P}$. However, in the {\it fair recommendation} problem, along with customer fairness, the challenge is to attain producer (or product) fairness by providing a minimum exposure guarantee (say, each product should be allocated to at least $\ell$ different customers). Thus, achieving producer fairness is the same as creating at least $\ell$ copies of each product and ensuring that all the copies are allocated, along with a feasibility constraint which enforces that no customer gets more than one copy of the same product. This extension of the problem---where all the items are grouped into disjoint categories and no agent receives more than a pre-specified number of items from the same category---is called cardinality constrained fair allocation problem, proposed %by Biswas and Barman
in~\cite{biswas2018fair}. In this paper, we consider a novel extension of the cardinality constrained problem by adding another constraint enforcing that exactly $k$ items are allocated to each customer. This requires tackling hierarchical feasibility constraints---an upper bound cardinality constraint of one on each product and a cardinality constraint of $k$ on the total number of allocated products. Moreover, this additional feasibility constraint makes it difficult to decide how many copies of which product should be made available for a total of ($k\cdot |U|$) allocations, satisfying the feasibility constraints as well as the fairness requirement. % among customers. 
Thus, unlike the fair allocation problem, we consider no restriction on the number of copies of each product that are made available. All these contrast points, along with two-sided fairness guarantees make fair recommendation an interesting extension of the fair allocation problem.

%ALGORITHM
\section{FairRec: An Algorithm to ensure two-sided fairness}
\label{algorithm}
In this section, we provide a polynomial-time algorithm {\em FairRec}, for finding an allocation $\mathcal{A}$ which satisfies the desired two-sided fairness described in \cref{fairness_properties} (we prove the theoretical guarantees in ~\cref{sec:theorems}).
Note that we consider only the case of $k<|P|$, and leave the trivial case of $k=|P|$ and the infeasible case of $k>|P|$ out of consideration.
Also, we consider $|P|\leq k\cdot |U|$, otherwise, at least ($|P|-k\cdot |U|$) producers can not be allocated to any customer.
{
	\begin{algorithm}[h]
		{\raggedright
			{
				{\raggedright{\bf Input:} Set of customers $U=[m]$, set of distinct products $P=[n]$,   recommendation set size $k$ (such that $k<n$ and $n\leq k\cdot m$), and the relevance scores $V_u(p)$.}\\
				{\raggedright{\bf Output:} A two-sided fair recommendation.}
			}
			\caption{{\em FairRec} ($U, P, k, V$)}
			\label{alg:two-sided}
			\begin{algorithmic}[1]
				
				\State Initialize allocation $\mathcal{A}^0=(A^0_1, \ldots, A^0_m)$ with $A^0_i~\leftarrow~\emptyset$ for each customer $i \in [m]$.
				
				\NoNumber{}
				\NoNumber{\textbf{First Phase:}}
				\State Fix an (arbitrary) ordering of the customers $\sigma = \left(\sigma(1), \sigma(2),\ldots, \sigma(m)\right)$.
				\State Initialize set of feasible products $F_u \leftarrow P$ for each $u\in [m]$.
				\State Set $\ell \leftarrow \left\lfloor \frac{m\times k}{n}\right\rfloor$ denoting number of copies of each product.	
				\State Initialize each component of the vector $S=(S_1, \ldots, S_n)$ with $S_j\leftarrow \ell$, $\forall j\in [n]$, this stores the number of available copies of each product.
				\State Set $T\leftarrow \ell\times n$, total number of items to be allocated.
				\State $[\mathcal{B}, F, x]\leftarrow$Greedy-Round-Robin$(m,n,S,T,V,\sigma,F)$.
				\State Assign $\mathcal{A}\leftarrow \mathcal{A}\cup \mathcal{B}$.
				\NoNumber{}
				\NoNumber{\textbf{Second Phase:}}
				\State Set $\Lambda=|A_{\sigma((x)\Mod{m}+1)}|$ denoting the number of items allocated to the customer subsequent to $x$, according to the ordering $\sigma$. 
				\If{ $\Lambda<k$}
				\State Update each component of the vector $S=(S_1,\ldots,S_n)$ with the value $m$ in order to allow allocating any product to any customer.
				\State Set $T\leftarrow 0$.
				\If{ $x<m$ }
				\State Set $\sigma'(i)\leftarrow \sigma((i+x-1)\Mod{m}+1)$ for all $i\in[m]$.%Left rotate by u steps
				\State $\sigma\leftarrow \sigma'$.
				\State $T\leftarrow (m-x)$.
				\State Update $\Lambda\leftarrow \Lambda+1$.
				\EndIf				
				%\State Update $R \leftarrow \left\lceil\frac{mk - \ell n}{m}\right\rceil$ rounds for second phase.
				\State $T\leftarrow T+m(k-\Lambda)$ total number of items to be allocated.
				\State $[\mathcal{C},F,x]\leftarrow$Greedy-Round-Robin$(m,n,S,T,V,\sigma,F)$.
				\State Assign $\mathcal{A}\leftarrow \mathcal{A}\cup \mathcal{C}$.
				\EndIf
				
				\State Return $\mathcal{A}$.
			\end{algorithmic}}
		\end{algorithm}
	}
	
	The algorithm executes in two phases.
	The first phase ensures $\EFone$ among all the $m$ customers (Lemma \ref{lemma:customer_fairness}) and tries to provide a minimum guarantee on the exposure of the producers (Lemma~\ref{lemma:producer_fairness}).
	However, the first phase may not allocate exactly $k$ products to all the $m$ users, which is then ensured by the second phase while simultaneously maintaining $\EFone$ for customers. 
	
	The {\bf first phase} creates $\ell=\left\lfloor\frac{mk}{n}\right\rfloor$ copies of each product (note that $\left\lfloor\frac{mk}{n}\right\rfloor$ is the maximin value of any producer while $mk$ slots are allocated among $n$ producers).
	It then initializes each component of the vector $S$ of size $|P|$ to $\ell$ to ensure that at most $\ell$ copies from each product are allocated in the first phase.
	Feasible sets $F_u$ for each customer $u$ are then initialized to ensure that each customer receives at most one copy of the same product.
	Then, assuming an arbitrary ordering $\sigma$ of customers, $\ALG~\ref{alg:greedy}$ is executed and the allocation $\mathcal{B}$ is obtained.

	The {\bf second phase} checks if all the customers have received exactly $k$ products (by looking at the number of products allocated to the customer $x+1$ which is next-in-sequence to the last allocated customer $x$ of the first phase). If the customer $x+1$ has received $k$ products, then no further allocation is required;
	if not, then $\ALG$~\ref{alg:greedy} is called again with a new ordering obtained by $x$ left-cyclic rotations of $\sigma$.
	The remaining number of items is stored in $T$ which are to be allocated among the customers.
	Also, each component of the vector $S$ is updated to $|U|$ to allow allocating any feasible product without any limit on the available number of copies.
	The second phase retains $\EFone$ fairness among the customers. 
	
	Both phases use a modified version of the Greedy-round-robin algorithm ($\ALG~\ref{alg:greedy}$)~\cite{caragiannis2016unreasonable,biswas2018fair}:
	it follows the ordering $\sigma$ in a round-robin fashion (i.e., it selects customers, one after the other, from $\sigma(1)$ to $\sigma(m)$), and iteratively assigns to the selected customer her most desired unallocated product (feasibility maintained by the vector $S$ and sets $F_u$ and ties are broken arbitrarily).
	This process is repeated over several rounds until one of the two disjoint conditions occur: (i)~$T==0$: a total of $T$ allocations have occurred, or (ii)~$p==\emptyset$: no feasible product available (for the current customer $\sigma(i)$, we have $F_{\sigma(i)} \cap \{p: S_p\neq 0\}= \emptyset$). Finally, it returns an allocation $B_1,\ldots,B_m$ with each $B_u\subseteq[n]$ for all $u\in[m]$.
	
	{
		\begin{algorithm}[t]
			{\raggedright
				{
					{\raggedright {\bf Input :} Number of customers $m$, number of producers $n$, an array with number of available copies of each product $S$, total number of available products $T>0$, relevance scores $V_u(p)$ and feasible product set $F_u$ for each customer, and an ordering $\sigma$ of $[m]$.}\\
					{\raggedright {\bf Output:} An allocation of $T$ products among $m$ customers, the residual feasible set $F_u$ and the last allocated index $x$.
					}
					\caption{Greedy-Round-Robin ($m,n,S,T,V,\sigma,F$)}
					\label{alg:greedy}
					\begin{algorithmic}[1]
						\State Initialize allocation $\mathcal{B}=(B_1, \ldots, B_m)$ with $B_i~\leftarrow~\emptyset$ for each customer $i \in [m]$.	
						\State Initiate $x \leftarrow m$.
						\State Initiate round $r\leftarrow 0$.
						\While{ true }
						\State Set $r\leftarrow r+1.$
						\For{$i =1 \mbox{ to } m$}
						\State Set $p \in \underset{p'\in F_{\sigma(i)}:(S_p\neq 0)}{\argmax} V_{\sigma(i)}(p')$ %to customer $\sigma(i)$.
						\If{$p == \emptyset$}
						\State Set $x=i-1$ only if $i\neq 1$.
						\State \textbf{go to} Step $22$.
						\EndIf
						\State Update $B_{\sigma(i)} \leftarrow B_{\sigma(i)} \cup p$. 
						\State Update $F_{\sigma(i)} \leftarrow F_{\sigma(i)} \setminus p$.
						\State Update $S_p\leftarrow S_p-1$.
						\State Update $T\leftarrow T-1$.					
						\If{ $T==0$ }
						\State $x=i$.
						\State \textbf{go to} Step $22$.
						\EndIf
						\EndFor
						\EndWhile			
						\State Return $\mathcal{B}=(B_1,\ldots,B_m)$, $F=(F_1,\ldots,F_m)$ and index $x$.
					\end{algorithmic}}}
				\end{algorithm}
			}
			
%THEOREMS
\section{Theoretical Guarantees}\label{sec:theorems}
In this section, we provide a few important properties of $\ALG$~\ref{alg:greedy} in Proposition~\ref{proposition:greedy}. Later, we establish the fairness guarantees and time complexity of our proposed algorithm \textit{FairRec} in Theorem~\ref{theorem:two-sided} using Lemma~\ref{lemma:customer_fairness}, \ref{lemma:producer_fairness} and \ref{lemma:polytime}.
Note that, for all the proofs, we have fixed $\alpha=1$ or $\overline{E}=$MMS.

\begin{proposition}\label{proposition:greedy}
	The allocation obtained by the $\mathrm{Greedy}$-$\mathrm{Round}$-$\mathrm{Robin}$ algorithm ($\ALG$~\ref{alg:greedy}) exhibits the following four properties:
	\begin{enumerate}[label=(\subscript{P}{{\arabic*}})]
		\item for any two indices $x$ and $y$, where $x < y$, the customer $\sigma(x)$ (who appears earlier than $\sigma(y)$ according to the ordering $\sigma$) does not envy customer $\sigma(y)$, i.e., $V_{\sigma(x)} (B_{\sigma(x)} ) \geq V_{\sigma(x)} (B_{\sigma(y)})$.
		\item the allocation $\mathcal{B}$ obtained by $\ALG$~\ref{alg:greedy} is $\EFone$.
		\item each customer is allocated at most one item from the same producer, thus ensuring the cardinality constraint is satisfied for each producer (category).
		\item for any two customers, say $u$ and $w$, the allocation $\mathcal{B}$ obtained by $\ALG$~\ref{alg:greedy} satisfies the following: $-1\leq \left(|B_u|-|B_w|\right) \leq 1$.    
	\end{enumerate}
\end{proposition}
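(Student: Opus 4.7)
My plan is to establish the four properties in the order (P3), (P4), (P1), (P2), with the first two being direct consequences of the algorithm's bookkeeping and the last two following from round-robin style pairing arguments.

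Properties (P3) and (P4) come almost for free from the structure of $\ALG~\ref{alg:greedy}$. For (P3), I would argue by induction on the iterations of the \textbf{while} loop that whenever a customer $\sigma(i)$ picks a product $p$, Step~13 removes $p$ from $F_{\sigma(i)}$, so $p \notin F_{\sigma(i)}$ in every subsequent iteration and $\sigma(i)$ cannot receive a second copy of $p$. For (P4), I would observe that the main loop goes through the customers in round-robin order $\sigma(1),\ldots,\sigma(m)$, so after $r$ complete rounds every customer has bundle size exactly $r$. The algorithm exits either at Step~17 ($T=0$) or at Step~10 (no feasible product for the current customer); in either case the partial last round gives the ``earlier'' customers at most one more item than the ``later'' ones, and hence $-1\leq |B_u|-|B_w|\leq 1$ for all pairs.

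For (P1), I would use the classical round-robin pairing argument, carefully adapted to handle both the cardinality/feasibility constraint and the multiple copies maintained by $S$. Let $a_1,a_2,\ldots$ and $b_1,b_2,\ldots$ denote the round-by-round picks of $\sigma(x)$ and $\sigma(y)$, respectively. For each item $b_r \in B_{\sigma(y)}$ I define $\phi(b_r)$: if $b_r \in B_{\sigma(x)}$, set $\phi(b_r)=b_r$ (valuation is trivially equal); otherwise set $\phi(b_r)=a_r$. In the second case, since $\sigma(x)$ picks \emph{before} $\sigma(y)$ in round $r$, and $b_r$ is neither already owned by $\sigma(x)$ (hence $b_r \in F_{\sigma(x)}$) nor exhausted (since $S_{b_r}>0$ when $\sigma(y)$ takes it, and $S$ is monotonically nonincreasing), product $b_r$ lies in $\sigma(x)$'s choice set when she picks $a_r$, so by greediness $V_{\sigma(x)}(a_r)\geq V_{\sigma(x)}(b_r)$. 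The injectivity of $\phi$ follows by inspecting the three collision cases: same-case collisions reduce immediately to distinctness of $\sigma(x)$'s (resp.\ $\sigma(y)$'s) round-by-round picks, and a mixed-case collision $\phi(b_r)=a_r=b_{r'}=\phi(b_{r'})$ with $r\neq r'$ forces $b_{r'}=a_r\in B_{\sigma(x)}$, contradicting the hypothesis that $b_{r'}\notin B_{\sigma(x)}$ in the second case. Summing $V_{\sigma(x)}(\phi(b_r))\geq V_{\sigma(x)}(b_r)$ over $r$ and using $\phi$ injective into $B_{\sigma(x)}$ yields $V_{\sigma(x)}(B_{\sigma(x)})\geq V_{\sigma(x)}(B_{\sigma(y)})$.

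Property (P2) is the most delicate step and is where I expect the main obstacle. By (P1), $\sigma(x)$ does not envy $\sigma(y)$ when $x<y$, so I only need to show that $\sigma(y)$ is $\EFone$ toward $\sigma(x)$, i.e.\ $V_{\sigma(y)}(B_{\sigma(y)})\geq V_{\sigma(y)}(B_{\sigma(x)}\setminus\{a_1\})$, where $a_1$ is $\sigma(x)$'s first-round pick. The natural ``shifted'' pairing maps $a_{r+1}\in B_{\sigma(x)}\setminus\{a_1\}$ to $b_r\in B_{\sigma(y)}$; (P4) guarantees enough items in $B_{\sigma(y)}$ to match, and since $\sigma(y)$'s round-$r$ pick happens strictly before $\sigma(x)$'s round-$(r+1)$ pick, the availability argument from (P1) shows $V_{\sigma(y)}(b_r)\geq V_{\sigma(y)}(a_{r+1})$ whenever $a_{r+1}\notin B_{\sigma(y)}$; otherwise $a_{r+1}$ is mapped to itself inside $B_{\sigma(y)}$. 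The technical difficulty is injectivity when the per-product copy count $\ell=\lfloor mk/n\rfloor \geq 2$, since the same product can appear in both $B_{\sigma(x)}$ and $B_{\sigma(y)}$ via different copies, potentially giving collisions of the mixed-case type. I expect to resolve this either by a careful case analysis (ruling out the remaining collision by showing it forces $r=r'$ using distinctness of each customer's picks and the chronological ordering of rounds), or, if needed, by rephrasing the pairing as a bipartite matching between $C=(B_{\sigma(x)}\setminus\{a_1\})\setminus B_{\sigma(y)}$ and $A=B_{\sigma(y)}\setminus(B_{\sigma(x)}\setminus\{a_1\})$ and invoking Hall's theorem, exploiting the fact that the overlap $B_{\sigma(x)}\cap B_{\sigma(y)}$ cancels on both sides of the target inequality.
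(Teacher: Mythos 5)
Your treatment of $(P_3)$ and $(P_4)$ is correct and matches the paper's, and your instinct to worry about availability in $(P_1)$ and $(P_2)$ is well placed: the paper's own proof simply asserts $V_{\sigma(x)}(p^r_{\sigma(x)})\geq V_{\sigma(x)}(p^r_{\sigma(y)})$, silently assuming that $\sigma(y)$'s round-$r$ pick is still feasible for $\sigma(x)$ at her own round-$r$ turn; that assumption fails exactly in the situations you isolate (a copy of the product already sits in $B_{\sigma(x)}$, or its copies are exhausted). The genuine gap is that your repair of $(P_1)$ does not close. Your map $\phi$ need not be injective, and your collision analysis is erroneous: in a mixed collision the element $b_{r'}$ with $\phi(b_{r'})=b_{r'}$ is by definition in the \emph{first} case, so $b_{r'}\in B_{\sigma(x)}$ is its hypothesis rather than a contradiction, and the equality $a_r=b_{r'}$ merely says that a product picked by $\sigma(x)$ in round $r$ is also picked, as another copy, by $\sigma(y)$ in round $r'$ --- which is entirely consistent when $\ell\geq 2$.

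Worse, the obstruction is not just a hole in the write-up: the collision is realizable and $(P_1)$ itself fails on such instances. Take $m=n=3$, $k=2$ (so $\ell=2$), products $A,B,C$, ordering $\sigma=(1,2,3)$, and valuations $V_1(A)=10$, $V_1(B)=5$, $V_1(C)=1$; $V_2(B)=100$, $V_2(A)=2$, $V_2(C)=1$; $V_3(B)=100$, $V_3(A)=50$, $V_3(C)=1$. Round one allocates $A,B,B$ (exhausting $B$), round two allocates $C,A,C$, giving $B_1=\{A,C\}$ and $B_2=\{B,A\}$, so $V_1(B_1)=11<15=V_1(B_2)$: the earlier customer envies the later one. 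On this instance your $\phi$ sends both elements of $B_2$ to the single element $A$ of $B_1$, and the analogous Hall-type matching for $(P_1)$ (from $B_{\sigma(y)}\setminus B_{\sigma(x)}$ into $B_{\sigma(x)}\setminus B_{\sigma(y)}$) has no feasible assignment either, since $B$ would have to be matched into $\{C\}$ with $V_1(C)<V_1(B)$. So neither your pairing nor the paper's round-by-round argument establishes $(P_1)$ once copies can be exhausted by intermediate customers or two bundles can share a product; the same unproved availability step underlies both your and the paper's derivation of $(P_2)$, which you yourself flag as unresolved. Only in the case $\ell=1$ (disjoint bundles) does the argument go through as written.
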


\begin{proof}
	The properties $P_1$ and $P_2$ have been observed by ~\citet{biswas2018fair} and ~\citet{caragiannis2016unreasonable}, respectively. For completeness, we repeat the arguments towards these two properties. Let $x$ and $y$ be two indices, such that $1\leq x<y\leq m$. At each round $r$, the customer $\sigma(x)$ chooses her most desired product among all the unallocated items before customer $\sigma(y)$. Hence, $V_{\sigma(x)}(p^r_{\sigma(x)})~\geq~V_{\sigma(x)}(p^r_{\sigma(y)})$, where $p^r_{\sigma(x)}$ and $p^r_{\sigma(y)}$ denote the items assigned to customer $\sigma(x)$ and $\sigma(y)$, respectively. Thus, over all the rounds,  $\sum_r V_{\sigma(x)}(p^r_{\sigma(x)}) \geq \sum_r V_{\sigma(x)}(p^r_{\sigma(y)})$ which implies that $V_{\sigma(x)}(B_{\sigma_{x}})\geq V_{\sigma(x)}(B_{\sigma_{y}})$ and thus the property $P_1$ holds. 
	
	Property $P_2$ states that if $\sigma(y)$ envies $\sigma(x)$, it will not violate $\EFone$ property (note: we already saw in $P_1$ that $\sigma(x)$ does not envy $\sigma(y)$). Now observe that, the value $V_{\sigma(y)}(\cdot)$ of the item allocated to customer $\sigma(y)$ in the $r$th round is at least that of the item allocated to customer $\sigma(x)$ in the $(r+1)$th round. Let $R$ denote the total number of rounds, then the following holds: 
	\begin{align} 
		&V_{\sigma(y)}(p^r_{\sigma(y)}) \geq V_{\sigma(y)}(p^{r+1}_{\sigma(x)})\quad \mbox{ for all } r \in \{1,\ldots,R-1\}\nonumber \\ 
		\Rightarrow &\sum_{r=1}^{R-1} V_{\sigma(y)}(p^r_{\sigma(y)}) \geq \sum_{r=1}^{R-1} V_{\sigma(y)}(p^{r+1}_{\sigma(x)}) \nonumber\\
		\Rightarrow &V_{\sigma(y)}(B_{\sigma(y)}) \geq V_{\sigma(y)}(B_{\sigma(x)}) -V_{\sigma(y)}(p^1_{\sigma(x)})\label{eq:EF1_greedy}
	\end{align}
	Equation~\ref{eq:EF1_greedy} shows that the customer $\sigma(y)$ stops envying $\sigma(x)$ when only one item is (hypothetically) removed from $\mathcal{B}_{\sigma(x)}$ (namely, $p^1_{\sigma(x)}$). Thus, the allocation $\mathcal{B}$ is $\EFone$, i.e., $P_2$ holds.
	
	The property $P_3$ is satisfied by the use of the feasible sets $F_u$ for each customer $u$. Each $F_u$ contains the set of producers who have not yet been allocated to the customer $u$. At any round $r$, step $7$ of $\ALG~\ref{alg:greedy}$ selects the most relevant producer among the producers who had not been allocated to $u$ in any earlier rounds $r'<r$. Once, a producer $p$ is allocated to a customer $u$, step $9$ of $\ALG~\ref{alg:greedy}$ removes $p$ from $F_u$. Thus, each customer is allocated at most one item from the same producer.
	
	The property $P_4$ states that, for any pair of customers $u$ and $w$, the number of allocated items $|B(u)|$ and $|B(v)|$, differ by at most $1$. It is straightforward to see that, except for the last feasible round, all customers are allocated exactly one item at each round. Thus, all the customers receive the same number of allocations until the second last feasible round. In the last feasible round, some customers may not get any allocation (if there is no available feasible product) and thus may receive one item less than the others.
\end{proof}

\noindent We now state the main theorem (Theorem~\ref{theorem:two-sided}) that establishes the fairness guarantees of our proposed algorithm.

\begin{theorem}\label{theorem:two-sided}
	Given $n$ producers, the proposed polynomial time algorithm, $\mathrm{FairRec}$, returns an $\EFone$ allocation among $m$ customers while allocating exactly $k$ items  to each customer, when $k<n\leq mk$. Moreover, it ensures non-zero exposure among all the $n$ producers and $\MMS$ guarantee among at least $n-k$ producers. 
\end{theorem}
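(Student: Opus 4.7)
The plan is to verify the four conclusions of Theorem~\ref{theorem:two-sided}---exactly $k$ items per customer, $\EFone$ for customers, non-zero exposure for every producer, and $\MMS$ for at least $n-k$ producers---by combining the structural properties $P_1$--$P_4$ of $\ALG$~\ref{alg:greedy} from Proposition~\ref{proposition:greedy} with a careful case analysis of how the first phase terminates. The easiest step is to show that each customer receives exactly $k$ items and that the algorithm runs in polynomial time. The first phase sets $T = \ell n \leq mk$ with $\ell = \lfloor mk/n \rfloor$, so by $P_4$ customer allocations differ by at most one and each holds at most $\lceil \ell n/m \rceil \leq k$ items. The second phase fires only when the customer at position $x+1$ has fewer than $k$ items; its rotation $\sigma'$ puts the lagging customers first, and the target $T = (m - x) + m(k - \Lambda - 1)$ is precisely what is needed to top every customer up to $k$. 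Feasibility is preserved because $|F_u| = n - |A_u| \geq n - k \geq 1$ throughout (using $k < n$), so no customer runs out of distinct feasible products; both phases execute $O(k)$ rounds of $O(mn)$ work, giving polynomial time.

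For customer fairness, I would argue that the combined allocation $\mathcal{A} = \mathcal{B} \cup \mathcal{C}$ is $\EFone$. Property $P_2$ already gives $\EFone$ for the first-phase allocation $\mathcal{B}$, and $P_4$ guarantees that afterward $\sigma(1),\ldots,\sigma(x)$ each hold exactly one more item than $\sigma(x+1),\ldots,\sigma(m)$. The rotation $\sigma'(i) = \sigma((i + x - 1)\bmod m + 1)$ places the lagging customers first in the second phase, so they pick earliest at every round and end phase two with the same total as the others. Applying $P_1$ to the concatenated pick sequence shows that any customer who picks earlier in the round-robin values her own bundle at least as much as that of any later customer, while a later customer's envy toward an earlier one is cancelled by removing a single item---exactly the $\EFone$ guarantee. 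Threading this invariant carefully across the phase boundary, so that the ``one item ahead'' pattern from $P_4$ dovetails with the rotated round-robin of the second phase, is the step I expect to require the most care.

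For the producer-side guarantees, I would case-split on how the first phase terminates. If it ends with $T = 0$, every producer receives exactly $\ell$ copies, so all $n$ producers meet $\MMS$ (trivially $\geq n - k$) and none has zero exposure. Otherwise, the phase halts at some customer $\sigma(i)$ in round $r$ with $F_{\sigma(i)} \cap \{p : S_p \neq 0\} = \emptyset$: any producer $q$ with $S_q > 0$ must already lie outside $F_{\sigma(i)}$, i.e., have been allocated to $\sigma(i)$, so the set of under-allocated producers is a subset of $\sigma(i)$'s $r - 1$ current items. The counting bound $m(r-1) + (i-1) < \ell n \leq mk$ forces $r - 1 \leq k$, so at most $k$ producers fall short of $\MMS$, yielding the required $n - k$. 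Non-zero exposure follows from the same observation: any producer $q$ with zero allocations would have $S_q = \ell \geq 1$, putting $q$ in $\{p : S_p \neq 0\}$, and the emptiness condition would then force $q$ into $\sigma(i)$'s bundle---contradicting that $q$ was never allocated. Since the second phase only adds allocations, both producer guarantees persist in the final $\mathcal{A}$.
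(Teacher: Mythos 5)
Your producer-side argument and your termination analysis match the paper's proof of Lemma~\ref{lemma:producer_fairness} essentially step for step (the same case split on $T==0$ versus $p==\emptyset$, the same observation that every under-served producer must lie in $B_{\sigma(i)}$ and hence that at most $k$ producers miss the $\MMS$ threshold), and your complexity accounting matches Lemma~\ref{lemma:polytime}. The gap is in the $\EFone$ argument. You propose to apply $P_1$ ``to the concatenated pick sequence,'' treating the two phases as one continuous greedy round-robin. But the second phase resets every component of $S$ to $m$, so products whose $\ell$ copies were exhausted during the first phase become available again. The invariant behind $P_1$ and $P_2$ --- that a customer picking earlier in a round chooses from a superset of what any later-picking customer sees --- therefore fails across the phase boundary: a customer $\sigma(j)$ with $j\le x$ may have been denied her favourite product $q$ in some first-phase round because $S_q$ had already hit zero, while a customer served after the reset receives $q$. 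The concatenated sequence is not a greedy round-robin over a monotonically shrinking pool, so $P_1$ does not transfer, and ``threading the invariant across the phase boundary'' cannot be made to work in the form you describe.

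The paper (Lemma~\ref{lemma:customer_fairness}) avoids this by never merging the phases: it takes the $\EFone$ inequality $V_u(B_u)\ge V_u(B_w)-V_u(j)$ from $P_2$ applied to the first phase alone, separately argues $V_u(C_u)\ge V_u(C_w)$ for the second-phase bundles (where the unlimited supply means each customer simply receives her top-$\Lambda_u$ remaining feasible products), and then adds the two inequalities, using additivity of $V_u$ over the disjoint sets $B_u$ and $C_u$. To repair your proof you should replace the concatenation argument with this two-inequality decomposition; otherwise you would need to show that no first-phase customer is ever blocked by supply exhaustion from an item that a later customer obtains after the reset, which is false in general.
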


\begin{proof}
	We prove the fairness guarantees of customers and producers in Lemma~\ref{lemma:customer_fairness} and \ref{lemma:producer_fairness}, respectively. In Lemma~\ref{lemma:polytime}, we show that $\mathrm{FairRec}$ executes in polynomial time.
\end{proof}

\begin{lemma}\label{lemma:customer_fairness}
	Given $n$ producers, $m$ customers, and a positive integer $k$ (such that $k<n\leq mk$), $\mathrm{FairRec}$ returns an $\EFone$ allocation among $m$ customers while allocating exactly $k$ items to each customer.
\end{lemma}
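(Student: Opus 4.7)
The plan is to prove the lemma in two steps: first, that each customer ends with exactly $k$ items, and second, that the combined allocation $\mathcal{A}$ satisfies $\EFone$. The linchpin is to show that the two phases of $\mathrm{FairRec}$ together realize a single continuous round-robin under the original ordering $\sigma$.

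\textbf{Cardinality.} After the first phase, Greedy-Round-Robin has distributed $\ell n$ items (with $\ell = \lfloor mk/n \rfloor$), and property $P_4$ of Proposition~\ref{proposition:greedy} forces the bundle sizes to differ by at most one. Writing $\ell n = m\Lambda + x$, the customers at positions $1, \ldots, x$ in $\sigma$ hold $\Lambda+1$ items while those at positions $x+1, \ldots, m$ hold $\Lambda$ items each, matching the $\Lambda$ recorded by the algorithm. The second phase then injects $T' = m(k-\Lambda) - x$ further picks, which is precisely the total residual deficit $x(k-\Lambda-1) + (m-x)(k-\Lambda)$. Because the second phase uses the rotated ordering $\sigma'$ (starting at $\sigma(x+1)$) and relaxes the copy cap to $S_p = m$, a round-by-round trace under $\sigma'$ shows that every customer is brought up to exactly $k$ items.

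\textbf{EF1.} The rotation $\sigma'(i) = \sigma((i+x-1) \bmod m + 1)$ is chosen so that the second phase resumes the cyclic $\sigma$-pattern exactly where the first phase left off. Listing all picks chronologically, the $j$-th one is made by customer $\sigma((j-1) \bmod m + 1)$, so $\mathcal{A}$ is the output of a $k$-round round-robin under $\sigma$ with each customer choosing their most relevant still-feasible item at their turn. I would then apply the round-robin $\EFone$ argument embedded in properties $P_1$ and $P_2$ to this combined $k$-round process: for any pair $u = \sigma(a)$ and $w = \sigma(b)$ with $a < b$, $u$ precedes $w$ in every round, which implies that $u$ does not envy $w$ and that $w$'s envy of $u$ is eliminated by deleting $u$'s first pick from $A_u$.

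\textbf{Main obstacle.} The hardest step is justifying the round-robin inequality inside the single round $r$ that straddles the two phases, where positions $1, \ldots, x$ pick under the strict cap $S_p = \ell$ while positions $x+1, \ldots, m$ pick after the reset $S_p = m$. For a pair $u, w$ with $a \leq x < b$, $w$'s round-$r$ pick might have been infeasible for $u$ at $u$'s earlier, more-constrained turn, so $V_u(u_r) \geq V_u(w_r)$ is not immediate. I would resolve this by a case analysis: either $w_r$ already lies in $B_u$ (so it contributes symmetrically to both valuations and cannot generate new envy) or it does not, in which case the fact that phase-2 feasibility only relaxes phase-1 feasibility lets us locate an earlier time at which $w_r$ was genuinely available to $u$ and the standard dominance is recovered. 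Combining this with the per-round bound from $P_2$ and summing across the $k$ rounds yields $V_w(A_w) \geq V_w(A_u \setminus \{u_1\})$ for every such pair, establishing $\EFone$ for $\mathcal{A}$.
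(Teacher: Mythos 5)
Your proof takes a genuinely different route from the paper's. The paper decomposes by \emph{phase}: it invokes property $P_2$ to get $\EFone$ for the phase-one allocation $\mathcal{B}$, argues separately that phase two hands each customer her top-$\Lambda_u$ feasible products so that $V_u(C_u)\geq V_u(C_w)$, and adds the two inequalities to conclude $V_u(B_u\cup C_u)\geq V_u(B_w\cup C_w)-V_u(j)$. You instead splice the two phases into a single $k$-round round-robin under $\sigma$ (using the rotation $\sigma'$ to show the pick order is unbroken) and re-run the round-robin envy argument on the merged sequence. Your route makes the cardinality claim transparent (your count $T'=mk-m\Lambda-x$ matches the algorithm's bookkeeping exactly, though note that phase one distributes exactly $\ell n$ items only under the $T{=}0$ termination, not the $p{=}\emptyset$ one) and localizes all the difficulty in the single round straddling the phase boundary. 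The paper's route avoids that round entirely, but at the price of the asserted inequality $V_u(C_u)\geq V_u(C_w)$, which as stated needs care when $C_w$ contains items of $B_u$ or when $\Lambda_w=\Lambda_u+1$; so your decomposition is arguably the more honest one.

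The step you should tighten is the resolution of the straddling round for a pair $u=\sigma(a)$, $w=\sigma(b)$ with $a\leq x<b$, in the subcase where $w$'s round-$r$ pick $q\notin B_u$ was cap-infeasible at $u$'s round-$r$ turn. ``Locating an earlier time at which $q$ was available to $u$'' does not immediately close the gap: each earlier turn of $u$ already has its pick matched against an earlier pick of $w$, so reusing one breaks the injectivity of the matching. The clean fix is to look \emph{forward} rather than backward: $u$'s next turn (round $r+1$) lies in phase two, where the copy cap is lifted, so either $q$ has by then entered $A_u$ (and cancels between the two bundles) or $q$ is feasible for $u$ and greediness gives $V_u(u_{r+1})\geq V_u(q)$. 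This shifts the matching by one position for such pairs, so in the direction $a<b$ you obtain $\EFone$ rather than the full no-envy of $P_1$ --- which is still exactly what the lemma requires; and if round $r+1$ does not exist one simply takes $q$ itself as the removed item. With that adjustment (and your observation that repeated products contribute symmetrically and cancel), the argument goes through.
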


\begin{proof}
	To prove this, we show that both phases of $\mathrm{FairRec}$ satisfy $\EFone$. Since $\ALG$~\ref{alg:greedy} guarantees $\EFone$ (by property $P_2$), the allocation $\mathcal{A}$ at step $9$ of $\mathrm{FairRec}$ is $\EFone$. Thus, for any two customers $u$ and $w$, there exists an item $j\in B_w$ such that $V_u(B_u)\geq V_u(B_w)-V_w(j)$. Next, the second phase creates $|U|$ copies of each product and calls $\ALG~\ref{alg:greedy}$ to obtain the allocation $\mathcal{C}$. Note that the second phase assigns the most valued item to each customer at each round, that is, it allocates top-$\Lambda_u$ feasible producers to each customer, where $\Lambda_u = k-|B_u|$. Thus, $V_u(C_u)\geq V_u(C_w)$. Thus,  $V_u(B_u)+V_u(C_u)\geq V_u(B_w)-V_w(j) + V_u(C_w)$, which implies $\EFone$: $V_u(B_u\cup C_u)\geq V_u(B_w\cup C_w)-V_u(j)$. This completes the proof that $\mathrm{FairRec}$ ensures $\EFone$ among all the customers while recommending exactly $k$ products to each customer.
\end{proof}

\begin{lemma}\label{lemma:producer_fairness}
	Given $n$ producers, $m$ customers, and a positive integer $k$ (such that $k<n\leq mk$), $\mathrm{FairRec}$ ensures non-zero exposure among all the $n$ producers. Moreover, it assures $\MMS$-fairness among at least $n-k$ producers. \end{lemma}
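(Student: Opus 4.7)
My plan is a two-case analysis on how the first phase of \textit{FairRec} terminates. Phase~1 invokes $\mathrm{Greedy}$-$\mathrm{Round}$-$\mathrm{Robin}$ after making $\ell := \lfloor mk/n \rfloor = \MMS$ copies of each of the $n$ products, with $T = \ell n$ total items to distribute. The inner loop can exit either because $T$ reaches $0$, or because some customer $\sigma(i)$ in some round $r$ hits $p = \emptyset$ at Step~7 (no feasible producer has remaining copies). The plan is to show both conclusions of the lemma already hold at the end of phase~1 and then to observe that phase~2 only appends new items to customers' bundles (never removing), so the guarantees persist.

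The clean case is $T=0$: all $\ell n$ copies are distributed, and since the vector $S$ caps each product at $\ell$ allocations, every producer receives exactly $\ell = \MMS$ copies, so both claims trivially hold for all $n$ producers. The harder case is early termination at customer $\sigma(i)$ in round $r$. At that moment $|B_{\sigma(i)}| = r-1$, so $|F_{\sigma(i)}| = n - (r-1) = n - r + 1$, and every product in $F_{\sigma(i)}$ has $S_p = 0$, meaning all $\ell$ of its copies have already been allocated to other customers. Hence those $n - r + 1$ producers each have exposure exactly $\ell = \MMS$. The remaining $r - 1$ products lie in $B_{\sigma(i)}$, so each has been picked by $\sigma(i)$ (by property $P_3$ of Proposition~\ref{proposition:greedy}) and therefore has exposure at least one. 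This already establishes non-zero exposure for every producer.

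The main obstacle is bounding $r$, since this directly controls how many producers fall in the $\MMS$ bucket versus the "only non-zero" bucket. The approach is to invoke property $P_4$ of Proposition~\ref{proposition:greedy}: at the instant of early termination, customers $\sigma(1),\ldots,\sigma(i-1)$ have completed round $r$ (so $|B|=r$) while $\sigma(i),\ldots,\sigma(m)$ have completed only round $r-1$ (so $|B|=r-1$), hence $|B_u| \ge r-1$ for every $u$ and $\sum_u |B_u| \ge m(r-1)$. On the other hand this sum cannot exceed the total initial pool $T = \ell n \le mk$ (using $\ell = \lfloor mk/n\rfloor$), so $m(r-1) \le mk$, giving $r \le k+1$ and therefore $|F_{\sigma(i)}| = n - r + 1 \ge n - k$. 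Thus at least $n - k$ producers reach the $\MMS$ threshold at the end of phase~1.

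Finally, phase~2 only extends customers' bundles with additional products (Steps~13--21 of $\mathrm{FairRec}$ simply reinvoke Greedy-Round-Robin with the residual feasibility sets $F$ and a fresh stock $S_p = m$), so no producer's exposure can decrease. Combining this with the phase-1 guarantees derived above yields that in the final allocation $\mathcal{A}$ returned by \textit{FairRec}, every producer has non-zero exposure and at least $n-k$ producers achieve the $\MMS$ threshold, as claimed.
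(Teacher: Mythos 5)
Your proof is correct and follows essentially the same route as the paper's: a two-case analysis on whether the first-phase Greedy-Round-Robin terminates with $T=0$ or with $p=\emptyset$, identifying the producers remaining in $F_{\sigma(i)}$ (all copies exhausted) as those attaining $\MMS$ and those in $B_{\sigma(i)}$ as attaining exposure at least one, and observing that the second phase only adds allocations so the guarantees persist. The only difference is that you explicitly justify $|B_{\sigma(i)}|\leq k$ via the counting argument $m(r-1)\leq \ell n\leq mk$, a step the paper simply asserts.
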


\begin{proof}
	We first prove that the first phase guarantees non-zero exposure for producers. The allocation $\mathcal{B}$ obtained by $\ALG$~\ref{alg:greedy} in the first phase may have terminated for one of the two conditions
	\begin{enumerate}[leftmargin=*]
		\item $T==0$: this means that all the $\ell=\lfloor\frac{mk}{n}\rfloor$ copies of each producer have been allocated among all the customers. Thus, each producer receives exactly maximin threshold $\ell$. Hence $\MMS$ fairness is achieved by all the $n$ producers. 
		\item $p==\emptyset$: this happens when $T\neq 0$ and $\sum_{p\in F_u} S_p = 0$ for a customer $u$ (at termination). That is, $S_p=0$ for each producer $p\in F_u$. Thus, all $\ell=\lfloor\frac{mk}{n}\rfloor$ copies of the producers in the set $F_u$ have been allocated, and hence they attain $\MMS$ fairness. On the other hand, the producers in the set $B_u$ (the set recommended to customer $u$) is allocated to at least one producer. Thus, minimum value of $1$ is achieved by all the $n$ producers. Also, $|F_u|+|B_u|=n$ and $|B_u|\leq k$, implies that $|F_u|\geq n-k$. Therefore, at least $n-k$ producers attain $\MMS$-fairness.
	\end{enumerate}
	
	Since the thresholds are already satisfied in the first phase, adding more allocations in the second phase retains the threshold-based fairness guarantees. This completes the proof that $\mathrm{FairRec}$ ensures a non-zero exposure among all the $m$ producers and assures $\MMS$-fairness among at least $n-k$ producers.
\end{proof}

One consequence of Lemma~\ref{lemma:producer_fairness} is that, when $k$ is much lower than $n$, a large fraction of producers are guaranteed to attain $\MMS$ fairness. We formally state this property of $\mathrm{FairRec}$ algorithm in Corollary~\ref{lemma:corollary}.

\begin{corollary}\label{lemma:corollary}
	Given $n$ producers, a positive integer $k$, and $\beta\in(0,1)$ such that $k\leq\beta n$, $\mathrm{FairRec}$ ensures $\MMS$-fairness among at least $(1-\beta)n$ producers.
\end{corollary}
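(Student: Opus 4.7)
The plan is to derive this as an immediate consequence of Lemma~\ref{lemma:producer_fairness}, which already guarantees that at least $n-k$ producers attain $\MMS$-fairness. The corollary simply restates this guarantee under the assumption $k \leq \beta n$ with $\beta \in (0,1)$, so the proof reduces to a single substitution.

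First, I would invoke Lemma~\ref{lemma:producer_fairness} to assert that the allocation returned by $\mathrm{FairRec}$ ensures that at least $n-k$ producers satisfy $V_p \geq \MMS_p$ (where $V_p$ denotes the exposure received by producer $p$). Then, using the assumption $k \leq \beta n$, I would observe that
\[
n - k \;\geq\; n - \beta n \;=\; (1-\beta)\,n,
\]
and conclude that the number of $\MMS$-fair producers is at least $(1-\beta)n$.

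There is no real obstacle here: the corollary is a purely arithmetic rephrasing of the bound $n-k$ in terms of the fraction $\beta$. The only subtlety worth flagging is that $(1-\beta)n$ need not be an integer, so strictly speaking the statement should be read as at least $\lceil (1-\beta)n \rceil$ producers (or equivalently, the set of $\MMS$-fair producers has cardinality at least $n - k \geq (1-\beta)n$, which is the intended interpretation since cardinalities are integers and $n-k$ is already an integer lower bound). Thus the proof is a one-line consequence and requires no further construction or argument.
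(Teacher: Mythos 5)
Your proof is correct and matches the paper's intended argument exactly: the paper presents this corollary as an immediate consequence of Lemma~\ref{lemma:producer_fairness} (it gives no separate proof), relying on precisely the substitution $n-k \geq n-\beta n = (1-\beta)n$. Your remark about the integrality of $(1-\beta)n$ is a reasonable bit of extra care but changes nothing of substance.
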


Finally, in Lemma~\ref{lemma:polytime}, we show that $\mathrm{FairRec}$ executes in polynomial time.

\begin{lemma}\label{lemma:polytime}
	The time complexity of $\mathrm{FairRec}$ has a worst case bound of $\mathcal{O}(mnk)$.
\end{lemma}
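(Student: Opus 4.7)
The plan is to bound the runtime by counting allocations and multiplying by the per-allocation work. The key observation is that at the end of $\mathrm{FairRec}$, exactly $mk$ product slots have been filled across the two phases (this is guaranteed by Lemma~\ref{lemma:customer_fairness}, which gives each of the $m$ customers exactly $k$ items). So if I can show that (i) each allocation in $\ALG~\ref{alg:greedy}$ costs $\mathcal{O}(n)$ time, and (ii) the auxiliary work in $\mathrm{FairRec}$ outside of the $\ALG~\ref{alg:greedy}$ calls is dominated by $\mathcal{O}(mnk)$, the bound follows.

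First I would analyze one invocation of $\ALG~\ref{alg:greedy}$. Its body is a while-loop whose inner for-loop (over the $m$ customers in $\sigma$) performs, per customer, a single argmax computation over the feasible products (Step~7) followed by constant-time updates to $B_{\sigma(i)}$, $F_{\sigma(i)}$, $S_p$, and $T$. The argmax is over $F_{\sigma(i)} \cap \{p : S_p \neq 0\} \subseteq P$, and can be evaluated in $\mathcal{O}(n)$ by a single linear scan using $V_{\sigma(i)}$ together with the flags stored in $F_{\sigma(i)}$ and $S$. Every successful iteration either decrements $T$ or terminates the procedure; an unsuccessful iteration (when $p = \emptyset$) terminates immediately. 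Consequently each iteration does $\mathcal{O}(n)$ work and produces at most one allocation, so a single call to $\ALG~\ref{alg:greedy}$ costs $\mathcal{O}(n)$ times the number of allocations it makes, plus $\mathcal{O}(n)$ for the (at most one) terminating unsuccessful iteration.

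Next I would sum over the two phases of $\mathrm{FairRec}$. The Phase~1 call to $\ALG~\ref{alg:greedy}$ produces at most $\ell\cdot n \leq mk$ allocations, and the Phase~2 call produces the remaining slots so that each customer holds exactly $k$ products. Thus the total number of allocations across both calls is exactly $mk$, yielding a combined cost of $\mathcal{O}(mnk)$. For the overhead in $\mathrm{FairRec}$ itself: the initialization of $\mathcal{A}^0$, of the vectors $S$ and $F_u$, of the ordering $\sigma$ (and its cyclic rotation $\sigma'$), and of scalars $\ell, T, \Lambda$ costs at most $\mathcal{O}(mn)$ altogether, which is dominated by $\mathcal{O}(mnk)$ since $k\geq 1$.

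I expect the only subtle point is making sure the $\mathcal{O}(n)$ bound per argmax is honest---in particular that checking $p'\in F_{\sigma(i)}$ and $S_{p'}\neq 0$ for each candidate is constant time. This is clean if $F_{\sigma(i)}$ is represented as a boolean array of length $n$ (updated in $\mathcal{O}(1)$ per removal), so that Step~7 is just a linear scan through $[n]$. With that data-structural convention in place, the two-phase accounting above gives the claimed $\mathcal{O}(mnk)$ worst-case bound, and no more delicate amortization is required.
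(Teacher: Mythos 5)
Your proposal is correct and follows essentially the same route as the paper's proof: count the $mk$ total allocations across the two phases and charge $\mathcal{O}(n)$ per allocation for the argmax in Step~7 of $\ALG$~\ref{alg:greedy}. The extra care you take with the data-structure convention for $F_{\sigma(i)}$ and with the initialization overhead only makes explicit what the paper leaves implicit.
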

\begin{proof}
	The time complexity of $\mathrm{FairRec}$ is the same as the complexity of $\ALG$~\ref{alg:greedy}. Over the two phases, $\ALG$~\ref{alg:greedy} allocates $mk$ items. For each allocation, it finds the maximum possible feasible producer which can be done in at most $\mathcal{O}(n)$ time. Thus, the total time complexity of the algorithm is $\mathcal{O}(mnk)$.
\end{proof}

%EXPERIMENTS
\begin{figure*}[t!]
	\center{
		\subfloat[{\bf GL-CUSTOM, Metric: \hyperref[metric_H]{$H$}}]{\pgfplotsset{width=0.33\textwidth,height=0.18\textwidth,compat=1.9}
			\begin{tikzpicture}
			\begin{axis}[
			xlabel={${k}$},
			ylabel={\hyperref[metric_H]{$H$}},
			xmin=0, xmax=20,
			ymin=0, ymax=1,
			xtick={0,5,10,15,20},
			ytick={0,0.2,0.4,0.6,0.8,1},
			%legend pos=north west,
			%legend style={at={(0.5,1.3)},anchor=south},
			ymajorgrids=true,
			grid style=dashed,
			ylabel style={rotate=-90},
			%ymode=log,
			%log basis y={10}
			]
			%Proposed Algorithm
			\addplot[
			color=blue,
			mark=o,
			mark size=2pt
			]
			coordinates {(1, 1.0)
				(2, 1.0)
				(3, 1.0)
				(4, 1.0)
				(5, 1.0)
				(6, 1.0)
				(7, 1.0)
				(8, 1.0)
				(9, 1.0)
				(10, 1.0)
				(11, 1.0)
				(12, 1.0)
				(13, 1.0)
				(14, 1.0)
				(15, 1)
				(16, 1.0)
				(17, 1.0)
				(18, 1.0)
				(19, 1.0)
				(20, 1.0)			
				
			};
			%\addlegendentry{}
			
			%Baseline1: Top-k
			\addplot[
			color=red,
			mark=x,
			mark size=2pt
			]
			coordinates {(1, 0.3742690058479532)
				(2, 0.38596491228070173)
				(3, 0.38011695906432746)
				(4, 0.39649122807017545)
				(5, 0.4070175438596491)
				(6, 0.4152046783625731)
				(7, 0.4245614035087719)
				(8, 0.4093567251461988)
				(9, 0.39415204678362575)
				(10, 0.39415204678362575)
				(11, 0.4070175438596491)
				(12, 0.41169590643274856)
				(13, 0.4269005847953216)
				(14, 0.43508771929824563)
				(15, 0.43391812865497076)
				(16, 0.447953216374269)
				(17, 0.43508771929824563)
				(18, 0.4456140350877193)
				(19, 0.45146198830409356)
				(20, 0.45497076023391814)			
			};
			%\addlegendentry{}
			
			% Baseline2: Random-k
			\addplot[
			color=black,
			mark=star,
			mark size=2pt
			]
			coordinates {(1, 0.5356725146198831)
				(2, 0.5239766081871345)
				(3, 0.5274853801169591)
				(4, 0.5532163742690058)
				(5, 0.5356725146198831)
				(6, 0.5403508771929825)
				(7, 0.5309941520467836)
				(8, 0.5391812865497077)
				(9, 0.5321637426900585)
				(10, 0.5450292397660819)
				(11, 0.5298245614035088)
				(12, 0.5426900584795321)
				(13, 0.5228070175438596)
				(14, 0.5497076023391813)
				(15, 0.5157894736842106)
				(16, 0.5157894736842106)
				(17, 0.5076023391812865)
				(18, 0.5216374269005848)
				(19, 0.5017543859649123)
				(20, 0.5157894736842106)
								
			};
			%\addlegendentry{}
			% Baseline-3 mixed-k
			\addplot[
			color=cyan,
			mark=triangle,
			mark size=2pt
			]
			coordinates{(1, 0.3742690058479532)
				(2, 0.3847953216374269)
				(3, 0.38011695906432746)
				(4, 0.3871345029239766)
				(5, 0.3906432748538012)
				(6, 0.4058479532163743)
				(7, 0.4093567251461988)
				(8, 0.4046783625730994)
				(9, 0.4269005847953216)
				(10, 0.4152046783625731)
				(11, 0.41754385964912283)
				(12, 0.4222222222222222)
				(13, 0.4198830409356725)
				(14, 0.4198830409356725)
				(15, 0.41403508771929826)
				(16, 0.42105263157894735)
				(17, 0.4093567251461988)
				(18, 0.40116959064327484)
				(19, 0.4128654970760234)
				(20, 0.40116959064327484)};
			
			% Baseline-4 poorest-k
			\addplot[
			color=brown,
			mark=square,
			mark size=1.5pt
			]
			coordinates{(1, 1.0)
				(2, 1.0)
				(3, 1.0)
				(4, 1.0)
				(5, 1.0)
				(6, 1.0)
				(7, 1.0)
				(8, 1.0)
				(9, 1.0)
				(10, 1.0)
				(11, 1.0)
				(12, 1.0)
				(13, 1.0)
				(14, 1.0)
				(15, 1)
				(16, 1.0)
				(17, 1.0)
				(18, 1.0)
				(19, 1.0)
				(20, 1.0)};
			
			\end{axis}
			\end{tikzpicture}\label{fig:gl_1_H}}
		\hfil
		\subfloat[{\bf GL-FACT, Metric: \hyperref[metric_H]{$H$}}]{\pgfplotsset{width=0.33\textwidth,height=0.18\textwidth,compat=1.9}
			\begin{tikzpicture}
			\begin{axis}[
			xlabel={${k}$},
			ylabel={\hyperref[metric_H]{$H$}},
			xmin=0, xmax=20,
			ymin=0, ymax=1,
			xtick={0,5,10,15,20},
			ytick={0,0.2,0.4,0.6,0.8,1},
			%legend pos=north west,
			%legend style={at={(0.5,1.3)},anchor=south},
			ymajorgrids=true,
			grid style=dashed,
			ylabel style={rotate=-90},
			%ymode=log,
			%log basis y={10}
			]
			%Proposed Algorithm
			\addplot[
			color=blue,
			mark=o,
			mark size=2pt
			]
			coordinates {(1, 1.0)
				(2, 1.0)
				(3, 1.0)
				(4, 1.0)
				(5, 1.0)
				(6, 1.0)
				(7, 1.0)
				(8, 1.0)
				(9, 1.0)
				(10, 1.0)
				(11, 1.0)
				(12, 1.0)
				(13, 1.0)
				(14, 1.0)
				(15, 1.0)
				(16, 1.0)
				(17, 1.0)
				(18, 1.0)
				(19, 1.0)
				(20, 1.0)

			};
			%\addlegendentry{}
			
			%Baseline1: Top-k
			\addplot[
			color=red,
			mark=x,
			mark size=2pt
			]
			coordinates {(1, 0.15321637426900586)
				(2, 0.17192982456140352)
				(3, 0.18128654970760233)
				(4, 0.19181286549707602)
				(5, 0.1976608187134503)
				(6, 0.2)
				(7, 0.20350877192982456)
				(8, 0.21052631578947367)
				(9, 0.2128654970760234)
				(10, 0.21754385964912282)
				(11, 0.2222222222222222)
				(12, 0.22807017543859648)
				(13, 0.2304093567251462)
				(14, 0.2327485380116959)
				(15, 0.23625730994152047)
				(16, 0.23859649122807017)
				(17, 0.2409356725146199)
				(18, 0.24210526315789474)
				(19, 0.24678362573099416)
				(20, 0.25029239766081873)

			};
			%\addlegendentry{}
			
			% Baseline2: Random-k
			\addplot[
			color=black,
			mark=star,
			mark size=2pt
			]
			coordinates {(1, 0.5625730994152047)
				(2, 0.5356725146198831)
				(3, 0.5309941520467836)
				(4, 0.5461988304093567)
				(5, 0.543859649122807)
				(6, 0.5333333333333333)
				(7, 0.5508771929824562)
				(8, 0.5321637426900585)
				(9, 0.5391812865497077)
				(10, 0.5274853801169591)
				(11, 0.5169590643274854)
				(12, 0.5508771929824562)
				(13, 0.5403508771929825)
				(14, 0.5380116959064327)
				(15, 0.5111111111111111)
				(16, 0.5076023391812865)
				(17, 0.52046783625731)
				(18, 0.5099415204678363)
				(19, 0.52046783625731)
				(20, 0.5087719298245614)

			};
			%\addlegendentry{}
			% Baseline-3 mixed-k
			\addplot[
			color=cyan,
			mark=triangle,
			mark size=2pt
			]
			coordinates{(1, 0.15321637426900586)
				(2, 0.1567251461988304)
				(3, 0.17660818713450294)
				(4, 0.17660818713450294)
				(5, 0.18947368421052632)
				(6, 0.18947368421052632)
				(7, 0.19064327485380117)
				(8, 0.19064327485380117)
				(9, 0.1976608187134503)
				(10, 0.20350877192982456)
				(11, 0.19883040935672514)
				(12, 0.20584795321637428)
				(13, 0.20584795321637428)
				(14, 0.20584795321637428)
				(15, 0.20818713450292398)
				(16, 0.21754385964912282)
				(17, 0.22105263157894736)
				(18, 0.21403508771929824)
				(19, 0.2198830409356725)
				(20, 0.22105263157894736)
				
				};
			
			% Baseline-4 poorest-k
			\addplot[
			color=brown,
			mark=square,
			mark size=1.5pt
			]
			coordinates{(1, 1.0)
				(2, 1.0)
				(3, 1.0)
				(4, 1.0)
				(5, 1.0)
				(6, 1.0)
				(7, 1.0)
				(8, 1.0)
				(9, 1.0)
				(10, 1.0)
				(11, 1.0)
				(12, 1.0)
				(13, 1.0)
				(14, 1.0)
				(15, 1)
				(16, 1.0)
				(17, 1.0)
				(18, 1.0)
				(19, 1.0)
				(20, 1.0)
				};
			
			\end{axis}
			\end{tikzpicture}\label{fig:gl_2_H}}
		\hfil
		\subfloat[{\bf LF, Metric: \hyperref[metric_H]{$H$}}]{\pgfplotsset{width=0.33\textwidth,height=0.18\textwidth,compat=1.9}
			\begin{tikzpicture}
			\begin{axis}[
			xlabel={${k}$},
			ylabel={\hyperref[metric_H]{$H$}},
			xmin=0, xmax=20,
			ymin=0, ymax=1,
			xtick={0,5,10,15,20},
			ytick={0,0.2,0.4,0.6,0.8,1},
			%legend pos=north west,
			%legend style={at={(0.5,1.3)},anchor=south},
			ymajorgrids=true,
			grid style=dashed,
			ylabel style={rotate=-90},
			%ymode=log,
			%log basis y={10}
			]
			%Proposed Algorithm
			\addplot[
			color=blue,
			mark=o,
			mark size=2pt
			]
			coordinates {(1, 1.0)
				(2, 1.0)
				(3, 1.0)
				(4, 1.0)
				(5, 1.0)
				(6, 1.0)
				(7, 1.0)
				(8, 1.0)
				(9, 1.0)
				(10, 1.0)
				(11, 1.0)
				(12, 1.0)
				(13, 1.0)
				(14, 1.0)
				(15, 1.0)
				(16, 1.0)
				(17, 1.0)
				(18, 1.0)
				(19, 1.0)
				(20, 1.0)			
				
			};
			%\addlegendentry{}
			
			%Baseline1: Top-k
			\addplot[
			color=red,
			mark=x,
			mark size=2pt
			]
			coordinates {(1, 1.0)
				(2, 1.0)
				(3, 1.0)
				(4, 1.0)
				(5, 1.0)
				(6, 1.0)
				(7, 1.0)
				(8, 1.0)
				(9, 1.0)
				(10, 0.5362409255898367)
				(11, 0.5631805807622504)
				(12, 0.5876247731397459)
				(13, 0.6126361161524501)
				(14, 0.6348117059891107)
				(15, 0.6557962794918331)
				(16, 0.67383166969147)
				(17, 0.6918670598911071)
				(18, 0.7057622504537205)
				(19, 0.4811705989110708)
				(20, 0.500397005444646)
								
			};
			%\addlegendentry{}
			
			% Baseline2: Random-k
			\addplot[
			color=black,
			mark=star,
			mark size=2pt
			]
			coordinates {(1, 1.0)
				(2, 1.0)
				(3, 1.0)
				(4, 1.0)
				(5, 1.0)
				(6, 1.0)
				(7, 1.0)
				(8, 1.0)
				(9, 1.0)
				(10, 0.6607304900181489)
				(11, 0.6922640653357531)
				(12, 0.7257259528130672)
				(13, 0.75130444646098)
				(14, 0.7756919237749547)
				(15, 0.8038793103448276)
				(16, 0.8196460980036298)
				(17, 0.8402903811252269)
				(18, 0.8546960072595281)
				(19, 0.6057168784029038)
				(20, 0.6279491833030852)
								
			};
			%\addlegendentry{}
			% Baseline-3 mixed-k
			\addplot[
			color=cyan,
			mark=triangle,
			mark size=2pt
			]
			coordinates{(1, 1.0)
				(2, 1.0)
				(3, 1.0)
				(4, 1.0)
				(5, 1.0)
				(6, 1.0)
				(7, 1.0)
				(8, 1.0)
				(9, 1.0)
				(10, 0.6189882032667876)
				(11, 0.6460980036297641)
				(12, 0.6801270417422868)
				(13, 0.7009414700544465)
				(14, 0.731510889292196)
				(15, 0.7518715970961888)
				(16, 0.7784142468239564)
				(17, 0.7908348457350273)
				(18, 0.8092105263157895)
				(19, 0.5516674228675136)
				(20, 0.5799115245009074)
				};
			% Baseline-4 poorest-k
			\addplot[
			color=brown,
			mark=square,
			mark size=1.5pt
			]
			coordinates{(1, 1.0)
				(2, 1.0)
				(3, 1.0)
				(4, 1.0)
				(5, 1.0)
				(6, 1.0)
				(7, 1.0)
				(8, 1.0)
				(9, 1.0)
				(10, 1.0)
				(11, 1.0)
				(12, 1.0)
				(13, 1.0)
				(14, 1.0)
				(15, 1.0)
				(16, 1.0)
				(17, 1.0)
				(18, 1.0)
				(19, 1.0)
				(20, 1.0)
				};
			\end{axis}
			\end{tikzpicture}\label{fig:lf_H}}
		\hfil
		\subfloat[{\bf GL-CUSTOM, Metric: \hyperref[metric_Z]{$Z$}}]{\pgfplotsset{width=0.33\textwidth,height=0.18\textwidth,compat=1.9}
			\begin{tikzpicture}
			\begin{axis}[
			xlabel={${k}$},
			ylabel={\hyperref[metric_Z]{$Z$}},
			xmin=0, xmax=20,
			ymin=0.88, ymax=1,
			xtick={0,5,10,15,20},
			ytick={0.88,0.9,0.95,1},
			%legend pos=north west,
			%legend style={at={(0.5,1.3)},anchor=south},
			ymajorgrids=true,
			grid style=dashed,
			ylabel style={rotate=-90}
			]
			%Proposed Algorithm
			\addplot[
			color=blue,
			mark=o,
			mark size=2pt
			]
			coordinates {(1, 0.9999697942850987)
				(2, 0.9999831809801839)
				(3, 0.9999865163301157)
				(4, 0.999987594453314)
				(5, 0.9999889085145333)
				(6, 0.9999898586498918)
				(7, 0.9999891217852784)
				(8, 0.9999892164653388)
				(9, 0.9999893236670706)
				(10, 0.9999893945065103)
				(11, 0.9999903361298428)
				(12, 0.9999898188423013)
				(13, 0.9999903191318937)
				(14, 0.9999893396986967)
				(15, 1)
				(16, 0.9999998513263502)
				(17, 0.9999997079140589)
				(18, 0.9999995460519093)
				(19, 0.9999993515601615)
				(20, 0.9999991439597733)
				
			};
			%\addlegendentry{}
			
			%Baseline1: Top-k
			\addplot[
			color=red,
			mark=x,
			mark size=2pt
			]
			coordinates {(1, 0.8932366664995447)
				(2, 0.9351977662770863)
				(3, 0.9521050438114186)
				(4, 0.9612716015809635)
				(5, 0.9668204595864393)
				(6, 0.9700320765011515)
				(7, 0.9722983101021868)
				(8, 0.9743340026187319)
				(9, 0.9754507827928609)
				(10, 0.9771059119075864)
				(11, 0.9779184031870628)
				(12, 0.9792144331737463)
				(13, 0.980227737057826)
				(14, 0.9810360783337784)
				(15, 0.9813752858758872)
				(16, 0.9820039178465332)
				(17, 0.9827499680020442)
				(18, 0.9829699977543527)
				(19, 0.9835640879872104)
				(20, 0.984106342444825)
				
			};
			%\addlegendentry{}
			
			% Baseline2: Random-k
			\addplot[
			color=black,
			mark=star,
			mark size=2pt
			]
			coordinates {(1, 0.9941184797626098)
				(2, 0.9972655973462194)
				(3, 0.9981234072289198)
				(4, 0.9985332209680202)
				(5, 0.9987915941793496)
				(6, 0.9990232037293412)
				(7, 0.9991534988751062)
				(8, 0.9992781833000478)
				(9, 0.9993706001012128)
				(10, 0.9994249996817561)
				(11, 0.9994833449904865)
				(12, 0.9995780322951591)
				(13, 0.9995681795776103)
				(14, 0.9996165813942961)
				(15, 0.999646856662305)
				(16, 0.9996374875357916)
				(17, 0.9996908467063693)
				(18, 0.9996922125836588)
				(19, 0.999715866846988)
				(20, 0.9997437819466959)
				
			};
			%\addlegendentry{}
			
			% Baseline-3 mixed-k
			\addplot[
			color=cyan,
			mark=triangle,
			mark size=2pt
			]
			coordinates{(1, 0.8932366664995447)
				(2, 0.9723097989182526)
				(3, 0.970826682231367)
				(4, 0.9826649830913539)
				(5, 0.982498852024745)
				(6, 0.9872683357812961)
				(7, 0.9870694952424083)
				(8, 0.9900621632079168)
				(9, 0.9895785879699214)
				(10, 0.9914583724660571)
				(11, 0.9908869390037283)
				(12, 0.992382728227613)
				(13, 0.9917358307457333)
				(14, 0.9929605908502663)
				(15, 0.9926152256996312)
				(16, 0.9935184496537927)
				(17, 0.9931399809087036)
				(18, 0.9937658231212452)
				(19, 0.9937526466622133)
				(20, 0.99423033891832)
				};
			
			% Baseline-4 poorest-k
			\addplot[
			color=brown,
			mark=square,
			mark size=1.5pt
			]
			coordinates{(1, 0.9999735856561476)
				(2, 0.9999875839422812)
				(3, 0.9999923273376881)
				(4, 0.999994713936848)
				(5, 0.9999961507188789)
				(6, 0.9999971105956681)
				(7, 0.9999977972163957)
				(8, 0.99999831272733)
				(9, 0.9999987140041593)
				(10, 0.9999990352300107)
				(11, 0.9999992981864643)
				(12, 0.9999995174099566)
				(13, 0.9999997029729296)
				(14, 0.9999998620752498)
				(15, 1)
				(16, 0.9999998947136202)
				(17, 0.9999998267467178)
				(18, 0.9999997859713133)
				(19, 0.9999997651681464)
				(20, 0.999999759115476)
				};
			\end{axis}
			\end{tikzpicture}\label{fig:gl_1_Z}}
		\hfil
		\subfloat[{\bf GL-FACT, Metric: \hyperref[metric_Z]{$Z$}}]{\pgfplotsset{width=0.33\textwidth,height=0.18\textwidth,compat=1.9}
			\begin{tikzpicture}
			\begin{axis}[
			xlabel={${k}$},
			ylabel={\hyperref[metric_Z]{$Z$}},
			xmin=0, xmax=20,
			ymin=0.7, ymax=1,
			xtick={0,5,10,15,20},
			%ytick={0,0.1,0.1,0.25},
			%legend pos=north west,
			%legend style={at={(0.5,1.3)},anchor=south},
			ymajorgrids=true,
			grid style=dashed,
			ylabel style={rotate=-90}
			]
			%Proposed Algorithm
			\addplot[
			color=blue,
			mark=o,
			mark size=2pt
			]
			coordinates {(1, 0.9999396256969204)
				(2, 0.9999555170533347)
				(3, 0.999965031208885)
				(4, 0.9999659333612415)
				(5, 0.9999677228119941)
				(6, 0.9999666103006767)
				(7, 0.9999621513197136)
				(8, 0.9999603286033429)
				(9, 0.9999591407931322)
				(10, 0.9999545814276304)
				(11, 0.9999523071900172)
				(12, 0.9999497531414681)
				(13, 0.9999461060285707)
				(14, 0.9999438762258006)
				(15, 1)
				(16, 0.9999996357599408)
				(17, 0.9999989958645984)
				(18, 0.9999979086790814)
				(19, 0.999996565776027)
				(20, 0.9999953633843445)
					
			};
			%\addlegendentry{}
			
			%Baseline1: Top-k
			\addplot[
			color=red,
			mark=x,
			mark size=2pt
			]
			coordinates {(1, 0.7176638118132861)
				(2, 0.7455004780267992)
				(3, 0.7626021157613833)
				(4, 0.7752998760732408)
				(5, 0.7846523305677298)
				(6, 0.7923528016642492)
				(7, 0.7997558893232991)
				(8, 0.8063359955453037)
				(9, 0.8121801704433762)
				(10, 0.8173282973430156)
				(11, 0.8219469256963984)
				(12, 0.8259793767145674)
				(13, 0.8300866921412833)
				(14, 0.8338350964631197)
				(15, 0.8371405186639884)
				(16, 0.8404599332144972)
				(17, 0.8434366376180288)
				(18, 0.8463569087210795)
				(19, 0.8490633765040714)
				(20, 0.8514541177377445)
				
			};
			%\addlegendentry{}
			
			% Baseline2: Random-k
			\addplot[
			color=black,
			mark=star,
			mark size=2pt
			]
			coordinates {(1, 0.9947192700381087)
				(2, 0.9969978004748151)
				(3, 0.9980751767510534)
				(4, 0.9987094094626768)
				(5, 0.9989278970877387)
				(6, 0.9990428034015294)
				(7, 0.9991834336255927)
				(8, 0.9992964069680403)
				(9, 0.9993864715151106)
				(10, 0.9994437428408329)
				(11, 0.9995101272824426)
				(12, 0.9995660231065734)
				(13, 0.9995683600721268)
				(14, 0.9996200527876683)
				(15, 0.9996259472839225)
				(16, 0.9996282401422408)
				(17, 0.9996686688935053)
				(18, 0.9996755309124871)
				(19, 0.9997217348465037)
				(20, 0.9997279876851688)			
				
			};
			%\addlegendentry{}
			% Baseline-3 mixed-k
			\addplot[
			color=cyan,
			mark=triangle,
			mark size=2pt
			]
			coordinates{(1, 0.7176638118132861)
				(2, 0.9146205655183863)
				(3, 0.878172246707361)
				(4, 0.9246544486357833)
				(5, 0.9057350348185181)
				(6, 0.9310103239972296)
				(7, 0.9187985498399571)
				(8, 0.9351772801865127)
				(9, 0.9264238106548425)
				(10, 0.9387505822108574)
				(11, 0.9319190969074457)
				(12, 0.9412955919914375)
				(13, 0.9360114026494863)
				(14, 0.9437817387877127)
				(15, 0.939621835016416)
				(16, 0.9463623053259648)
				(17, 0.9426495723972481)
				(18, 0.9479549068788827)
				(19, 0.9448907899913421)
				(20, 0.9498333414584168)
				};
			
			% Baseline-4 poorest-k
			\addplot[
			color=brown,
			mark=square,
			mark size=1.5pt
			]
			coordinates{(1, 0.9999735856561476)
				(2, 0.9999875839422812)
				(3, 0.9999923273376881)
				(4, 0.999994713936848)
				(5, 0.9999961507188789)
				(6, 0.9999971105956681)
				(7, 0.9999977972163957)
				(8, 0.99999831272733)
				(9, 0.9999987140041593)
				(10, 0.9999990352300107)
				(11, 0.9999992981864643)
				(12, 0.9999995174099566)
				(13, 0.9999997029729296)
				(14, 0.9999998620752498)
				(15, 1)
				(16, 0.9999998947136202)
				(17, 0.9999998267467178)
				(18, 0.9999997859713133)
				(19, 0.9999997651681464)
				(20, 0.999999759115476)
				};
			
			\end{axis}
			\end{tikzpicture}\label{fig:gl_2_Z}}
		\hfil
		\subfloat[{\bf LF, Metric: \hyperref[metric_Z]{$Z$}}]{\pgfplotsset{width=0.33\textwidth,height=0.18\textwidth,compat=1.9}
			\begin{tikzpicture}
			\begin{axis}[
			xlabel={${k}$},
			ylabel={\hyperref[metric_Z]{$Z$}},
			xmin=0, xmax=20,
			ymin=0.7, ymax=1,
			xtick={0,5,10,15,20},
			ytick={0.7,0.8,0.9,1},
			%legend pos=north west,
			%legend style={at={(0.5,1.3)},anchor=south},
			ymajorgrids=true,
			grid style=dashed,
			ylabel style={rotate=-90}
			]
			%Proposed Algorithm
			\addplot[
			color=blue,
			mark=o,
			mark size=2pt
			]
			coordinates {(1, 0.7526361891413638)
				(2, 0.8108291637299694)
				(3, 0.8419384616588211)
				(4, 0.8618350473417699)
				(5, 0.8764344262352939)
				(6, 0.8878257261458331)
				(7, 0.8970486346021643)
				(8, 0.9043172466256092)
				(9, 0.9106088877893651)
				(10, 0.9968333971479676)
				(11, 0.9921247689776508)
				(12, 0.9881851497427461)
				(13, 0.9850113100956555)
				(14, 0.9823340045172215)
				(15, 0.979996213188703)
				(16, 0.9782425725370162)
				(17, 0.9767969484148185)
				(18, 0.9754809946956532)
				(19, 0.9994823649736437)
				(20, 0.9975310687214431)

			};
			%\addlegendentry{}
			
			%Baseline1: Top-k
			\addplot[
			color=red,
			mark=x,
			mark size=2pt
			]
			coordinates {(1, 0.7526361891413638)
				(2, 0.8108291637299694)
				(3, 0.8419384616588211)
				(4, 0.8618350473417699)
				(5, 0.8764344262352939)
				(6, 0.8878257261458331)
				(7, 0.8970486346021643)
				(8, 0.9043172466256092)
				(9, 0.9106088877893651)
				(10, 0.9161626586723886)
				(11, 0.9204578518300951)
				(12, 0.9241974447994065)
				(13, 0.9278236633686459)
				(14, 0.9309864942222632)
				(15, 0.9338319003168735)
				(16, 0.9362368447696351)
				(17, 0.9386270358133529)
				(18, 0.9403671757777169)
				(19, 0.9420074645401821)
				(20, 0.9438539852110053)

			};
			%\addlegendentry{}
			
			% Baseline2: Random-k
			\addplot[
			color=black,
			mark=star,
			mark size=2pt
			]
			coordinates {(1, 0.7639266205940035)
				(2, 0.8281952511102977)
				(3, 0.8627445820081104)
				(4, 0.8854985730184916)
				(5, 0.9018125772781138)
				(6, 0.9153293826316845)
				(7, 0.9248134092218625)
				(8, 0.9330190264786918)
				(9, 0.9397462998667486)
				(10, 0.9453238009568177)
				(11, 0.9496457484062013)
				(12, 0.9541552709840722)
				(13, 0.9573933874371768)
				(14, 0.9606261965972633)
				(15, 0.9639587404699339)
				(16, 0.9655452984075262)
				(17, 0.9679013996477931)
				(18, 0.9698272020666501)
				(19, 0.9715483565628751)
				(20, 0.9728081979366235)

			};
			%\addlegendentry{}
			
			% Baseline-3 mixed-k
			\addplot[
			color=cyan,
			mark=triangle,
			mark size=2pt
			]
			coordinates{(1, 0.7526361891413638)
				(2, 0.8214428540473272)
				(3, 0.8525413601551142)
				(4, 0.8788874132982983)
				(5, 0.8917112376938803)
				(6, 0.9063353191188368)
				(7, 0.9132238244804736)
				(8, 0.9234370282007407)
				(9, 0.9279649680634943)
				(10, 0.9358704161837272)
				(11, 0.9389561813188942)
				(12, 0.9439074689819916)
				(13, 0.9464713892828666)
				(14, 0.9506649229990941)
				(15, 0.9529265199137457)
				(16, 0.9564742688313292)
				(17, 0.9573034952622381)
				(18, 0.9600813058332126)
				(19, 0.9615737108380881)
				(20, 0.9635176272815913)
				};
			
			% Baseline-4 poorest-k
			\addplot[
			color=brown,
			mark=square,
			mark size=1.5pt
			]
			coordinates{(1, 0.7717118232210773)
				(2, 0.8426041021120017)
				(3, 0.8840734268539882)
				(4, 0.9134963810029252)
				(5, 0.9363185972884983)
				(6, 0.9549657057448282)
				(7, 0.9707316113013115)
				(8, 0.9843886598937087)
				(9, 0.9964350304865162)
				(10, 0.9975587350370257)
				(11, 0.9952946353148083)
				(12, 0.9941837740526697)
				(13, 0.9939001904487457)
				(14, 0.9942196300097231)
				(15, 0.9949839208989543)
				(16, 0.9960791414020759)
				(17, 0.9974217721013414)
				(18, 0.9989496497283519)
				(19, 0.9995976355651346)
				(20, 0.9987416652495145)
				};
			
			\end{axis}
			\end{tikzpicture}\label{fig:lf_Z}}
		\hfil
		\subfloat[{\bf GL-CUSTOM, Metric: \hyperref[metric_L]{$L$}}]{\pgfplotsset{width=0.33\textwidth,height=0.18\textwidth,compat=1.9}
			\begin{tikzpicture}
			\begin{axis}[
			xlabel={${k}$},
			ylabel={\hyperref[metric_L]{$L$}},
			xmin=0, xmax=20,
			ymin=0, ymax=0.2,
			xtick={0,5,10,15,20},
			ytick={0,0.1,0.2},
			%legend pos=north west,
			%legend style={at={(0.5,1.3)},anchor=south},
			ymajorgrids=true,
			grid style=dashed,
			ylabel style={rotate=-90},
			%ymode=log,
			%log basis y={10}
			]
			%Proposed Algorithm
			\addplot[
			color=blue,
			mark=o,
			mark size=2pt
			]
			coordinates {(1, 0.15063780710653246)
				(2, 0.13358364869330705)
				(3, 0.12871858946884923)
				(4, 0.1253107725112325)
				(5, 0.1222404502048407)
				(6, 0.11868824369637608)
				(7, 0.11657571256104575)
				(8, 0.11369339269148934)
				(9, 0.11386827292125684)
				(10, 0.11468892962868707)
				(11, 0.11634239609946556)
				(12, 0.11650353136024133)
				(13, 0.11631025778716965)
				(14, 0.11588266702456562)
				(15, 0.11781313565584825)
				(16, 0.11862253664194923)
				(17, 0.1177316937654791)
				(18, 0.11628274549231145)
				(19, 0.1154079599095793)
				(20, 0.11420550431363932)

			};
			%\addlegendentry{}
			
			%Baseline1: Top-k
			\addplot[
			color=red,
			mark=x,
			mark size=2pt
			]
			coordinates {(1, 0)
				(2, 0)
				(3, 0)
				(4, 0)
				(5, 0)
				(6, 0)
				(7, 0)
				(8, 0)
				(9, 0)
				(10, 0)
				(11, 0)
				(12, 0)
				(13, 0)
				(14, 0)
				(15, 0)
				(16, 0)
				(17, 0)
				(18, 0)
				(19, 0)
				(20, 0)
								
			};
			%\addlegendentry{}
			
			% Baseline2: Random-k
			\addplot[
			color=black,
			mark=star,
			mark size=2pt
			]
			coordinates {(1, 0.16695000144987354)
				(2, 0.14054804615821878)
				(3, 0.13468944380070239)
				(4, 0.1329746461316798)
				(5, 0.1310467557854786)
				(6, 0.12681274237081108)
				(7, 0.12277404602213084)
				(8, 0.12085626716834882)
				(9, 0.11848536611348301)
				(10, 0.1191238201480736)
				(11, 0.12006752811278204)
				(12, 0.11975458445187187)
				(13, 0.11920188694609973)
				(14, 0.11861176792041096)
				(15, 0.12059009396391591)
				(16, 0.12233734860134171)
				(17, 0.12001730255816244)
				(18, 0.11801427312768357)
				(19, 0.11651887695185992)
				(20, 0.11589057889836077)

			};
			%\addlegendentry{}
			% Baseline-3 mixed-k
			\addplot[
			color=cyan,
			mark=triangle,
			mark size=2pt
			]
			coordinates{(1, 0.0)
				(2, 0.13174919478943303)
				(3, 0.11212666080236035)
				(4, 0.11633447200247393)
				(5, 0.10419826032313874)
				(6, 0.10301024686608246)
				(7, 0.0944586516231857)
				(8, 0.09806686609069216)
				(9, 0.09258326260006153)
				(10, 0.095148419035753)
				(11, 0.08745635097570739)
				(12, 0.09092889837670186)
				(13, 0.08521009621049765)
				(14, 0.09020981096268477)
				(15, 0.08643425728064809)
				(16, 0.08922934134544222)
				(17, 0.0878111048730379)
				(18, 0.08972106105255674)
				(19, 0.08575019329617667)
				(20, 0.08632857292630775)
				};
			
			% Baseline-4 poorest-k
			\addplot[
			color=brown,
			mark=square,
			mark size=1.5pt
			]
			coordinates{(1, 0.15159185095513575)
				(2, 0.134695392256766)
				(3, 0.12949875523422402)
				(4, 0.12655894931901973)
				(5, 0.1234695732830605)
				(6, 0.11976863806904799)
				(7, 0.1178777271226391)
				(8, 0.1150800389874694)
				(9, 0.11529476649169731)
				(10, 0.11596938454916962)
				(11, 0.11757163190452806)
				(12, 0.11794015342763986)
				(13, 0.11770574947710379)
				(14, 0.11723891123678556)
				(15, 0.11781313565584825)
				(16, 0.11872528614021614)
				(17, 0.11789836296390656)
				(18, 0.1165450282681505)
				(19, 0.1157293554679767)
				(20, 0.1145785215573365)
				};
			
			\end{axis}
			\end{tikzpicture}\label{fig:gl_1_L}}
		\hfil
		\subfloat[{\bf GL-FACT, Metric: \hyperref[metric_L]{$L$}}]{\pgfplotsset{width=0.33\textwidth,height=0.18\textwidth,compat=1.9}
			\begin{tikzpicture}
			\begin{axis}[
			xlabel={${k}$},
			ylabel={\hyperref[metric_L]{$L$}},
			xmin=0, xmax=20,
			ymin=0, ymax=0.2,
			xtick={0,5,10,15,20},
			ytick={0,0.1,0.2},
			%legend pos=north west,
			%legend style={at={(0.5,1.3)},anchor=south},
			ymajorgrids=true,
			grid style=dashed,
			ylabel style={rotate=-90},
			%ymode=log,
			%log basis y={10}
			]
			%Proposed Algorithm
			\addplot[
			color=blue,
			mark=o,
			mark size=2pt
			]
			coordinates {(1, 0.091729875638232)
				(2, 0.09758047567449074)
				(3, 0.1010643557199388)
				(4, 0.1048405587645209)
				(5, 0.10776499547207413)
				(6, 0.11013299022667537)
				(7, 0.11240628595774407)
				(8, 0.11435475122902035)
				(9, 0.11590841237423416)
				(10, 0.11741732220441725)
				(11, 0.11852532551278633)
				(12, 0.11990570185411359)
				(13, 0.12096980929792109)
				(14, 0.12258803662529964)
				(15, 0.12361816347669605)
				(16, 0.12426885681486527)
				(17, 0.1253976746204187)
				(18, 0.12632563941104036)
				(19, 0.12711679249422894)
				(20, 0.12787155355483099)

			};
			%\addlegendentry{}
			
			%Baseline1: Top-k
			\addplot[
			color=red,
			mark=x,
			mark size=2pt
			]
			coordinates {(1, 0)
				(2, 0)
				(3, 0)
				(4, 0)
				(5, 0)
				(6, 0)
				(7, 0)
				(8, 0)
				(9, 0)
				(10, 0)
				(11, 0)
				(12, 0)
				(13, 0)
				(14, 0)
				(15, 0)
				(16, 0)
				(17, 0)
				(18, 0)
				(19, 0)
				(20, 0)
								
			};
			%\addlegendentry{}
		
			% Baseline2: Random-k
			\addplot[
			color=black,
			mark=star,
			mark size=2pt
			]
			coordinates {(1, 0.09516648814875438)
				(2, 0.1012635130674154)
				(3, 0.1053452769913435)
				(4, 0.10797076396625027)
				(5, 0.1097954442268999)
				(6, 0.11174979912799303)
				(7, 0.11332531711377004)
				(8, 0.11544438040435741)
				(9, 0.11667709695569659)
				(10, 0.1171019292763498)
				(11, 0.11939028147214557)
				(12, 0.12032900506627806)
				(13, 0.12191004194885334)
				(14, 0.12215510801618752)
				(15, 0.12417236136511105)
				(16, 0.12489840314784689)
				(17, 0.12482082310443984)
				(18, 0.12718904258980665)
				(19, 0.12676400115358616)
				(20, 0.12800440737367835)

			};
			%\addlegendentry{}
			% Baseline-3 mixed-k
			\addplot[
			color=cyan,
			mark=triangle,
			mark size=2pt
			]
			coordinates{(1, 0.0)
				(2, 0.059870460166441854)
				(3, 0.04120510115823441)
				(4, 0.05962674818060364)
				(5, 0.05166441638692602)
				(6, 0.06428035894170778)
				(7, 0.055316943041019404)
				(8, 0.06485004920204032)
				(9, 0.05955792594574708)
				(10, 0.0660650919722464)
				(11, 0.062158777225926214)
				(12, 0.06715170432767836)
				(13, 0.06383641557318105)
				(14, 0.06853310409345548)
				(15, 0.064292453297279)
				(16, 0.06972229676356045)
				(17, 0.06562465931062567)
				(18, 0.07062141344251469)
				(19, 0.06700564102666685)
				(20, 0.0713988887130878)
				
				};
			
			% Baseline-4 poorest-k
			\addplot[
			color=brown,
			mark=square,
			mark size=1.5pt
			]
			coordinates{(1, 0.09197996139251075)
				(2, 0.09793467773633713)
				(3, 0.10146253489789028)
				(4, 0.1052396573107628)
				(5, 0.10817940034003216)
				(6, 0.11050539415298133)
				(7, 0.11267242777822876)
				(8, 0.11456931000035189)
				(9, 0.11610839058146254)
				(10, 0.11754847197512311)
				(11, 0.11860090042100091)
				(12, 0.11996510923137564)
				(13, 0.12101368304486385)
				(14, 0.12264514485462995)
				(15, 0.12361816347669605)
				(16, 0.1242730436526083)
				(17, 0.1254115582884779)
				(18, 0.12633238566978405)
				(19, 0.1271336874366622)
				(20, 0.12789023105284825)
				};
			
			\end{axis}
			\end{tikzpicture}\label{fig:gl_2_L}}
		\hfil
		\subfloat[{\bf LF, Metric: \hyperref[metric_L]{$L$}}]{\pgfplotsset{width=0.33\textwidth,height=0.18\textwidth,compat=1.9}
			\begin{tikzpicture}
			\begin{axis}[
			xlabel={${k}$},
			ylabel={\hyperref[metric_L]{$L$}},
			xmin=0, xmax=20,
			ymin=0, ymax=0.3,
			xtick={0,5,10,15,20},
			ytick={0,0.1,0.2,0.3},
			%legend pos=north west,
			%legend style={at={(0.5,1.3)},anchor=south},
			ymajorgrids=true,
			grid style=dashed,
			ylabel style={rotate=-90},
			%ymode=log,
			%log basis y={10}
			]
			%Proposed Algorithm
			\addplot[
			color=blue,
			mark=o,
			mark size=2pt
			]
			coordinates {(1, 0.0)
				(2, 0.0)
				(3, 0.0)
				(4, 0.0)
				(5, 0.0)
				(6, 0.0)
				(7, 0.0)
				(8, 0.0)
				(9, 0.0)
				(10, 0.1460956405021171)
				(11, 0.1358410754997925)
				(12, 0.12665641516702744)
				(13, 0.11575536273162881)
				(14, 0.10743678914476311)
				(15, 0.1004601782970098)
				(16, 0.09388104384395209)
				(17, 0.08715112234605377)
				(18, 0.08252987650194095)
				(19, 0.15419832253135005)
				(20, 0.14851570211706927)

			};
			%\addlegendentry{}
			%Baseline1: Top-k
			\addplot[
			color=red,
			mark=x,
			mark size=2pt
			]
			coordinates {(1, 0)
				(2, 0)
				(3, 0)
				(4, 0)
				(5, 0)
				(6, 0)
				(7, 0)
				(8, 0)
				(9, 0)
				(10, 0)
				(11, 0)
				(12, 0)
				(13, 0)
				(14, 0)
				(15, 0)
				(16, 0)
				(17, 0)
				(18, 0)
				(19, 0)
				(20, 0)
								
			};
			%\addlegendentry{}
			
			% Baseline2: Random-k
			\addplot[
			color=black,
			mark=star,
			mark size=2pt
			]
			coordinates {(1, 0.0855263157894737)
				(2, 0.13961358136721114)
				(3, 0.17948852465214754)
				(4, 0.2035390199637022)
				(5, 0.22385725898798692)
				(6, 0.23607561100046312)
				(7, 0.24642692440021094)
				(8, 0.25309513456774124)
				(9, 0.2577815542808172)
				(10, 0.25891711588597954)
				(11, 0.2628699246156143)
				(12, 0.25601338338148777)
				(13, 0.25346050401325193)
				(14, 0.2537623149876443)
				(15, 0.2522769146778486)
				(16, 0.25088792638920243)
				(17, 0.24822556525275097)
				(18, 0.24632146505073915)
				(19, 0.2436322896554283)
				(20, 0.24130118054055735)

			};
			%\addlegendentry{}
			% Baseline-3 mixed-k
			\addplot[
			color=cyan,
			mark=triangle,
			mark size=2pt
			]
			coordinates{(1, 0.0)
				(2, 0.07915032732693807)
				(3, 0.07276697940973118)
				(4, 0.1217105263157894)
				(5, 0.11606886288998348)
				(6, 0.1527298073095487)
				(7, 0.14398852039647209)
				(8, 0.16988380818490884)
				(9, 0.16273390462506868)
				(10, 0.17937655741364925)
				(11, 0.16964832003610955)
				(12, 0.18276512164913641)
				(13, 0.1777748390785994)
				(14, 0.18581999141209696)
				(15, 0.17989401061807136)
				(16, 0.18595584862323883)
				(17, 0.1782561141624036)
				(18, 0.18391787082219657)
				(19, 0.17828717782106934)
				(20, 0.18091008791848934)
				};
				
			% Baseline-4 poorest-k
			\addplot[
			color=brown,
			mark=square,
			mark size=1.5pt
			]
			coordinates{(1, 0.08735065033272837)
				(2, 0.1456943814277072)
				(3, 0.18027261040532366)
				(4, 0.1999213191023536)
				(5, 0.20677198113819034)
				(6, 0.20432607241392378)
				(7, 0.19425314158895154)
				(8, 0.18016768549181297)
				(9, 0.1604558495736468)
				(10, 0.1615720335321051)
				(11, 0.17011831779583458)
				(12, 0.17629953853773048)
				(13, 0.1782605020962748)
				(14, 0.17698706865704303)
				(15, 0.175619241561999)
				(16, 0.17228299561910174)
				(17, 0.16756789594905552)
				(18, 0.16150736589669298)
				(19, 0.15889830362074867)
				(20, 0.16368359739963642)
				};
			
			\end{axis}
			\end{tikzpicture}\label{fig:lf_L}}
		\vfil
		\subfloat{\pgfplotsset{width=.5\textwidth,compat=1.9}
			\begin{tikzpicture}
			\begin{customlegend}[legend entries={{FairRec},{Top-$k$},{Random-$k$},{Mixed-$k$},{PR-$k$}},legend columns=5,legend style={/tikz/every even column/.append style={column sep=0.5cm}}]
			\addlegendimage{blue,mark=o,sharp plot}
			\addlegendimage{red,mark=x,sharp plot}
			\addlegendimage{black,mark=star,sharp plot}
			\addlegendimage{cyan,mark=triangle,sharp plot}
			\addlegendimage{brown,mark=square,sharp plot}
			\end{customlegend}
			\end{tikzpicture}}
	}\caption{Producer-Side Performances with MMS Guarantee. First row: fraction of satisfied producers (\hyperref[metric_H]{$H$}). Second row: inequality in producer exposures (\hyperref[metric_Z]{$Z$}). Third row: exposure loss on producers (\hyperref[metric_L]{$L$}).}\label{fig:producer_side_mms}
\end{figure*}
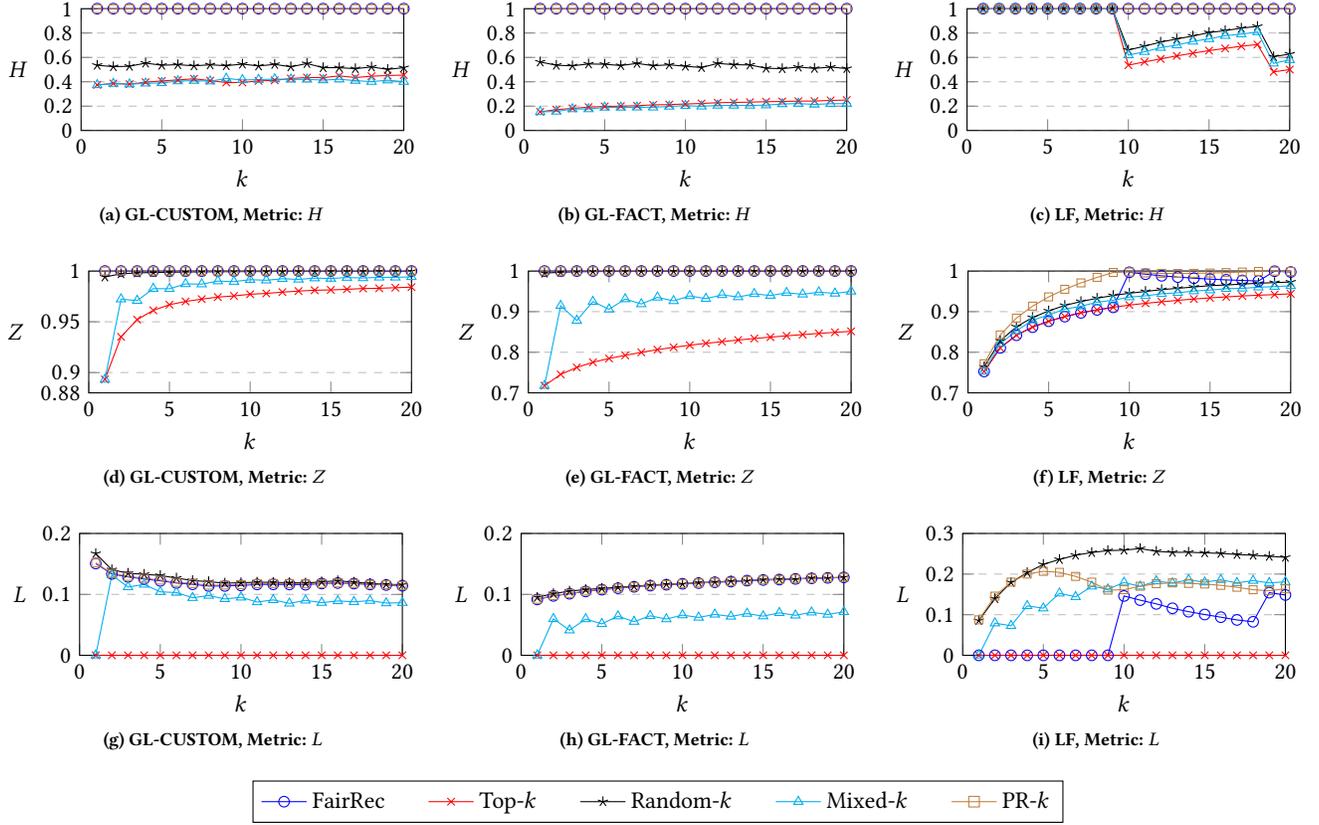
\begin{figure*}[t!]
	\center{
		
		\subfloat[{GL-CUSTOM, Metric: \hyperref[metric_Y]{$Y$}}]{\pgfplotsset{width=0.29\textwidth,height=0.19\textwidth,compat=1.9}
			\begin{tikzpicture}
			\begin{axis}[
			xlabel={${k}$},
			ylabel={\hyperref[metric_Y]{$Y$}},
			xmin=0, xmax=20,
			ymin=0, ymax=0.023,
			xtick={0,5,10,15,20},
			ytick={0,0.01,0.02},
			%extra y ticks = {0,0.01,0.02},
			%extra y tick labels={0,0.01,0.02},
			%legend pos=north west,
			%legend style={at={(0.5,1.3)},anchor=south},
			ymajorgrids=true,
			grid style=dashed,
			ylabel style={rotate=-90},
			yticklabel style={
				/pgf/number format/fixed,
				/pgf/number format/precision=2},
			%yticklabel style={/pgf/number format/.cd,fixed,precision=2}
			%ymode=log,
			%log basis y={10}
			]
			%Proposed Algorithm
			\addplot[
			color=blue,
			mark=o,
			mark size=2pt
			]
			coordinates {(1, 0.0017043940014530947)
				(2, 0.0004297908599545868)
				(3, 0.00025254504080081657)
				(4, 0.0001825527827971225)
				(5, 0.00014599369289113522)
				(6, 0.000123931801333511)
				(7, 0.00010423651068556081)
				(8, 8.73786845939502e-05)
				(9, 7.651386569669561e-05)
				(10, 6.868359034227222e-05)
				(11, 5.4298185598470845e-05)
				(12, 5.1093906221809745e-05)
				(13, 4.9694403317882844e-05)
				(14, 4.617038263366066e-05)
				(15, 4.3194117053634275e-05)
				(16, 3.6148393335023115e-05)
				(17, 3.577481581319694e-05)
				(18, 3.224245668502598e-05)
				(19, 2.911229499068566e-05)
				(20, 2.66331347177769e-05)
				
			};
			%\addlegendentry{}
			
			%Baseline1: Top-k
			\addplot[
			color=red,
			mark=x,
			mark size=2pt
			]
			coordinates {(1, 0.0)
				(2, 0.0)
				(3, 0.0)
				(4, 0.0)
				(5, 0.0)
				(6, 0.0)
				(7, 0.0)
				(8, 0.0)
				(9, 0.0)
				(10, 0.0)
				(11, 0.0)
				(12, 0.0)
				(13, 0.0)
				(14, 0.0)
				(15, 0.0)
				(16, 0.0)
				(17, 0.0)
				(18, 0.0)
				(19, 0.0)
				(20, 0.0)				
			};
			%\addlegendentry{}
			
			% Baseline2: Random-k
			\addplot[
			color=black,
			mark=star,
			mark size=2pt
			]
			coordinates {(1, 0.01793699896262849)
				(2, 0.017254360420913325)
				(3, 0.017555637942408508)
				(4, 0.01762009723599878)
				(5, 0.018024716334417018)
				(6, 0.0180258934413201)
				(7, 0.018373973705397974)
				(8, 0.018495244578696536)
				(9, 0.018723102951333316)
				(10, 0.01904744923170269)
				(11, 0.01911070273947384)
				(12, 0.019426219003492465)
				(13, 0.019583444149473488)
				(14, 0.019790312852975226)
				(15, 0.01967163244027707)
				(16, 0.020048206923068662)
				(17, 0.02011554289027105)
				(18, 0.02028585942780096)
				(19, 0.020154737861342982)
				(20, 0.020429666977341727)				
			};
			%\addlegendentry{}
			
			% Baseline-3 mixed-k
			\addplot[
			color=cyan,
			mark=triangle,
			mark size=2pt
			]
			coordinates{(1, 0.0)
				(2, 5.797393867828919e-05)
				(3, 2.58277929860342e-05)
				(4, 3.232081525205657e-05)
				(5, 2.3524756471809378e-05)
				(6, 2.957484523607126e-05)
				(7, 2.37848898667461e-05)
				(8, 2.5690638348711794e-05)
				(9, 2.3927822577129187e-05)
				(10, 2.6115789913302553e-05)
				(11, 2.2800709029431594e-05)
				(12, 2.494930491211597e-05)
				(13, 2.2260522376273897e-05)
				(14, 2.397586803616847e-05)
				(15, 2.1986767326018676e-05)
				(16, 2.3490245280431163e-05)
				(17, 2.143313070098517e-05)
				(18, 2.1439594575261448e-05)
				(19, 2.1594589020488627e-05)
				(20, 2.1345159157229842e-05)
				};
			
			% Baseline-4 poorest-k
			\addplot[
			color=brown,
			mark=square,
			mark size=1.5pt
			]
			coordinates{(1, 0.017816278049679612)
				(2, 0.017401453165744835)
				(3, 0.017505818211718743)
				(4, 0.017661277361124224)
				(5, 0.01787209474824784)
				(6, 0.018096526407296993)
				(7, 0.018289086795492515)
				(8, 0.018536350473349138)
				(9, 0.018788924335213404)
				(10, 0.019045362294775055)
				(11, 0.019282657325181925)
				(12, 0.019446693626609245)
				(13, 0.019615076404784312)
				(14, 0.019826590263129662)
				(15, 0.020001611027667347)
				(16, 0.0202654823596313)
				(17, 0.02050499102306031)
				(18, 0.020679517328389463)
				(19, 0.02087529215519208)
				(20, 0.021057898296705683)
				};
			
			\end{axis}
			\end{tikzpicture}\label{fig:gl_1_Y}}
		\hfil
		\subfloat[{GL-FACT, Metric: \hyperref[metric_Y]{$Y$}}]{\pgfplotsset{width=0.29\textwidth,height=0.19\textwidth,compat=1.9}
			\begin{tikzpicture}
			\begin{axis}[
			xlabel={${k}$},
			ylabel={\hyperref[metric_Y]{$Y$}},
			xmin=0, xmax=20,
			ymin=0, ymax=0.07,
			xtick={0,5,10,15,20},
			ytick={0,0.03,0.06},
			%legend pos=north west,
			%legend style={at={(0.5,1.3)},anchor=south},
			ymajorgrids=true,
			grid style=dashed,
			ylabel style={rotate=-90},
			%ymode=log,
			%log basis y={10}
			]
			%Proposed Algorithm
			\addplot[
			color=blue,
			mark=o,
			mark size=2pt
			]
			coordinates {(1, 0.003461491025148509)
				(2, 0.0004604557876897308)
				(3, 7.442243192437262e-05)
				(4, 1.593056865385296e-05)
				(5, 3.3261057540291758e-06)
				(6, 7.034942156005791e-07)
				(7, 1.9037197251375726e-07)
				(8, 4.190200458767048e-08)
				(9, 1.392809104192881e-08)
				(10, 3.2247626186159614e-09)
				(11, 1.1243835871008896e-09)
				(12, 4.918355876403647e-11)
				(13, 1.386193467962089e-10)
				(14, 7.983377295901377e-11)
				(15, 1.4499045992396074e-10)
				(16, 0.0)
				(17, 4.875988014100891e-12)
				(18, 0.0)
				(19, 0.0)
				(20, 3.795349979216101e-11)

			};
			%\addlegendentry{}
			
			%Baseline1: Top-k
			\addplot[
			color=red,
			mark=x,
			mark size=2pt
			]
			coordinates {(1, 0.0)
				(2, 0.0)
				(3, 0.0)
				(4, 0.0)
				(5, 0.0)
				(6, 0.0)
				(7, 0.0)
				(8, 0.0)
				(9, 0.0)
				(10, 0.0)
				(11, 0.0)
				(12, 0.0)
				(13, 0.0)
				(14, 0.0)
				(15, 0.0)
				(16, 0.0)
				(17, 0.0)
				(18, 0.0)
				(19, 0.0)
				(20, 0.0)

			};
			%\addlegendentry{}
			
			% Baseline2: Random-k
			\addplot[
			color=black,
			mark=star,
			mark size=2pt
			]
			coordinates {(1, 0.06208407394457786)
				(2, 0.04483476168066226)
				(3, 0.0369876908177635)
				(4, 0.032228699659135364)
				(5, 0.028443570769447292)
				(6, 0.026509695254301992)
				(7, 0.024582168494921806)
				(8, 0.02308695152514603)
				(9, 0.02198969746499456)
				(10, 0.021136268765150977)
				(11, 0.019800020849761266)
				(12, 0.019173360697236255)
				(13, 0.01814813876300008)
				(14, 0.017859807658279395)
				(15, 0.01739574757102449)
				(16, 0.016911371319659894)
				(17, 0.016417099037098728)
				(18, 0.015848580556103855)
				(19, 0.015466337147500326)
				(20, 0.01501734530281081)

			};
			%\addlegendentry{}
			
			% Baseline-3 mixed-k
			\addplot[
			color=cyan,
			mark=triangle,
			mark size=2pt
			]
			coordinates{(1, 0.0)
				(2, 0.005260719185561497)
				(3, 0.000663468060975349)
				(4, 0.0016351180919513993)
				(5, 0.00036337604923014036)
				(6, 0.0006746135278466422)
				(7, 0.00019905628694489772)
				(8, 0.0003388454988461903)
				(9, 0.00011516580029139897)
				(10, 0.0001862415217199035)
				(11, 7.464056665330385e-05)
				(12, 0.00010264565421183724)
				(13, 4.229473262612029e-05)
				(14, 6.278640142952336e-05)
				(15, 3.15944992137817e-05)
				(16, 4.0642192177452616e-05)
				(17, 2.189681049984065e-05)
				(18, 2.96788449152787e-05)
				(19, 1.5709569474629235e-05)
				(20, 1.955382183506999e-05)
				};
			
			% Baseline-4 poorest-k
			\addplot[
			color=brown,
			mark=square,
			mark size=1.5pt
			]
			coordinates{(1, 0.06197430277316864)
				(2, 0.045071412268355295)
				(3, 0.03762724837587231)
				(4, 0.03272594012506815)
				(5, 0.02959544701423821)
				(6, 0.02709406058620529)
				(7, 0.025177912945710083)
				(8, 0.0236744832911592)
				(9, 0.0224496444748936)
				(10, 0.02137457309541077)
				(11, 0.020496352077064517)
				(12, 0.01960540484419265)
				(13, 0.018796746468934568)
				(14, 0.018195959575431533)
				(15, 0.017520676247796678)
				(16, 0.016924577803115428)
				(17, 0.016370562514175197)
				(18, 0.015868420480000468)
				(19, 0.015422182900300683)
				(20, 0.01497378131115677)
				};
			
			\end{axis}
			\end{tikzpicture}\label{fig:gl_2_Y}}
		\hfil
		\subfloat[{LF, Metric: \hyperref[metric_Y]{$Y$}}]{\pgfplotsset{width=0.29\textwidth,height=0.19\textwidth,compat=1.9}
			\begin{tikzpicture}
			\begin{axis}[
			xlabel={${k}$},
			ylabel={\hyperref[metric_Y]{$Y$}},
			xmin=0, xmax=20,
			ymin=0, ymax=0.13,
			xtick={0,5,10,15,20},
			ytick={0,0.1,0.1,0.25},
			%legend pos=north west,
			%legend style={at={(0.5,1.3)},anchor=south},
			ymajorgrids=true,
			grid style=dashed,
			ylabel style={rotate=-90}
			]
			%Proposed Algorithm
			\addplot[
			color=blue,
			mark=o,
			mark size=2pt
			]
			coordinates {(1, 0.0)
				(2, 0.0)
				(3, 0.0)
				(4, 0.0)
				(5, 0.0)
				(6, 0.0)
				(7, 0.0)
				(8, 0.0)
				(9, 0.0)
				(10, 3.599168422628413e-08)
				(11, 2.513892028644276e-08)
				(12, 1.2606466906688133e-08)
				(13, 8.35989002801812e-09)
				(14, 4.549814186626957e-09)
				(15, 2.327429238402183e-09)
				(16, 2.6852957436063394e-09)
				(17, 1.357642074280158e-09)
				(18, 1.4500262299793974e-09)
				(19, 3.7473965270429564e-09)
				(20, 2.778008744513805e-09)
				
			};
			%\addlegendentry{}
			
			%Baseline1: Top-k
			\addplot[
			color=red,
			mark=x,
			mark size=2pt
			]
			coordinates {(1, 0.0)
				(2, 0.0)
				(3, 0.0)
				(4, 0.0)
				(5, 0.0)
				(6, 0.0)
				(7, 0.0)
				(8, 0.0)
				(9, 0.0)
				(10, 0.0)
				(11, 0.0)
				(12, 0.0)
				(13, 0.0)
				(14, 0.0)
				(15, 0.0)
				(16, 0.0)
				(17, 0.0)
				(18, 0.0)
				(19, 0.0)
				(20, 0.0)
				
			};
			%\addlegendentry{}
			
			% Baseline2: Random-k
			\addplot[
			color=black,
			mark=star,
			mark size=2pt
			]
			coordinates {(1, 0.12015407423780398)
				(2, 0.08794479528368995)
				(3, 0.07273599768331776)
				(4, 0.06344075001618305)
				(5, 0.057195718790538454)
				(6, 0.05286012113244257)
				(7, 0.05043555751941485)
				(8, 0.045956686672187365)
				(9, 0.04359215538343012)
				(10, 0.0421769279574879)
				(11, 0.040984821272136755)
				(12, 0.03821407532110564)
				(13, 0.03700157727007322)
				(14, 0.03761528663262455)
				(15, 0.03544189615057377)
				(16, 0.03474507704585105)
				(17, 0.03370604009097508)
				(18, 0.03302600332322024)
				(19, 0.032167162874875005)
				(20, 0.03125736101681002)
				
			};
			%\addlegendentry{}
			% Baseline-3 mixed-k
			\addplot[
			color=cyan,
			mark=triangle,
			mark size=2pt
			]
			coordinates{(1, 0.0)
				(2, 0.0013154836952041762)
				(3, 3.750397273089848e-05)
				(4, 0.00012219946899259105)
				(5, 7.606507085307876e-06)
				(6, 3.66111683027565e-05)
				(7, 2.213140572048084e-06)
				(8, 1.803108679078536e-05)
				(9, 1.478133005059944e-06)
				(10, 1.3075831291725267e-06)
				(11, 4.615580219633277e-07)
				(12, 7.176361933710428e-06)
				(13, 4.786804122996531e-07)
				(14, 6.042857779922118e-07)
				(15, 7.353508032648256e-08)
				(16, 5.086592564191513e-07)
				(17, 2.818249290535493e-07)
				(18, 3.453971375204488e-07)
				(19, 8.792015843225165e-08)
				(20, 5.068572343991874e-07)
				};
			
			% Baseline-4 poorest-k
			\addplot[
			color=brown,
			mark=square,
			mark size=1.5pt
			]
			coordinates{(1, 0.11940137379538748)
				(2, 0.08635847250362083)
				(3, 0.07112673229704121)
				(4, 0.062279842521648476)
				(5, 0.05672724801493963)
				(6, 0.052031035145926836)
				(7, 0.04797523566625057)
				(8, 0.04549015961645075)
				(9, 0.043162676560672235)
				(10, 0.04208051612034533)
				(11, 0.041002318350758384)
				(12, 0.03937083476714338)
				(13, 0.038287317134323005)
				(14, 0.037060918034199815)
				(15, 0.03566034114020945)
				(16, 0.03455047871560757)
				(17, 0.03376833519347011)
				(18, 0.03279569175486361)
				(19, 0.031909465703503684)
				(20, 0.0313953257346975)
				};
			
			\end{axis}
			\end{tikzpicture}\label{fig:lf_Y}}
		\vfil
		\subfloat[{GL-CUSTOM, Metric: \hyperref[sec:cust_metric]{${\mu_{\phi}}$}}]{\pgfplotsset{width=0.29\textwidth,height=0.19\textwidth,compat=1.9}
			\begin{tikzpicture}
			\begin{axis}[
			xlabel={${k}$},
			ylabel={\hyperref[sec:cust_metric]{${\mu_{\phi}}$}},
			xmin=0, xmax=20,
			ymin=0, ymax=1,
			xtick={0,5,10,15,20},
			ytick={0,0.2,0.4,0.6,0.8,1},
			%legend pos=north west,
			%legend style={at={(0.5,1.3)},anchor=south},
			ymajorgrids=true,
			grid style=dashed,
			ylabel style={rotate=-90},
			%ymode=log,
			%log basis y={10}
			]
			%Proposed Algorithm
			\addplot[
			color=blue,
			mark=o,
			mark size=2pt
			]
			coordinates {(1, 0.6687787903910435)
				(2, 0.777484263316787)
				(3, 0.8253076920558161)
				(4, 0.8512719350117013)
				(5, 0.8695206725262395)
				(6, 0.8837216104886956)
				(7, 0.894536161443528)
				(8, 0.9024575236049704)
				(9, 0.908767412225598)
				(10, 0.9142709794009495)
				(11, 0.9191403693808834)
				(12, 0.9232406115191909)
				(13, 0.9272782909709504)
				(14, 0.9309744220347547)
				(15, 0.9330893320217132)
				(16, 0.9363527740026186)
				(17, 0.9392591914577787)
				(18, 0.9418653297603778)
				(19, 0.9443201494634443)
				(20, 0.9466278029121776)			
				
			};
			%\addlegendentry{}
			
			%Baseline1: Top-k
			\addplot[
			color=red,
			mark=x,
			mark size=2pt
			]
			coordinates {(1, 1.0)
				(2, 1.0)
				(3, 1.0)
				(4, 1.0)
				(5, 1.0)
				(6, 1.0)
				(7, 1.0)
				(8, 1.0)
				(9, 1.0)
				(10, 1.0)
				(11, 1.0)
				(12, 1.0)
				(13, 1.0)
				(14, 1.0)
				(15, 1.0)
				(16, 1.0)
				(17, 1.0)
				(18, 1.0)
				(19, 1.0)
				(20, 1.0)
								
			};
			%\addlegendentry{}
			
			% Baseline2: Random-k
			\addplot[
			color=black,
			mark=star,
			mark size=2pt
			]
			coordinates {(1, 0.0412797818053765)
				(2, 0.04999715698535344)
				(3, 0.05547123493683023)
				(4, 0.061423130293976703)
				(5, 0.06562068589494023)
				(6, 0.0718180848194978)
				(7, 0.07543312533217843)
				(8, 0.0809899851145582)
				(9, 0.08421441070539905)
				(10, 0.08780541838419469)
				(11, 0.09153270343805849)
				(12, 0.09576016767225895)
				(13, 0.09841902964165213)
				(14, 0.10212345968679283)
				(15, 0.10586405912621544)
				(16, 0.10817840578202975)
				(17, 0.11189178669094524)
				(18, 0.11540474009587612)
				(19, 0.11857517367827243)
				(20, 0.1214133762645215)
											
			};
			%\addlegendentry{}
			% Baseline-3 mixed-k
			\addplot[
			color=cyan,
			mark=triangle,
			mark size=2pt
			]
			coordinates{(1, 1.0)
				(2, 0.6546582332944494)
				(3, 0.8006455659051205)
				(4, 0.69488570650355)
				(5, 0.7740083210937582)
				(6, 0.7135168150434218)
				(7, 0.768039257196075)
				(8, 0.725876222763723)
				(9, 0.7667112901021682)
				(10, 0.7351316180860856)
				(11, 0.7675998065956751)
				(12, 0.7422029333936575)
				(13, 0.7693594896013705)
				(14, 0.7482440901350443)
				(15, 0.7719102268259541)
				(16, 0.7540248938461448)
				(17, 0.7745715629960214)
				(18, 0.7590335044137733)
				(19, 0.7771857027420206)
				(20, 0.7638514966746964)
				};
			
			% Baseline-4 poorest-k
			\addplot[
			color=brown,
			mark=square,
			mark size=1.5pt
			]
			coordinates{(1, 0.042326772889362624)
				(2, 0.04957176500823769)
				(3, 0.05565030481664764)
				(4, 0.0614841601791565)
				(5, 0.06696139194142138)
				(6, 0.07154908554270696)
				(7, 0.07595623827060194)
				(8, 0.08038803730806111)
				(9, 0.08442641378212044)
				(10, 0.08826079494644398)
				(11, 0.09186642084611615)
				(12, 0.09563937595334007)
				(13, 0.09923596775333826)
				(14, 0.10229962431524756)
				(15, 0.10541143885215962)
				(16, 0.10852281511052626)
				(17, 0.1116980372017332)
				(18, 0.11490889795153103)
				(19, 0.11777921296711251)
				(20, 0.12069856913620906)
				};
			
			\end{axis}
			\end{tikzpicture}\label{fig:gl_1_mu}}
		\hfil
		\subfloat[{GL-FACT, Metric: \hyperref[sec:cust_metric]{${\mu_{\phi}}$}}]{\pgfplotsset{width=0.29\textwidth,height=0.19\textwidth,compat=1.9}
			\begin{tikzpicture}
			\begin{axis}[
			xlabel={${k}$},
			ylabel={\hyperref[sec:cust_metric]{${\mu_{\phi}}$}},
			xmin=0, xmax=20,
			ymin=0, ymax=1,
			xtick={0,5,10,15,20},
			ytick={0,0.2,0.4,0.6,0.8,1},
			%legend pos=north west,
			%legend style={at={(0.5,1.3)},anchor=south},
			ymajorgrids=true,
			grid style=dashed,
			ylabel style={rotate=-90},
			%ymode=log
			]
			%Proposed Algorithm
			\addplot[
			color=blue,
			mark=o,
			mark size=2pt
			]
			coordinates {(1, 0.8993218695807429)
				(2, 0.9121510738185273)
				(3, 0.9191688372804868)
				(4, 0.9239689041785398)
				(5, 0.9274160391177372)
				(6, 0.9300680249241254)
				(7, 0.9321804261714608)
				(8, 0.9339206807243935)
				(9, 0.9353770668022859)
				(10, 0.9366625837989295)
				(11, 0.9378077152017159)
				(12, 0.938824297002648)
				(13, 0.9397720230722668)
				(14, 0.9405933878107977)
				(15, 0.9406267120727122)
				(16, 0.9413916353200327)
				(17, 0.9420992178071899)
				(18, 0.9427666869138502)
				(19, 0.9433793329755414)
				(20, 0.9439498822796131)
												
			};
			%\addlegendentry{}
			
			%Baseline1: 
			\addplot[
			color=red,
			mark=x,
			mark size=2pt
			]
			coordinates {(1, 1.0)
				(2, 1.0)
				(3, 1.0)
				(4, 1.0)
				(5, 1.0)
				(6, 1.0)
				(7, 1.0)
				(8, 1.0)
				(9, 1.0)
				(10, 1.0)
				(11, 1.0)
				(12, 1.0)
				(13, 1.0)
				(14, 1.0)
				(15, 1.0)
				(16, 1.0)
				(17, 1.0)
				(18, 1.0)
				(19, 1.0)
				(20, 1.0)				
			};
			%\addlegendentry{}
			
			% Baseline2: Random-k
			\addplot[
			color=black,
			mark=star,
			mark size=2pt
			]
			coordinates {
				(1, 0.6637956091484005)
				(2, 0.6734838926660947)
				(3, 0.6803263677182906)
				(4, 0.6852445571203793)
				(5, 0.6917828894933222)
				(6, 0.6948495350435676)
				(7, 0.6986076867126946)
				(8, 0.7015228808288303)
				(9, 0.7041242629121558)
				(10, 0.7071069216240197)
				(11, 0.709907593907093)
				(12, 0.7117661335624205)
				(13, 0.7144395293884609)
				(14, 0.715957410159584)
				(15, 0.7177231343205543)
				(16, 0.7196405240618247)
				(17, 0.7216232495161743)
				(18, 0.7230344130303549)
				(19, 0.7248822616699586)
				(20, 0.7265226408200475)				
			};
			%\addlegendentry{}
			
			% Baseline-3 mixed-k
			\addplot[
			color=cyan,
			mark=triangle,
			mark size=2pt
			]
			coordinates{(1, 1.0)
				(2, 0.8437718830701769)
				(3, 0.9002503084562926)
				(4, 0.8525951293066725)
				(5, 0.8849007498435996)
				(6, 0.8575422008155946)
				(7, 0.8802696892651928)
				(8, 0.861211040372987)
				(9, 0.8788853211672494)
				(10, 0.8648245818604862)
				(11, 0.8785930669518535)
				(12, 0.8672589151780755)
				(13, 0.8791476677899444)
				(14, 0.8699789065194989)
				(15, 0.8797998715102773)
				(16, 0.8715738776770809)
				(17, 0.8805848430082662)
				(18, 0.8732309819576819)
				(19, 0.8812542772473494)
				(20, 0.8749439687461352)
				};
			
			% Baseline-4 poorest-k
			\addplot[
			color=brown,
			mark=square,
			mark size=1.5pt
			]
			coordinates{(1, 0.6638823104467751)
				(2, 0.6735774349482342)
				(3, 0.6801879929626223)
				(4, 0.6861766088440103)
				(5, 0.6904013997895555)
				(6, 0.6945672344483165)
				(7, 0.6982041639204912)
				(8, 0.7012809314614449)
				(9, 0.704189074455143)
				(10, 0.7068187588028761)
				(11, 0.7092405020800028)
				(12, 0.7115890850010586)
				(13, 0.7137774225689693)
				(14, 0.7157000270262805)
				(15, 0.7177566177600958)
				(16, 0.7196055720503408)
				(17, 0.7214494855245566)
				(18, 0.7231811476436737)
				(19, 0.7247693254416977)
				(20, 0.7264538760763735)
				};
			
			\end{axis}
			\end{tikzpicture}\label{fig:gl_2_mu}}
		\hfil
		\subfloat[{LF, Metric: \hyperref[sec:cust_metric]{${\mu_{\phi}}$}}]{\pgfplotsset{width=0.29\textwidth,height=0.19\textwidth,compat=1.9}
			\begin{tikzpicture}
			\begin{axis}[
			xlabel={${k}$},
			ylabel={\hyperref[sec:cust_metric]{${\mu_{\phi}}$}},
			xmin=0, xmax=20,
			ymin=0, ymax=1,
			xtick={0,5,10,15,20},
			ytick={0,0.2,0.4,0.6,0.8,1},
			%legend pos=north west,
			%legend style={at={(0.5,1.3)},anchor=south},
			ymajorgrids=true,
			grid style=dashed,
			ylabel style={rotate=-90}
			]
			%Proposed Algorithm
			\addplot[
			color=blue,
			mark=o,
			mark size=2pt
			]
			coordinates {(1, 1.0)
				(2, 1.0)
				(3, 1.0)
				(4, 1.0)
				(5, 1.0)
				(6, 1.0)
				(7, 1.0)
				(8, 1.0)
				(9, 1.0)
				(10, 0.9466840403677732)
				(11, 0.9573332123472116)
				(12, 0.9645741314748756)
				(13, 0.9698975267372104)
				(14, 0.9740279226029401)
				(15, 0.9772794194615271)
				(16, 0.9799094042412378)
				(17, 0.9820742241605919)
				(18, 0.9838885500429514)
				(19, 0.9554355420796439)
				(20, 0.9617003481874509)
				
			};
			%\addlegendentry{}
			
			%Baseline1: Top-k
			\addplot[
			color=red,
			mark=x,
			mark size=2pt
			]
			coordinates {(1, 1.0)
				(2, 1.0)
				(3, 1.0)
				(4, 1.0)
				(5, 1.0)
				(6, 1.0)
				(7, 1.0)
				(8, 1.0)
				(9, 1.0)
				(10, 1.0)
				(11, 1.0)
				(12, 1.0)
				(13, 1.0)
				(14, 1.0)
				(15, 1.0)
				(16, 1.0)
				(17, 1.0)
				(18, 1.0)
				(19, 1.0)
				(20, 1.0)
				
			};
			%\addlegendentry{}
			
			% Baseline2: Random-k
			\addplot[
			color=black,
			mark=star,
			mark size=2pt
			]
			coordinates {(1, 0.17832003461211382)
				(2, 0.1769745910441589)
				(3, 0.18364845281871542)
				(4, 0.1841895916494282)
				(5, 0.18716048101980956)
				(6, 0.1879339046867482)
				(7, 0.1875864480744251)
				(8, 0.1924937399023132)
				(9, 0.19356422751759575)
				(10, 0.19288994148103977)
				(11, 0.19320626765394686)
				(12, 0.19599057436374165)
				(13, 0.1960350508783371)
				(14, 0.1935376665189075)
				(15, 0.1959156322648999)
				(16, 0.19651852529879993)
				(17, 0.19719627944692023)
				(18, 0.19754206328322177)
				(19, 0.1983699699676643)
				(20, 0.20030689815674965)
				
			};
			%\addlegendentry{}
			% Baseline-3 mixed-k
			\addplot[
			color=cyan,
			mark=triangle,
			mark size=2pt
			]
			coordinates{(1, 1.0)
				(2, 0.6045192389986451)
				(3, 0.7370932601748931)
				(4, 0.6071151291986662)
				(5, 0.689644383398613)
				(6, 0.6084278402625126)
				(7, 0.6685499464377781)
				(8, 0.6084280948091991)
				(9, 0.6572932641567436)
				(10, 0.613001853003144)
				(11, 0.6520584462444754)
				(12, 0.616282071561378)
				(13, 0.6471338128418598)
				(14, 0.6176505112658482)
				(15, 0.6444576007876894)
				(16, 0.6175804374172676)
				(17, 0.6423740132205189)
				(18, 0.6193769176398086)
				(19, 0.6402497299829498)
				(20, 0.6196171909304439)
				};
			
			% Baseline-4 poorest-k
			\addplot[
			color=brown,
			mark=square,
			mark size=1.5pt
			]
			coordinates{(1, 0.14144716232622817)
				(2, 0.1576336974317044)
				(3, 0.16909076109538335)
				(4, 0.1762468780451565)
				(5, 0.18151123293026242)
				(6, 0.18513567898793606)
				(7, 0.18846899439490333)
				(8, 0.19044440742984214)
				(9, 0.19311347284536098)
				(10, 0.1890780307160162)
				(11, 0.18835547301135933)
				(12, 0.18909402200674397)
				(13, 0.18999745558322234)
				(14, 0.19170751278223744)
				(15, 0.19367699862136153)
				(16, 0.19540552618100193)
				(17, 0.1966552218400394)
				(18, 0.19822357163557078)
				(19, 0.19799322388524554)
				(20, 0.1969158476579335)
				};
			
			\end{axis}
			\end{tikzpicture}\label{fig:lf_mu}}
		\vfil
		\subfloat[{GL-CUSTOM, Metric: \hyperref[sec:cust_metric]{${std_\phi}$}}]{\pgfplotsset{width=0.29\textwidth,height=0.19\textwidth,compat=1.9}
			\begin{tikzpicture}
			\begin{axis}[
			xlabel={${k}$},
			ylabel={\hyperref[sec:cust_metric]{${std_\phi}$}},
			xmin=0, xmax=20,
			ymin=0, ymax=0.4,
			xtick={0,5,10,15,20},
			%ytick={0,0.1,0.1,0.25},
			%legend pos=north west,
			%legend style={at={(0.5,1.3)},anchor=south},
			ymajorgrids=true,
			grid style=dashed,
			ylabel style={rotate=-90}
			]
			%Proposed Algorithm
			\addplot[
			color=blue,
			mark=o,
			mark size=2pt
			]
			coordinates {(1, 0.37830805256560884)
				(2, 0.25410394502765443)
				(3, 0.19680571440598094)
				(4, 0.1685207037625616)
				(5, 0.14784959761005217)
				(6, 0.13283238851616477)
				(7, 0.12020286728540513)
				(8, 0.10983712613500922)
				(9, 0.10065168951059378)
				(10, 0.09294555455170196)
				(11, 0.08502314275421591)
				(12, 0.07867193271390735)
				(13, 0.07266810049095368)
				(14, 0.06734393245180494)
				(15, 0.06433088575596334)
				(16, 0.05980036681315315)
				(17, 0.05590994260945399)
				(18, 0.0523305182720453)
				(19, 0.04929585132780681)
				(20, 0.046208732752894666)				
			};
			%\addlegendentry{}
			
			%Baseline1: Top-k
			\addplot[
			color=red,
			mark=x,
			mark size=2pt
			]
			coordinates {(1, 0.0)
				(2, 0.0)
				(3, 0.0)
				(4, 0.0)
				(5, 0.0)
				(6, 0.0)
				(7, 0.0)
				(8, 0.0)
				(9, 0.0)
				(10, 0.0)
				(11, 0.0)
				(12, 0.0)
				(13, 0.0)
				(14, 0.0)
				(15, 0.0)
				(16, 0.0)
				(17, 0.0)
				(18, 0.0)
				(19, 0.0)
				(20, 0.0)
					
			};
			%\addlegendentry{}
			
			% Baseline2: Random-k
			\addplot[
			color=black,
			mark=star,
			mark size=2pt
			]
			coordinates {(1, 0.0665792494115164)
				(2, 0.06642641249853415)
				(3, 0.06271293333356061)
				(4, 0.060687802843674345)
				(5, 0.057638020502885884)
				(6, 0.06075864205196455)
				(7, 0.059910252869592395)
				(8, 0.06282476435514515)
				(9, 0.0609461238254342)
				(10, 0.06067960737449011)
				(11, 0.0604393763477814)
				(12, 0.06238359751519571)
				(13, 0.060559813966399395)
				(14, 0.061685578127072614)
				(15, 0.06298998883027536)
				(16, 0.06154427038269326)
				(17, 0.06376173371884339)
				(18, 0.06498463186010953)
				(19, 0.06448367750694804)
				(20, 0.06439936476862508)
							
			};
			%\addlegendentry{}
			% Baseline-3 mixed-k
			\addplot[
			color=cyan,
			mark=triangle,
			mark size=2pt
			]
			coordinates{(1, 0.0)
				(2, 0.10523668302215644)
				(3, 0.06588255029425882)
				(4, 0.08816376711883124)
				(5, 0.0685737162746419)
				(6, 0.08166440901180695)
				(7, 0.06967300441086495)
				(8, 0.07894972155891818)
				(9, 0.06808644607623329)
				(10, 0.07468013952200325)
				(11, 0.06553375184060456)
				(12, 0.07050701091507944)
				(13, 0.06308400996891075)
				(14, 0.06712670105419105)
				(15, 0.06053138310106066)
				(16, 0.06409027715486966)
				(17, 0.05878676570160378)
				(18, 0.06189616519034378)
				(19, 0.05730921374086472)
				(20, 0.05982977327890879)
				};
			
			% Baseline-4 poorest-k
			\addplot[
			color=brown,
			mark=square,
			mark size=1.5pt
			]
			coordinates{(1, 0.07182909832403742)
				(2, 0.06303971523250504)
				(3, 0.0593129287515427)
				(4, 0.05923849412907048)
				(5, 0.0597787920523159)
				(6, 0.059224894811932806)
				(7, 0.059109996911624885)
				(8, 0.05987856835017233)
				(9, 0.060266665707672404)
				(10, 0.06028987682584633)
				(11, 0.06034253584176401)
				(12, 0.06119892214984695)
				(13, 0.061841909261809956)
				(14, 0.061759826343640266)
				(15, 0.06179997462516536)
				(16, 0.06216537027723421)
				(17, 0.06308000171156664)
				(18, 0.06355977724255803)
				(19, 0.0636840881655898)
				(20, 0.06402622514477738)
				};
			
			\end{axis}
			\end{tikzpicture}\label{fig:gl_1_sigma}}
		\hfil
		\subfloat[{GL-FACT, Metric: \hyperref[sec:cust_metric]{${std_\phi}$}}]{\pgfplotsset{width=0.29\textwidth,height=0.19\textwidth,compat=1.9}
			\begin{tikzpicture}
			\begin{axis}[
			xlabel={${k}$},
			ylabel={\hyperref[sec:cust_metric]{${std_\phi}$}},
			xmin=0, xmax=20,
			ymin=0, ymax=0.12,
			xtick={0,5,10,15,20},
			ytick={0,0.05,0.10},
			%legend pos=north west,
			%legend style={at={(0.5,1.3)},anchor=south},
			ymajorgrids=true,
			grid style=dashed,
			ylabel style={rotate=-90}
			]
			%Proposed Algorithm
			\addplot[
			color=blue,
			mark=o,
			mark size=2pt
			]
			coordinates {(1, 0.08399702240460524)
				(2, 0.05436271919095607)
				(3, 0.04147625853832062)
				(4, 0.034748350697647985)
				(5, 0.029786123763427416)
				(6, 0.026387935487853478)
				(7, 0.023939164657281208)
				(8, 0.02219728309393469)
				(9, 0.020433298984965105)
				(10, 0.01904127405967224)
				(11, 0.017903688940636382)
				(12, 0.016910073565303354)
				(13, 0.01608358767807022)
				(14, 0.015341726729430199)
				(15, 0.01473696988161612)
				(16, 0.014186237974498379)
				(17, 0.013517023459539142)
				(18, 0.012981677508432696)
				(19, 0.012594477881743952)
				(20, 0.012173784424625316)
				
			};
			%\addlegendentry{}
			
			%Baseline1: Top-k
			\addplot[
			color=red,
			mark=x,
			mark size=2pt
			]
			coordinates {(1, 0.0)
				(2, 0.0)
				(3, 0.0)
				(4, 0.0)
				(5, 0.0)
				(6, 0.0)
				(7, 0.0)
				(8, 0.0)
				(9, 0.0)
				(10, 0.0)
				(11, 0.0)
				(12, 0.0)
				(13, 0.0)
				(14, 0.0)
				(15, 0.0)
				(16, 0.0)
				(17, 0.0)
				(18, 0.0)
				(19, 0.0)
				(20, 0.0)
								
			};
			%\addlegendentry{}
			
			% Baseline2: Random-k
			\addplot[
			color=black,
			mark=star,
			mark size=2pt
			]
			coordinates {(1, 0.1153001122599757)
				(2, 0.08402134976303625)
				(3, 0.07031821946664513)
				(4, 0.06159441577580742)
				(5, 0.05536246474305281)
				(6, 0.05169877880920397)
				(7, 0.04855659780428536)
				(8, 0.046157236527497585)
				(9, 0.043037196318076634)
				(10, 0.0423901204460739)
				(11, 0.04047684064532946)
				(12, 0.03918725149118922)
				(13, 0.0373408582032027)
				(14, 0.03657097307479459)
				(15, 0.036039312447793126)
				(16, 0.035568864568369295)
				(17, 0.034496914669265105)
				(18, 0.03353604096856207)
				(19, 0.03268238920676137)
				(20, 0.03240727416757377)

			};
			%\addlegendentry{}
			
			% Baseline-3 mixed-k
			\addplot[
			color=cyan,
			mark=triangle,
			mark size=2pt
			]
			coordinates{(1, 0.0)
				(2, 0.05710351368759226)
				(3, 0.03812489941231977)
				(4, 0.04145536943608316)
				(5, 0.03305394593115184)
				(6, 0.034319363320058825)
				(7, 0.029281322895173413)
				(8, 0.030134888224788527)
				(9, 0.026964033269695705)
				(10, 0.027409919783979247)
				(11, 0.02485400412136951)
				(12, 0.02509775733350031)
				(13, 0.023042687426656567)
				(14, 0.023343698672126677)
				(15, 0.022004783279296728)
				(16, 0.022273688683520733)
				(17, 0.0212194959495044)
				(18, 0.021257703405436028)
				(19, 0.020063266908055585)
				(20, 0.02029022044520764)
				};
			
			% Baseline-4 poorest-k
			\addplot[
			color=brown,
			mark=square,
			mark size=1.5pt
			]
			coordinates{(1, 0.11387422047678955)
				(2, 0.08403420729543372)
				(3, 0.07044462131678358)
				(4, 0.06241020200447165)
				(5, 0.05677074894746044)
				(6, 0.05259923384523701)
				(7, 0.04944074059587094)
				(8, 0.04673404082302104)
				(9, 0.04459528667217272)
				(10, 0.04263978200055096)
				(11, 0.04117743750520475)
				(12, 0.03967059107626005)
				(13, 0.03830643041352416)
				(14, 0.03729821408319296)
				(15, 0.03637271020859325)
				(16, 0.03529737980490692)
				(17, 0.03440632291268206)
				(18, 0.033585749611649715)
				(19, 0.03283481401691028)
				(20, 0.03227060307439308)
				};
			
			\end{axis}
			\end{tikzpicture}\label{fig:gl_2_sigma}}
		\hfil
		\subfloat[{LF, Metric: \hyperref[sec:cust_metric]{${std_\phi}$}}]{\pgfplotsset{width=0.29\textwidth,height=0.19\textwidth,compat=1.9}
			\begin{tikzpicture}
			\begin{axis}[
			xlabel={${k}$},
			ylabel={\hyperref[sec:cust_metric]{${std_\phi}$}},
			xmin=0, xmax=20,
			ymin=0, ymax=0.3,
			xtick={0,5,10,15,20},
			ytick={0,0.1,0.2,0.3},
			%legend pos=north west,
			%legend style={at={(0.5,1.3)},anchor=south},
			ymajorgrids=true,
			grid style=dashed,
			ylabel style={rotate=-90}
			]
			%Proposed Algorithm
			\addplot[
			color=blue,
			mark=o,
			mark size=2pt
			]
			coordinates {(1, 0.0)
				(2, 0.0)
				(3, 0.0)
				(4, 0.0)
				(5, 0.0)
				(6, 0.0)
				(7, 0.0)
				(8, 0.0)
				(9, 0.0)
				(10, 0.02488225540664339)
				(11, 0.021099297022888097)
				(12, 0.018581836098998605)
				(13, 0.016676050699629807)
				(14, 0.015159058205829136)
				(15, 0.013896429905161994)
				(16, 0.012824286260091034)
				(17, 0.011902275955034826)
				(18, 0.011105539613363477)
				(19, 0.015687238458487282)
				(20, 0.014096324416347708)
				
			};
			%\addlegendentry{}
			
			%Baseline1: Top-k
			\addplot[
			color=red,
			mark=x,
			mark size=2pt
			]
			coordinates {(1, 0.0)
				(2, 0.0)
				(3, 0.0)
				(4, 0.0)
				(5, 0.0)
				(6, 0.0)
				(7, 0.0)
				(8, 0.0)
				(9, 0.0)
				(10, 0.0)
				(11, 0.0)
				(12, 0.0)
				(13, 0.0)
				(14, 0.0)
				(15, 0.0)
				(16, 0.0)
				(17, 0.0)
				(18, 0.0)
				(19, 0.0)
				(20, 0.0)
				
			};
			%\addlegendentry{}
			
			% Baseline2: Random-k
			\addplot[
			color=black,
			mark=star,
			mark size=2pt
			]
			coordinates {(1, 0.2575423373515667)
				(2, 0.20569419196264266)
				(3, 0.19136184104992532)
				(4, 0.18176403425663273)
				(5, 0.1745275803792753)
				(6, 0.17260452124759076)
				(7, 0.16936880075372318)
				(8, 0.16498988342394214)
				(9, 0.16127830629709414)
				(10, 0.16312694493455873)
				(11, 0.16228692055918909)
				(12, 0.16051243296644954)
				(13, 0.15993010498879995)
				(14, 0.1591104695231111)
				(15, 0.15677233752737132)
				(16, 0.15889714291012752)
				(17, 0.15680638724449858)
				(18, 0.15670381750110401)
				(19, 0.1561156817656419)
				(20, 0.15674307532191106)
				
			};
			
			%\addlegendentry{}
			% Baseline-3 mixed-k
			\addplot[
			color=cyan,
			mark=triangle,
			mark size=2pt
			]
			coordinates{(1, 0.0)
				(2, 0.1270219374624834)
				(3, 0.08856027149307515)
				(4, 0.10651432112260012)
				(5, 0.08620413695292845)
				(6, 0.0944668583196028)
				(7, 0.08168001330995907)
				(8, 0.09034951456813663)
				(9, 0.07972546300853252)
				(10, 0.08780189353595864)
				(11, 0.07877015462020656)
				(12, 0.084195235016906)
				(13, 0.07662875332097971)
				(14, 0.08382825228826962)
				(15, 0.0777393945704526)
				(16, 0.0817489553094493)
				(17, 0.07691157122728591)
				(18, 0.0808469277424271)
				(19, 0.07577794693346765)
				(20, 0.08022440140232145)
				};
			
			% Baseline-4 poorest-k
			\addplot[
			color=brown,
			mark=square,
			mark size=1.5pt
			]
			coordinates{(1, 0.24221177815800354)
				(2, 0.1952505224108675)
				(3, 0.17964613717756578)
				(4, 0.1721199793052581)
				(5, 0.1698628184104722)
				(6, 0.16658294881110647)
				(7, 0.16381746911487488)
				(8, 0.1624525860575433)
				(9, 0.16182377705536785)
				(10, 0.15894753331713007)
				(11, 0.15832556786484017)
				(12, 0.1572826280022425)
				(13, 0.15727696169426678)
				(14, 0.15744622568399577)
				(15, 0.15725029107566524)
				(16, 0.15711631770155113)
				(17, 0.15754089674536897)
				(18, 0.15739808855625487)
				(19, 0.15594792930102305)
				(20, 0.1550381728460963)
				};

			\end{axis}
			\end{tikzpicture}\label{fig:lf_sigma}}
		\vfil
		\subfloat{\pgfplotsset{width=.7\textwidth,compat=1.9}
			\begin{tikzpicture}
			\begin{customlegend}[legend entries={{FairRec},{Top-$k$},{Random-$k$},{Mixed-$k$},{PR-$k$}},legend columns=5,legend style={/tikz/every even column/.append style={column sep=0.5cm}}]
			\addlegendimage{blue,mark=o,sharp plot}
			\addlegendimage{red,mark=x,sharp plot}
			\addlegendimage{black,mark=star,sharp plot}
			\addlegendimage{cyan,mark=triangle,sharp plot}
			\addlegendimage{brown,mark=square,sharp plot}
			\end{customlegend}
			\end{tikzpicture}}
	}\caption{Customer-Side Performances with MMS Guarantee.First row: mean average envy (\hyperref[metric_Y]{$Y$}). Second row: mean customer utility (\hyperref[sec:cust_metric]{$\mu_\phi$}). Third row: standard deviation of customer utilities (\hyperref[sec:cust_metric]{$std_\phi$}).}\label{fig:customer_side}
\end{figure*}
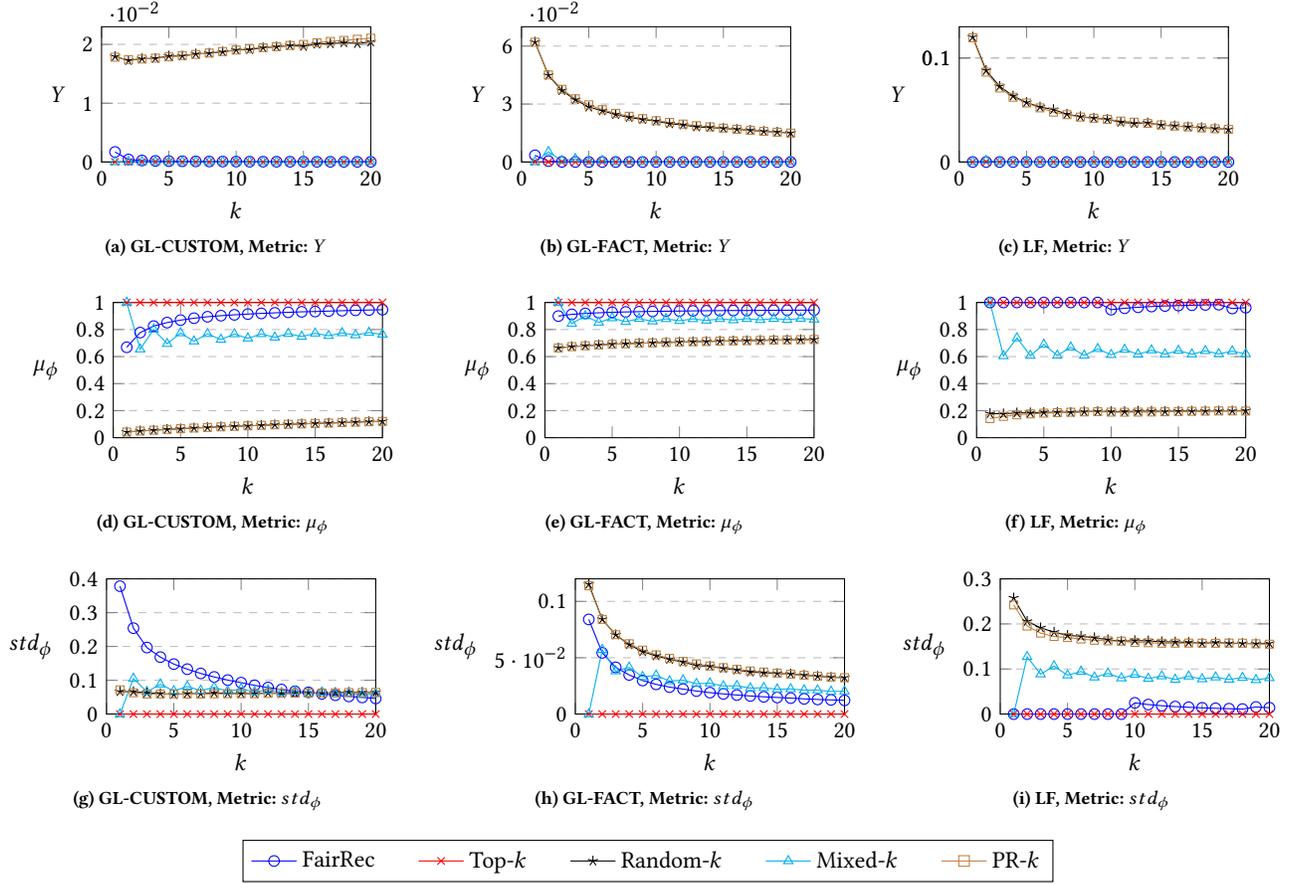
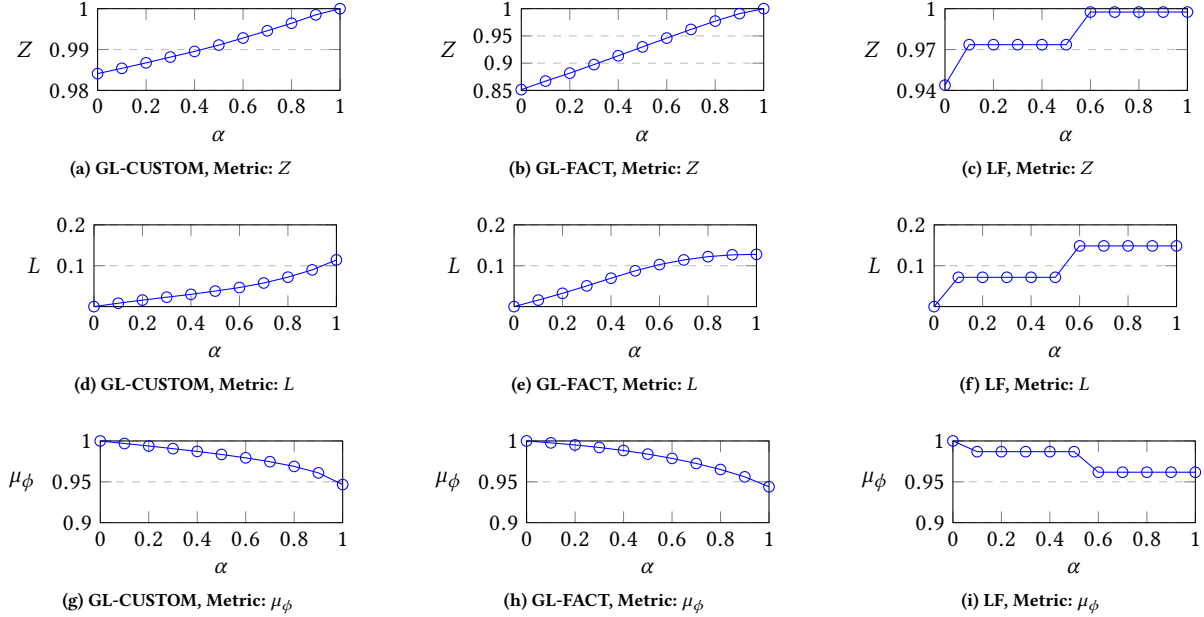
\begin{figure*}[t!]
	\center{
		\subfloat[{GL-CUSTOM, Metric: \hyperref[metric_Z]{$Z$}}]{\pgfplotsset{width=0.27\textwidth,height=0.15\textwidth,compat=1.9}
			\begin{tikzpicture}
			\begin{axis}[
			xlabel={${\alpha}$},
			ylabel={\hyperref[metric_Z]{$Z$}},
			xmin=0, xmax=1,
			ymin=0.98, ymax=1,
			xtick={0,0.2,0.4,0.6,0.8,1},
			ytick={0.98,0.99,1},
			%legend pos=north west,
			%legend style={at={(0.5,1.3)},anchor=south},
			ymajorgrids=true,
			grid style=dashed,
			ylabel style={rotate=-90}
			]
			%Proposed Algorithm
			\addplot[
			color=blue,
			mark=o,
			mark size=2pt
			]
			coordinates {(0,0.984106342444825)
				(0.1, 0.9853980754015105)
				(0.2, 0.9867420331086655)
				(0.3, 0.988154210874973)
				(0.4, 0.9895327983447276)
				(0.5, 0.9910921580190593)
				(0.6, 0.9928205788984502)
				(0.7, 0.9945906159004281)
				(0.8, 0.9964309446220851)
				(0.9, 0.9984910648650908)
				(1, 0.9999991439597733)
				
			};
			%\addlegendentry{}

			\end{axis}
			\end{tikzpicture}\label{fig:gl_1_alpha_Z}}
		\hfil
		\subfloat[{GL-FACT, Metric: \hyperref[metric_Z]{$Z$}}]{\pgfplotsset{width=0.27\textwidth,height=0.15\textwidth,compat=1.9}
			\begin{tikzpicture}
			\begin{axis}[
			xlabel={${\alpha}$},
			ylabel={\hyperref[metric_Z]{$Z$}},
			xmin=0, xmax=1,
			ymin=0.85, ymax=1,
			xtick={0,0.2,0.4,0.6,0.8,1},
			ytick={0.85,0.9,0.95,1},
			%legend pos=north west,
			%legend style={at={(0.5,1.3)},anchor=south},
			ymajorgrids=true,
			grid style=dashed,
			ylabel style={rotate=-90}
			]
			%Proposed Algorithm
			\addplot[
			color=blue,
			mark=o,
			mark size=2pt
			]
			coordinates {(0,0.8514541177377445)
				(0.1, 0.8670708016128668)
				(0.2, 0.8816758440831862)
				(0.3, 0.8973834064232564)
				(0.4, 0.9135487739052681)
				(0.5, 0.9298708159771711)
				(0.6, 0.946002520981026)
				(0.7, 0.9619162896090064)
				(0.8, 0.9772018655579906)
				(0.9, 0.990873158852095)
				(1, 0.9999953633843445)
				
			};
			%\addlegendentry{}

			\end{axis}
			\end{tikzpicture}\label{fig:gl_2_alpha_Z}}
		\hfil
		\subfloat[{LF, Metric: \hyperref[metric_Z]{$Z$}}]{\pgfplotsset{width=0.27\textwidth,height=0.15\textwidth,compat=1.9}
			\begin{tikzpicture}
			\begin{axis}[
			xlabel={${\alpha}$},
			ylabel={\hyperref[metric_Z]{$Z$}},
			xmin=0, xmax=1,
			ymin=0.94, ymax=1,
			xtick={0,0.2,0.4,0.6,0.8,1},
			ytick={0.94,0.97,1},
			%legend pos=north west,
			%legend style={at={(0.5,1.3)},anchor=south},
			ymajorgrids=true,
			grid style=dashed,
			ylabel style={rotate=-90}
			]
			%Proposed Algorithm
			\addplot[
			color=blue,
			mark=o,
			mark size=2pt
			]
			coordinates {(0,0.9438539852110053)
				(0.1, 0.9735613891721784)
				(0.2, 0.9735613891721784)
				(0.3, 0.9735613891721784)
				(0.4, 0.9735613891721784)
				(0.5, 0.9735613891721784)
				(0.6, 0.9975282401963197)
				(0.7, 0.9975282401963197)
				(0.8, 0.9975282401963197)
				(0.9, 0.9975282401963197)
				(1, 0.9975310687214431)
				
			};
			%\addlegendentry{}

			\end{axis}
			\end{tikzpicture}\label{fig:lf_alpha_Z}}
		\vfil
		\subfloat[{GL-CUSTOM, Metric: \hyperref[metric_L]{$L$}}]{\pgfplotsset{width=0.27\textwidth,height=0.15\textwidth,compat=1.9}
			\begin{tikzpicture}
			\begin{axis}[
			xlabel={${\alpha}$},
			ylabel={\hyperref[metric_L]{$L$}},
			xmin=0, xmax=1,
			ymin=0, ymax=0.2,
			xtick={0,0.2,0.4,0.6,0.8,1},
			ytick={0.1,0.2},
			%legend pos=north west,
			%legend style={at={(0.5,1.3)},anchor=south},
			ymajorgrids=true,
			grid style=dashed,
			ylabel style={rotate=-90},
			%ymode=log,
			%log basis y={10}
			]
			% alpha
			\addplot[
			color=blue,
			mark=o,
			mark size=2pt
			]
			coordinates {(0, 0)
				(0.1, 0.008271704655220072)
				(0.2, 0.015912717169677)
				(0.3, 0.022996430687310283)
				(0.4, 0.02994441402734677)
				(0.5, 0.03800347938805518)
				(0.6, 0.04662348783773038)
				(0.7, 0.057734241855776705)
				(0.8, 0.07195809124382144)
				(0.9, 0.08983590644152349)
				(1, 0.11420550431363932)

			};
			%\addlegendentry{}

			\end{axis}
			\end{tikzpicture}\label{fig:gl_1_alpha_L}}
		\hfil
		\subfloat[{GL-FACT, Metric: \hyperref[metric_L]{$L$}}]{\pgfplotsset{width=0.27\textwidth,height=0.15\textwidth,compat=1.9}
			\begin{tikzpicture}
			\begin{axis}[
			xlabel={${\alpha}$},
			ylabel={\hyperref[metric_L]{$L$}},
			xmin=0, xmax=1,
			ymin=0, ymax=0.2,
			xtick={0,0.2,0.4,0.6,0.8,1},
			ytick={0.1,0.2},
			%legend pos=north west,
			%legend style={at={(0.5,1.3)},anchor=south},
			ymajorgrids=true,
			grid style=dashed,
			ylabel style={rotate=-90},
			%ymode=log,
			%log basis y={10}
			]
			% alpha
			\addplot[
			color=blue,
			mark=o,
			mark size=2pt
			]
			coordinates {(0, 0)
				(0.1, 0.01592092699103491)
				(0.2, 0.03247747188548497)
				(0.3, 0.050639864451723526)
				(0.4, 0.06943283276909268)
				(0.5, 0.08760013721219506)
				(0.6, 0.10285231536136971)
				(0.7, 0.1141603658657398)
				(0.8, 0.12217723285984362)
				(0.9, 0.1266171694638939)
				(1, 0.12787155355483099)

			};
			%\addlegendentry{}

			\end{axis}
			\end{tikzpicture}\label{fig:gl_2_alpha_L}}
		\hfil
		\subfloat[{LF, Metric: \hyperref[metric_L]{$L$}}]{\pgfplotsset{width=0.27\textwidth,height=0.15\textwidth,compat=1.9}
			\begin{tikzpicture}
			\begin{axis}[
			xlabel={${\alpha}$},
			ylabel={\hyperref[metric_L]{$L$}},
			xmin=0, xmax=1,
			ymin=0, ymax=0.2,
			xtick={0,0.2,0.4,0.6,0.8,1},
			ytick={0.1,0.2},
			%legend pos=north west,
			%legend style={at={(0.5,1.3)},anchor=south},
			ymajorgrids=true,
			grid style=dashed,
			ylabel style={rotate=-90},
			%ymode=log,
			%log basis y={10}
			]
			% alpha
			\addplot[
			color=blue,
			mark=o,
			mark size=2pt
			]
			coordinates {(0, 0)
				(0.1, 0.07165146245805837)
				(0.2, 0.07165146245805837)
				(0.3, 0.07165146245805837)
				(0.4, 0.07165146245805837)
				(0.5, 0.07165146245805837)
				(0.6, 0.14848261833001544)
				(0.7, 0.14848261833001544)
				(0.8, 0.14848261833001544)
				(0.9, 0.14848261833001544)
				(1, 0.14851570211706927)		
			};
			%\addlegendentry{}

			\end{axis}
			\end{tikzpicture}\label{fig:lf_alpha_L}}
		\vfil
		\subfloat[{GL-CUSTOM, Metric: \hyperref[sec:cust_metric]{$\mu_{\phi}$}}]{\pgfplotsset{width=0.27\textwidth,height=0.15\textwidth,compat=1.9}
			\begin{tikzpicture}
			\begin{axis}[
			xlabel={${\alpha}$},
			ylabel={\hyperref[sec:cust_metric]{$\mu_{\phi}$}},
			xmin=0, xmax=1,
			ymin=0.9, ymax=1,
			xtick={0,0.2,0.4,0.6,0.8,1},
			ytick={0.9,0.95,1},
			%legend pos=north west,
			%legend style={at={(0.5,1.3)},anchor=south},
			ymajorgrids=true,
			grid style=dashed,
			ylabel style={rotate=-90},
			%ymode=log,
			%log basis y={10}
			]
			%Proposed Algorithm
			\addplot[
			color=blue,
			mark=o,
			mark size=2pt
			]
			coordinates {(0, 1)
				(0.1, 0.996871285403606)
				(0.2, 0.9938160401740173)
				(0.3, 0.9905835142445624)
				(0.4, 0.9872141916055803)
				(0.5, 0.9834836402522825)
				(0.6, 0.9793293369286161)
				(0.7, 0.9746136807880204)
				(0.8, 0.9688379921300452)
				(0.9, 0.9609441835903161)
				(1, 0.9466278029121776)			
				
			};

			\end{axis}
			\end{tikzpicture}\label{fig:gl_1_alpha_mu}}
		\hfil
		\subfloat[{GL-FACT, Metric: \hyperref[sec:cust_metric]{$\mu_{\phi}$}}]{\pgfplotsset{width=0.27\textwidth,height=0.15\textwidth,compat=1.9}
			\begin{tikzpicture}
			\begin{axis}[
			xlabel={${\alpha}$},
			ylabel={\hyperref[sec:cust_metric]{$\mu_{\phi}$}},
			xmin=0, xmax=1,
			ymin=0.9, ymax=1,
			xtick={0,0.2,0.4,0.6,0.8,1},
			ytick={0.9,0.95,1},
			%legend pos=north west,
			%legend style={at={(0.5,1.3)},anchor=south},
			ymajorgrids=true,
			grid style=dashed,
			ylabel style={rotate=-90},
			%ymode=log,
			%log basis y={10}
			]
			%Proposed Algorithm
			\addplot[
			color=blue,
			mark=o,
			mark size=2pt
			]
			coordinates {(0, 1)
				(0.1, 0.9976423289610249)
				(0.2, 0.9951127244398144)
				(0.3, 0.992029062972915)
				(0.4, 0.9883351422975385)
				(0.5, 0.98390391887269)
				(0.6, 0.9786611372211416)
				(0.7, 0.9724443569939579)
				(0.8, 0.9650758419710358)
				(0.9, 0.9560885282850516)
				(1, 0.9439498822796131)			
				
			};

			\end{axis}
			\end{tikzpicture}\label{fig:gl_2_alpha_mu}}
		\hfil
		\subfloat[{LF, Metric: \hyperref[sec:cust_metric]{$\mu_{\phi}$}}]{\pgfplotsset{width=0.27\textwidth,height=0.15\textwidth,compat=1.9}
			\begin{tikzpicture}
			\begin{axis}[
			xlabel={${\alpha}$},
			ylabel={\hyperref[sec:cust_metric]{$\mu_{\phi}$}},
			xmin=0, xmax=1,
			ymin=0.9, ymax=1,
			xtick={0,0.2,0.4,0.6,0.8,1},
			ytick={0.9,0.95,1},
			%legend pos=north west,
			%legend style={at={(0.5,1.3)},anchor=south},
			ymajorgrids=true,
			grid style=dashed,
			ylabel style={rotate=-90},
			%ymode=log,
			%log basis y={10}
			]
			%Proposed Algorithm
			\addplot[
			color=blue,
			mark=o,
			mark size=2pt
			]
			coordinates {(0, 1)
				(0.1, 0.986879107905887)
				(0.2, 0.986879107905887)
				(0.3, 0.986879107905887)
				(0.4, 0.986879107905887)
				(0.5, 0.986879107905887)
				(0.6, 0.9617520116271875)
				(0.7, 0.9617520116271875)
				(0.8, 0.9617520116271875)
				(0.9, 0.9617520116271875)				
				(1, 0.9617003481874509)			
				
			};

			\end{axis}
			\end{tikzpicture}\label{fig:lf_alpha_mu}}
	}\caption{Performances with $\overline{E}=\alpha$MMS guarantee for $k=20$. Plots show that higher exposure guarantee can achieve better producer fairness but can cause exposure loss for very popular producers and utility loss for the customers.}\label{fig:alpha_performances}
	\vspace{-2mm}
\end{figure*}
\section{experimental evaluation}\label{experiments}
%\subsection{Experimental Setup and Baselines}
{\bf Experimental Setup and Baselines:}
We run the proposed {\em FairRec} algorithm (\cref{algorithm}) for all the datasets (as listed in \cref{dataset}) considering different values of the recommendation-size $k$.
For comparison, we use the following methods as baselines:\\
{\bf (1) Top-$k$:} recommending the top-$k$ relevant products,\\
{\bf (2) Random-$k$:} randomly recommending $k$ products,\\
{\bf (3) Mixed-$k$:} choosing  top $\left\lceil\frac{k}{2}\right\rceil$ relevant products at first and then the remaining $\big(k-\left\lceil\frac{k}{2}\right\rceil\big)$ randomly.\\
{\bf (4) Poorest-$k$ (PR-$k$):} this is a producer-centric method where $k$ least exposed products are recommended to each customer in a round robin manner.		

We run two sets of experiments. First, we set the exposure guarantee $\overline{E}=$MMS (in \cref{mms_experiments}). Next, we set lower exposure guarantees i.e., by considering $\overline{E}=\alpha \cdot$MMS where $0 \leq \alpha \leq 1$ (in \cref{alpha_mms_experiments}).  For evaluating \textit{FairRec} and the baselines, we use the following \textit{producer-side} and \textit{customer-side} metrics.
		\subsubsection{\bf Producer-Side Metrics}
		The evaluation metrics for capturing the fairness and efficiency among the producers are:
		
		\noindent \textbf{\textit{Fraction of Satisfied Producers ($H$):}}
		%\gourab{higher = fairer}
		We call a producer satisfied iff its exposure is more than the minimum exposure guarantee $\overline{E}$. The fraction of \textit{satisfied} producers can be calculated as below.
		\begin{equation}\small
			H=\frac{1}{|P|}\sum_{p \in P}\mathbbm{1}_{E_p\geq \overline{E}}\label{metric_H}
		\end{equation}
		{$\mathbbm{1}_{E_p\geq \overline{E}}$ is $1$ if $E_p\geq \overline{E}$, otherwise $0$.} The value of $H$ ranges between $0$ and $1$. The higher the $H$, the fairer is the recommender system to producers.
		
		\noindent \textbf{\textit{Inequality in Producer Exposures ($Z$):}}
		We earlier observed in \cref{motivation} that conventional top-$k$ recommendation causes huge disparity in individual producer exposures.
		To capture how unequal the individual producer exposures are, we employ an entropy like measure as below.
		\begin{equation}\small
			Z=-\sum_{p\in P}\Big(\frac{E_p}{m\times k}\Big)\cdot\log_{n}\Big(\frac{E_p}{m\times k}\Big)\label{metric_Z}
		\end{equation}
		The range of $Z$ is $[0,1]$;
		when all the producers get equal exposure, then $Z$ becomes $1$.
		The lower the $Z$, the more unequal individual producer exposures are.
		
		\noindent \textbf{\textit{Exposure Loss on Producers ($L$):}} 
		As {\em FairRec} tries to ensure minimum exposure guarantee for all the producers,
		some producers may receive a lower exposure in comparison to what they would have got in top-$k$ recommendations. To capture this, we compute the loss $L$ as the mean amount of impact (loss in exposure) caused by {\em FairRec}, compared to the top-$k$ recommendations.
		\begin{equation}\small
			L^\text{<method>}=\frac{1}{n}\sum_{p \in P}\text{max}\Bigg(\frac{\big(E^\text{top-$k$}_p-E^\text{<method>}_p\big)}{E^\text{top-$k$}_p},0\Bigg)\label{metric_L}
		\end{equation}
		The lower the negative impact, the better is the recommendation algorithm.
		
		\subsubsection{\bf Customer-Side Metrics}\label{sec:cust_metric}
		The evaluation metrics for capturing the fairness and efficiency among the customers are:
		
		\noindent \textbf{\textit{Mean Average Envy ($Y$):}}
		Although {\em FairRec} ensures {\em EF1} guarantee for customers by design, here we capture how effectively this guarantee can reduce overall envy among customers in comparison to the baselines.
		We define the mean average envy as below.
		\begin{equation}\small
			Y=\dfrac{1}{n}\sum_{u\in U}\dfrac{1}{n-1}\sum_{\substack{u'\in U\\ u'\ne u}}\text{envy}(u,u')\label{metric_Y}
		\end{equation}
		where {\small $\text{envy}(u,u')=\text{max}\Big(\big(\phi_u(R_{u'})-\phi_u(R_u)\big),0\Big)$} denoting how much $u$ envies $u'$, which is the extra utility $u$ would have received if she had received the recommendation that had been given to $u'$ ($R_{u'}$) instead of her own allocated recommendation $R_u$. The lower the envy ($Y$), the fairer the recommender system is for the customers.
		
		\noindent \textbf{\textit{Loss and Disparity in Customer Utilities (using {\small $\mu_\phi$, $std_\phi$}):}}
		{\em FairRec} may not allocate the most relevant products to the customers,
		which may introduce a loss in customer utilities. This loss can be captured using the expression {\small $\mu_\phi=\frac{1}{m}\sum_{u\in U}\phi_{u}(R_{u})$}. 
		The lower is the utility loss, the more efficient is the recommender system for the customers. 
		We also calculate the standard deviation of customer utilities, that is, {\small $std_\phi = std_{u\in U} (\phi_u(R_u))$}. The lower the $std$, lesser is the disparity in individual customer utilities.
		\subsection{Experiments with MMS Guarantee}\label{mms_experiments}
		Here we test {\em FairRec} with exposure guarantee $\overline{E}=\left\lfloor\frac{mk}{n}\right\rfloor=$MMS (or $\alpha=1$), recommendation size $k$ in $1$ to $20$, and discuss the results. %on all the datasets. %(as listed in \cref{dataset}).
		\subsubsection{\bf Producer-Side Results}\label{producer_side}
		All producer side results are plotted in Figure-\ref{fig:producer_side_mms}.\\
		{\bf Producer Satisfaction (\hyperref[metric_H]{$H$}):} Figures \ref{fig:gl_1_H}, \ref{fig:gl_2_H}, and \ref{fig:lf_H} show that both {\em FairRec} and PR-$k$ perform the best while top-$k$, mixed-$k$ perform the worst; 
		this is because both {\em FairRec} and PR-$k$ try to ensure larger exposure for producers while top-$k$, mixed-$k$ consider only the preferences of the customers;\\
		{\bf Exposure Inequality (\hyperref[metric_Z]{$Z$}):} 
		Figures \ref{fig:gl_1_Z}, \ref{fig:gl_2_Z}, and \ref{fig:lf_Z} show that PR-$k$ has lowest inequality in exposure while {\em FairRec} and random-$k$ perform similar or slightly less than that;
		on the other hand top-$k$ and mixed-$k$ perform the worst as they are highly customer-centric.\\
		{\bf Exposure Loss (\hyperref[metric_L]{$L$}):}
		Figures \ref{fig:gl_1_L}, \ref{fig:gl_2_L}, and \ref{fig:lf_L} show that random-$k$ and PR-$k$ cause the highest amounts of exposure loss in comparision to top-$k$;
		this is because both of them favor equality in producer exposure (random-$k$ gives equal chance to all producers to be recommended while PR-$k$ tries to increase the exposure of least exposed producer).
		On the other hand, mixed-$k$ causes smaller losses as only up to half of its recommendations are random.
		{\em FairRec} causes only up to $0.2$ fraction or $20$\% loss in exposure in comparison to top-$k$ owing to the intelligent selection approach of {\em FairRec}.
		
		It is worth noticing that MMS for {\em LF} is low (MMS$=0$ for $k<10$, MMS$=1$ for $k\in[10,18]$, MMS$=2$ for $k\in[20,29]$,...).
		MMS is satisfied for all producers until $k=9$;
		but at k=10, MMS is not guaranteed for all producers, and thus, we see a drop in performance at $k=10$ which happens again at $k=19$. 
		Such changes in MMS specific to {\em LF} make its plots different from other datasets.
		In summary, both {\em FairRec}, PR-$k$ perform the best in producer fairness while they cause exposure loss for very popular producers to compensate for the exposure given to less popular producers.
		\subsubsection{\bf Customer-Side Results}\label{customer_side}
		All customer side results are plotted in Figure-\ref{fig:customer_side}.\\
		{\bf Mean Average Envy (\hyperref[metric_Y]{$Y$}):}
		Figures \ref{fig:gl_1_Y}, \ref{fig:gl_2_Y}, and \ref{fig:lf_Y} reveal that top-$k$ causes lowest possible mean average envy among the customers;
		this is because it gives maximum possible utility of $1$ to every customer thereby leaving no chances of envy among customers.
		Other customer-centric mixed-$k$ also shows similarly low envy.
		{\em FairRec} generates very low values of envy which are very comparable to those of top-$k$ here.
		On the other hand both random-$k$ and PR-$k$ cause the highest envy as they do not consider customer preferences at all during recommendation.\\
		{\bf Loss and Disparity in Customer Utility (\hyperref[sec:cust_metric]{$\mu_\phi$},\hyperref[sec:cust_metric]{$std_\phi$}):}
		From Figures \ref{fig:gl_1_mu}, \ref{fig:gl_2_mu}, and \ref{fig:lf_mu}, we see that both random-$k$ and PR-$k$ cause huge loss in customer utility as they neglect customer preferences.
		On the other hand, while mixed-$k$ performs moderately, {\em FairRec} causes very less utility loss and performs almost at par with the customer-centric top-$k$.
		This certifies that {\em FairRec} strikes a good balance between customer utility and producer fairness.  
		The standard deviation plots: Figures-\ref{fig:gl_1_sigma}, \ref{fig:gl_2_sigma}, and \ref{fig:lf_sigma} reveal that for larger sizes of recommendation, random-$k$ and PR-$k$ show large disparities in customer utilities while {\em FairRec} and mixed-$k$ show relatively less  disparities.
		As top-$k$ is customer-centric and provides the maximum utility of $1$ to all the customers, it shows standard deviation of $0$.
		\subsection{Experiments with $\alpha$-MMS Guarantee}\label{alpha_mms_experiments}
		Here we fix $k=20$, and test {\em FairRec} with different values of minimum exposure guarantee i.e., $\overline{E}=\left\lfloor\alpha\cdot \frac{mk}{n}\right\rfloor$ (where $0\leq\!\alpha\!\leq1$) by varying $\alpha$ in between $0$ and $1$ (or in other words varying $\overline{E}$ in between $0$ and MMS);
		although we do not see much change in \hyperref[metric_H]{$H$}, \hyperref[metric_T]{$Y$}, \hyperref[sec:cust_metric]{$std_\phi$}, we do find some interesting insights from the variations observed in other metrics (detailed next).
		
		\noindent \textbf{\textit{(i) Increasing minimum exposure guarantee results in lower inequality in producer exposures.}}
		We observe the direct correlation of  \hyperref[metric_Z]{$Z$}  with the $\alpha$ value  (refer Figures-\ref{fig:gl_1_alpha_Z},\ref{fig:gl_2_alpha_Z},\ref{fig:lf_alpha_Z}).
		High \hyperref[metric_Z]{$Z$} signifies lower inequality.
		
		\noindent \textbf{\textit{(ii) Increasing minimum exposure guarantee can cause higher exposure losses for previously popular producers.}}
		Increasing values of \hyperref[metric_L]{$L$} or mean exposure loss are observed (refer Figures-\ref{fig:gl_1_alpha_L}, \ref{fig:gl_2_alpha_L}, \ref{fig:lf_alpha_L}) for higher $\alpha$.
		
		\noindent \textbf{\textit{(iii) Higher exposure guarantee for producers can negatively impact overall customer utility.}}
		Figures-\ref{fig:gl_1_alpha_mu}, \ref{fig:gl_2_alpha_mu}, \ref{fig:lf_alpha_mu} show that mean customer utility decreases with higher $\alpha$ or exposure guarantee for the producers.
		
		In summary, although a larger exposure guarantee can help platforms achieve better producer fairness, it might hurt the overall customer satisfaction and also the satisfaction of highly popular producers of the platforms.
		Thus, the platforms, who are interested in similar minimum exposure guarantees, should not ignore the above trade-offs.\\
%CONCLUSION
\section{conclusion}\label{discussion}
In this work, we provide a scalable and easily adaptable algorithm that exhibits desired two-sided fairness properties while causing a marginal loss in the overall quality of recommendations.
We establish theoretical guarantees and provide empirical evidence through extensive evaluations of real-world datasets.
%As our proposed method is a post-processing approach, it is scalable and easily adaptable.
Our work can be directly applied to fair recommendation problems in scenarios like mass recommendation/promotion sent through emails, app/web notifications. Though our work considers the offline recommendation scenario where the recommendations are computed for all the registered customers at once, it can also be extended for online recommendation settings by limiting the set of customers to only the active customers at any particular instant. However, developing a more robust realization of the proposed mechanism for a completely online scenario remains future work. Going ahead, we also want to study attention models that can handle position bias~\cite{agarwal2019estimating}, where customers pay more attention to the top-ranked products than the lower-ranked ones.\\

\noindent {\bf Acknowledgements:}
G. K Patro is supported by TCS Research Fellowship.
A. Biswas gratefully acknowledges the support of a Google Ph.D. Fellowship Award.
This research was supported in part by a European Research Council (ERC) Advanced Grant for the project ``Foundations for Fair Social Computing", funded under the EU Horizon 2020 Framework Programme (grant agreement no. 789373).\\

\noindent {\bf Reproducibility:}
All the codes and datasets details are available at \url{https://github.com/gourabkumarpatro/FairRec_www_2020}.

%\clearpage

{
	
	\bibliographystyle{ACM-Reference-Format}
	\bibliography{main}
}

\end{document}